\definecolor{darkred}{RGB}{150,0,0}
\definecolor{darkgreen}{RGB}{0,150,0}
\definecolor{darkblue}{RGB}{0,0,200}
\newtheorem{theorem}{Theorem}%[section]
\newtheorem{assumption}{Assumption}
\newtheorem{lemma}{Lemma}
\newtheorem{corollary}{Corollary}
\newtheorem{proposition}{Proposition}
\newtheorem{definition}{Definition}
\newcommand{\beq}{\begin{equation}}
\newcommand{\ba}{\begin{align}}
\newcommand{\ea}{\end{align}}
\newcommand{\barL}{\bar{L}}
\newcommand{\eeq}{\end{equation}}
\newcommand{\vct}[1]{\bm{#1}}
\newcommand{\mtx}[1]{\bm{#1}}
\newcommand{\red}{\textcolor{darkred}}
\newcommand{\green}{\textcolor{darkgreen}}
\newcommand{\purple}{\textcolor{purple}}
\newcommand{\blue}{\textcolor{darkblue}}
\newcommand{\gray}[1]{\textcolor{gray}{#1}}
\newcommand{\shaw}[1]{{#1}}
\newcommand{\eps}{\varepsilon}
\newcommand{\st}{\star}
\newcommand{\h}{h}
\newcommand{\A}{{\mtx{A}}}
\newcommand{\Ib}{{{\mtx{I}}}}
\newcommand{\Nc}{{\cal{N}}}
\newcommand{\F}{{\mtx{F}}}
\newcommand{\Db}{{\mtx{D}}}
\newcommand{\dly}[1]{{\mtx{D}}_{#1}}
\newcommand{\Iden}{{\mtx{I}}}
\newcommand{\M}{{\mtx{M}}}
\newcommand{\nb}{{\vct{n}}}
\newcommand{\z}{{\vct{z}}}
\newcommand{\bt}{{\boldsymbol{\beta}}}
\newcommand{\Bc}{\mathcal{B}}
\newcommand{\Sc}{\mathcal{S}}
\newcommand{\Nn}{\mathcal{N}}
\newcommand{\vb}{\vct{v}}
\newcommand{\Fb}{\vct{F}}
\newcommand{\Ic}{{\mathcal{I}}}
\newcommand{\li}{\left<}
\newcommand{\ri}{\right>}
\newcommand{\s}{\vct{s}}
\newcommand{\ab}{\vct{a}}
\newcommand{\bb}{\vct{b}}
\newcommand{\ub}{{\vct{u}}}
\newcommand{\g}{{\vct{g}}}
\newcommand{\Z}{\mtx{Z}}
\newcommand{\Ks}{{\Kb^{\text{ss}}}}
\newcommand{\Kbl}{\vct{K}^{\text{loc}}}
\newcommand{\m}{\vct{m}}
\newcommand{\x}{\vct{x}}
\newcommand{\y}{\vct{y}}
\newcommand{\W}{\mtx{W}}
\newcommand{\Vc}{{\cal{V}}}
\newcommand{\X}{{\mtx{X}}}
\newcommand{\Vb}{{\mtx{V}}}
\newcommand{\Kb}{{\mtx{K}}}
\newcommand{\qb}{{\vct{q}}}
\newcommand{\norm}[1]{\texttt{norm}(#1)}
\newcommand{\R}{\mathbb{R}}
\newcommand{\Pro}{\mathbb{P}}
\newcommand{\E}{\operatorname{\mathbb{E}}}
\newcommand{\sft}[1]{\mathbb{S}(#1)}
\newcommand{\tn}[1]{\|{#1}\|_{\ell_2}}
\newcommand{\tone}[1]{\|{#1}\|_{\ell_1}}
\newcommand{\tin}[1]{\|{#1}\|_{\ell_\infty}}
\newcommand{\order}[1]{{\cal{O}}(#1)}
\newcommand{\Xb}{\mtx{\bar{X}}}
\newcommand{\xb}{\vct{\bar{x}}}
\title{Convolution Augments Attention:\\Solving Associative Recall with One Layer}
\title{Attention with Convolution is All You Need}
\title{All Attention Needs is Convolution}
\title{Convolution is All Attention Needs for\\Language Modeling}
\title{On the Power of Convolution Augmented Transformer}
 \author{Mingchen Li\quad\quad~~~Xuechen Zhang\\University of Michigan\\\texttt{\{milii,zxuechen\}@umich.edu} \And Yixiao Huang\\{UC Berkeley}\\\texttt{{yixiaoh@berkeley.edu}}\And
 \And Samet Oymak\\University of Michigan\\\texttt{oymak@umich.edu}}
\begin{document}
\maketitle
% \shaw{(Convolution-Augmented aTtention/Convolution-augmented ATtention?) Transformer in title, but attention in context, we will note that in intro.} layer
\begin{abstract} The transformer architecture has catalyzed revolutionary advances in language modeling. However, recent architectural recipes, such as state-space models, have bridged the performance gap. Motivated by this, we examine the benefits of Convolution-Augmented Transformer (CAT) for recall, copying, and length generalization tasks. CAT incorporates convolutional filters in the K/Q/V embeddings of an attention layer. Through CAT, we show that the locality of the convolution synergizes with the global view of the attention. Unlike comparable architectures, such as Mamba or transformer, CAT can provably solve the associative recall (AR) and copying tasks using a single layer while also enjoying guaranteed length generalization. We also establish computational tradeoffs between convolution and attention by characterizing how convolution can mitigate the need for full attention by summarizing the context window and creating salient summary tokens to attend. Evaluations on real datasets corroborate our findings and demonstrate that CAT and its variations indeed enhance the language modeling performance.
% and facilitates length generalization.
\end{abstract}

%We also demonstrate that, for AR tasks, convolution-augmented attention (CAT) is uniformly superior to alternative architectures such as Mamba or transformer. 
% In summary, our findings strongly advise the use of convolution (via SSM, CAT, etc) together with attention by demonstrating its benefits for expressivity, accuracy, and length generalization.

%that involve N-gram, length generalization, out-of-distribution evals
%and exhibit better N-gram, length generalization

%identify reliance on positional encoding and associated  find that convolution 

%In AR, the model is expected to retrieve relevant information based on the query token, for instance,

%the AR hit associated to a query and return the value associated to the key.

%an N-gram in the  the  We find that incorporating learnable short convolutions to K/Q/V embeddings can solve 

%demonstrated attention-free models can be just as competitive. 
%Inspired by this, we argue that 

%it is becoming more or less clear that, it is not the only architecture that can per

% \begin{wrapfigure}{R}{0.5\textwidth}
%     \begin{subfigure}{0.2\textwidth}
%         \begin{tikzpicture}
%             \node at (0, 0) [scale=0.2] {\includegraphics{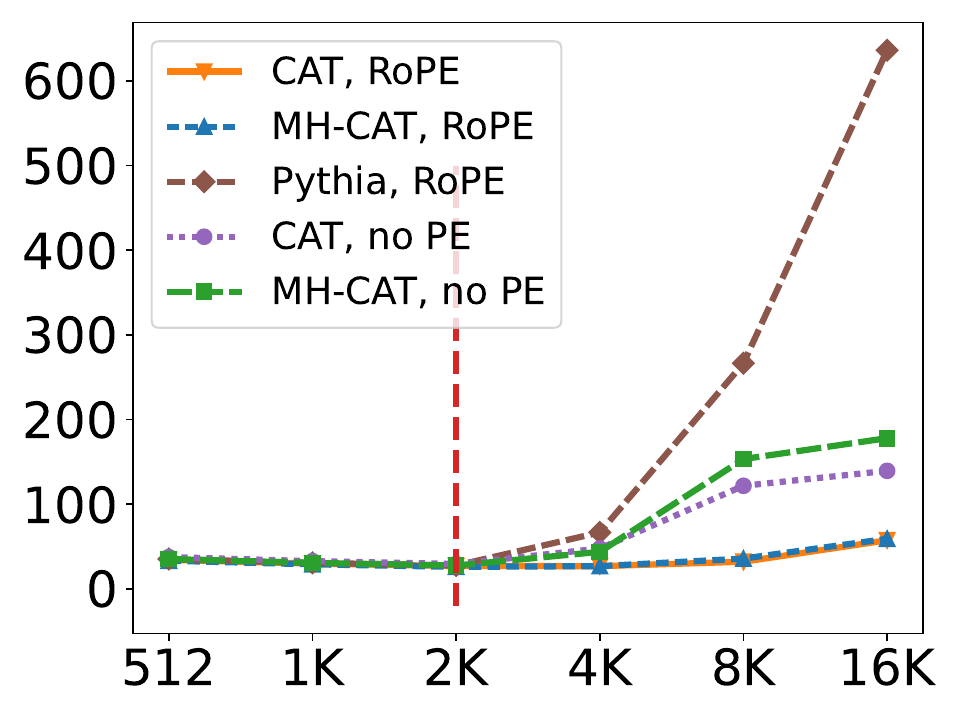}};
%         \end{tikzpicture}
%     \end{subfigure}
%     \begin{subfigure}{0.2\textwidth}
%         \begin{tikzpicture}
%             \node at (0, 0) [scale=0.5] {\includegraphics{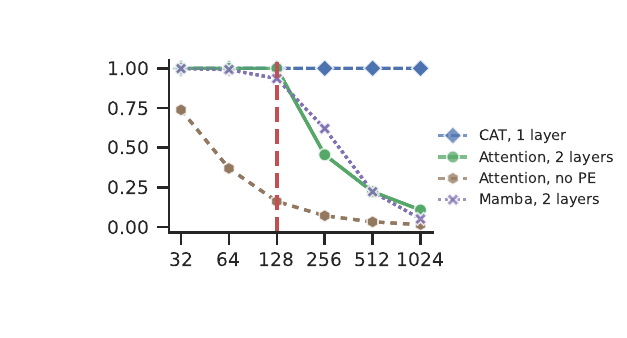}};
%         \end{tikzpicture}
%     \end{subfigure}
% \end{wrapfigure}

\begin{figure}[h]


    \begin{subfigure}{0.45\textwidth}
    \centering
    \vspace{-20pt}
    \hspace*{-20pt}
        \begin{tikzpicture}
            \node at (-1, 0.15) [scale=0.82] 
            {\includegraphics{figs/conv_1gram_lengen_intro.pdf}};
            \node at (-1.5, 2) [scale=0.5] {Train Length};
            % a down arrow
            \draw[->, line width=0.3mm] (-1.5, 1.9) -- (-1.5, 1.65);
            \node at (-1.3, -1.5) [scale=0.9] {Test Length};
            \node at (-4.1, 0.5) [scale=0.9, rotate=90] {Accuracy};

            \node at (6.2, 0.2) [scale=0.25] 
             {\includegraphics{figs/real_exp_intro.pdf}};
            
            \node at (6.1, 2) [scale=0.5] {Train Length};
            % a down arrow
            \draw[->, line width=0.3mm] (6.1, 1.9) -- (6.1, 1.65);
            \node at (6.4, -1.5) [scale=0.9] {Test Length};
            \node at (4, 0.5) [scale=0.9, rotate=90] {Perplexity};
        \end{tikzpicture}
    \end{subfigure}
    \begin{comment}
    \begin{subfigure}{0.45\textwidth}
    \centering
        \begin{tikzpicture}
            \node at (0, 0) [scale=0.8] {\includegraphics{figs/conv_1gram_lengen_intro.pdf}};
        \end{tikzpicture}
    \end{subfigure}
    \end{comment}
    \vspace{-15pt}
    \caption{\small{Evaluations on synthetic and real data. The models are trained on 128 and 2,048 context length (vertical dashed lines) and tested on varying context lengths respectively. \textbf{Left figure:} We conduct synthetic experiments on the Associative Recall task and contrast 1-layer CAT with 2-layers of alternative architectures. The embedding dimension is 128. We find that CAT is the only model that solves AR with length generalization in line with our theory (also see Fig.~\ref{fig:lengen}). \textbf{Right figure:} Evaluations on language modeling where we train CAT models by equipping Pythia with short convolutions (window size 21). Convolution allows the model to pretrain without positional encoding and further improves perplexity when combined with RoPE. Importantly, it also generalizes to longer context lengths more robustly with or without RoPE. For length generalization, we used YaRN~\cite{peng2023yarn} which incorporates position interpolation~\cite{chen2023extending} (for RoPE only) and temperature scaling (see Sec.~\ref{sec:nlp_exp}).}}
    \label{fig:lengen_intro}
    %At the training context length of 2K, CAT also improves perplexity 
    %Remarkably, with convolution, RoPE also enjoys the same better length generalization. 
    \vspace{-10pt}
\end{figure}
\section{Introduction}\vspace{-2pt}

%Large language models (LLMs)~\cite{vaswani2017attention} have marked a significant milestone in the field of natural language processing (NLP). These models, having been trained on extensive language corpora, are adept at decoding and structuring language—exhibiting capabilities that bolster a wide range of NLP tasks. However, despite these strides, LLMs are still constrained by their representation abilities and efficiency. 

The attention mechanism is the central component of the transformer architecture \cite{vaswani2017attention} which empowers modern large language models. Through the self-attention layer, all pairs of tokens get to interact with each other which equips the model with a global view of the context window. On the other hand, without positional-encoding (PE), self-attention lacks \emph{locality}. For instance, without PE or causal masking, self-attention layer is permutation-equivariant and does not distinguish between nearby vs distant tokens. In contrast, convolution operator is a well-established primitive that facilitates local feature aggregation based on relative positions and provides a natural alternative to PE. While convolution has enjoyed major success in vision during the last three decades, its explicit use in language modeling is relatively recent \cite{dauphin2017language}. On the other hand, there is a growing recent interest in using convolution-based blocks in language models: For instance, state-space models (SSM) \cite{gu2021efficiently} and linear RNNs \cite{orvieto2023resurrecting} are efficient parameterizations of long convolutional filters. These models have enjoyed significant success in long-range sequence modeling as they provide fast inference and parallelizable training. On the other hand, purely convolutional architectures are known to suffer from recall capability as they lack the global view of the context window \cite{arora2024simple}. These insights motivated a recent push toward hybrid architectures \cite{de2024griffin,arora2024simple,park2024can,zoology2023} that combine the strengths of both attention and convolution-like approaches, including short convolutional filters, SSMs, linear RNNs, or Mamba.

%While positional encoding methods are the standard way of providing locality, they can be brittle when the model is asked to generalize to longer context windows.

\begin{figure}[t]
    \centering
    \begin{tikzpicture}
        \node at (-3.65,0) [scale=0.265] {\includegraphics[trim=0 250 20 0, clip]{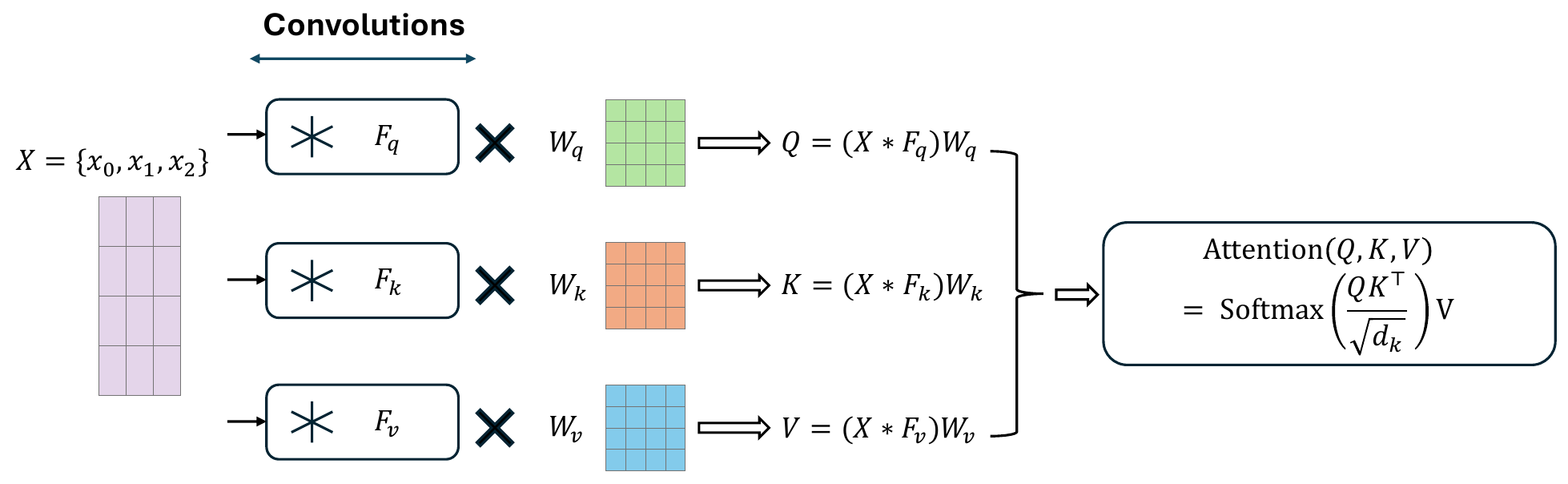}};
        \node at (2.85, -0.15) [scale=0.5] {\includegraphics[trim = 10 0 410 0, clip]{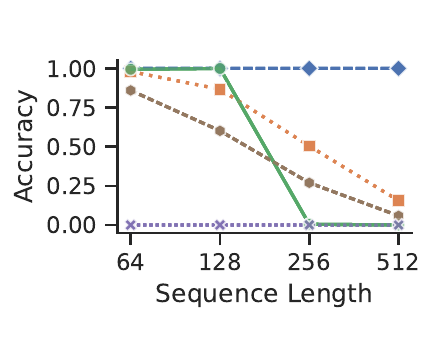}};
        \node at (5.15, 0.15) [scale=0.55] {\includegraphics[trim = 531 57 5 57, clip]{figs/conv_1gram.pdf}};
        \node at (3.2, 1.05) {\scriptsize{Associative Recall Comparison}};
        % \draw[red, dashed] (2.9,-0.75) -- (2.9,0.75);
    \end{tikzpicture}\caption{\small{\textbf{Left figure}: Illustration of the Convolution-Augmentated Attention (CAT) block, where separate filters are applied to the K/Q/V embeddings, before self-attention (see Sec.~\ref{sec:methodology} for details). \textbf{Right figure}: Performance of 1-layer CAT models trained on multi-query AR (MQAR, see Sec.~\ref{sec:MQAR} for details) tasks with model embedding dimension 64 and varying sequence length. The LinCAT replaces the standard attention in CAT with linear attention. We observe that the CAT model outperforms the baseline models across all sequence lengths with only 1 layer compared to 2 layers baselines.}
    }\label{fig:intro}\vspace{-10pt}
\end{figure}

%Our objective in this work is developing 

%In this work, we investigate the benefits of convolution as a well-established primitive to provide locality within transformer.

%refer to this layer as \emph{CAT}.   however, our focus here is primarily on enhancing the attention mechanism without modifying the MLP layer. It is important to note an intentional broad use of the term CAT, which refers to both the Convolution-Augmented Attention within the layer and the overall model designated as Convolution-Augmented Transformer.

In this work, we explore the synergy between attention and convolution which reveals new theoretical principles that inform hybrid architecture design. Specifically, we introduce an intuitive hybrid architecture called \emph{Convolution-Augmented Transformer} (CAT)\footnote{The transformer architecture consists of attention and MLP layers. For theoretical analysis and synthetic experiments, we will entirely focus on the \emph{Convolution Augmented Attention} layer described in Fig.~\ref{fig:intro}. For this reason, we will use the CAT acronym to refer to both Convolution-Augmented Transformer and Attention.}. CAT incorporates convolutional filters to the K/Q/V embeddings of the attention layer as depicted on the left hand side of Figure \ref{fig:intro}. We explore the capabilities of the CAT layer through mechanistic tasks including associative recall (AR), selective copying \cite{gu2023mamba,jing2019gated}, and length generalization. For instance, AR is a fundamental task motivated from the associative memory in cognitive science \cite{ba2016using}. This task underpins critical applications such as bigram retrieval, where a specific sequence, such as \blue{`Rings'} following \purple{`The Lord of the'}, must be correctly retrieved. It is also a generalization of the induction head task \cite{olsson2022context} and known to be crucial for LLM functionality and mechanistic understanding \cite{olsson2022context,fu2022hungry,arora2024simple,nichani2024transformers,poli2024mechanistic}.

%, enhancing the model’s capability to handle in-context learning, as described by \cite{poli2024mechanistic}. For example, if \texttt{New York City'} previously appeared, and the subsequent tokens are \green{\texttt{New York'}}, the model should accurately respond with \red{\texttt{`City'}}.
%we explore the fundamental synergy between attention and convolution from a theoretical perspective. W

%Our primary goal in this work is to explore solutions to the NAR problem by enhancing the attention mechanism within LLMs. Specifically, we investigate how a novel convolution-augmented attention layer can be incorporated to efficiently address AR and NAR tasks within a streamlined, single-layer architecture. In summary, our contributions are
% The n-gram AR (NAR) task is the generalization of AR where we wish to locate where the current n-gram occurs earlier in the context window and return the associated value similar to fuzzy in-conext recall in \cite{poli2024mechanistic}. For instance, if the phrase \texttt{`New York City'} appears earlier, following the last two tokens \green{\texttt{`New York'}}, the model should respond \red{\texttt{`City'}}. We ask \emph{``How and to what extend we can solve the NAR problem with a self-contained block? Specifically, how to augment the attention layer to accomplish this''?}

We theoretically and empirically show that, within the CAT layer, attention and convolution exhibit strong synergy and complementarity to solve these mechanistic tasks while enjoying length generalization benefits. As a concrete example, the left side of Figure \ref{fig:lengen_intro} displays the AR performance for various test-time sequence lengths. As the sequence length grows, we observe two distinct failure modes: Mamba's accuracy degrades due to its finite state dimension whereas attention-only models degrade due to the length extension bottlenecks of PE. In contrast, CAT maintains perfect accuracy and length generalization because attention and convolution patch these failure modes in a complementary fashion. Overall, we make the following contributions:\vspace{-1pt}
\begin{itemize}[leftmargin=*]
    \item We propose the convolution-augmented attention layer and prove that it can solve the N-gram AR (NAR) and Selective Copying tasks using a single layer (Theorems \ref{thm NAR main} and \ref{sel copy thm}). Comparison to alternatives (Mamba, Based, attention, linear attention) reveals that CAT can uniquely solve NAR with length generalization.\vspace{-1pt}

%\footnote{Note that 1-layer CAT can also be viewed as a 2-layer architecture composed of convolution and attention.}

\item To explain this, we establish a length generalization result on the loss landscape (Theorem \ref{prop len gen new}): Under mild assumptions, all CAT models that solve AR for a particular context length provably generalize to all other context lengths.\vspace{-1pt} %This result also provides a formal justification of how convolution-like architectures can function well without positional encoding.

%that demonstrates proficiency in handling N-AR and copying tasks efficiently with a minimalistic, single-layer network architecture.
%\item %We prove that a single CAT layer solves N-AR under broad conditions and provide a construction that solves selective copying. We also establish a length generalization result on the loss landscape: Under suitable setting, all CAT models that solve AR for a particular context length provably generalize to other context lengths.

\item We evaluate CAT on real data and demonstrate that even 1-dimensional short convolutions noticeably aids language modeling: In line with theory, convolution enables the model to train stably without PE and improves length generalization. We also develop a multihead version of CAT which yields further accuracy improvements (see Table \ref{tab:real_nlp}).\vspace{-1pt}
%\item We investigate the necessity of positional encoding in the CAT model and show that short convolutions can mitigate the need for positional encoding.
\item We show that long convolutions, such as SSMs, bring the benefit of \emph{context summarization} and mitigates the need for dense attention: We describe the Landmark CAT model (following Landmark Attention \cite{mohtashami2023landmark}) which first creates landmark/summary tokens through convolution and then attends on these landmarks to efficiently locate the most relevant subsets of the input sequence (Sec.~\ref{sec tradeoff}). Focusing on the AR problem, we characterize fundamental tradeoffs between the embedding dimension, amount of summarization, and the sparsity of attention. Through these, we show that the use of long convolutions can provably enable the success of sparse/cheaper attention.
\end{itemize}

\begin{comment}
In a standard AR task, given prompt $X=[\blue{a}~\red{1}~\blue{b}~\red{2}~\blue{c}~\red{3}~\green{b}]$, the model should identify the previous occurrence of the last token $\green{b}$ and return the associated value $Y=\red{2}$. This task underpins critical applications such as bigram retrieval, where a specific sequence, such as \red{Baggins'} following \green{Bilbo'}, must be correctly retrieved. 
Extending to N-gram AR (NAR), the model needs to identify and return the values associated with previously occurring N-grams within a context window e.g.~completing \green{\texttt{New York'}} with \red{\texttt{`City'}} when $N=2$.
\end{comment}

% \vspace{-15pt}
\vspace{-6pt}
\section{Related Works}\label{sec:related}
\vspace{-5pt}
% \noindent\textbf{Convolutional sequence modeling and SSMs.} \url{https://arxiv.org/pdf/2210.09298}

% \noindent\textbf{Hybrid architectures.} MEGALODON \url{https://arxiv.org/pdf/2404.08801}, MEGA \url{https://arxiv.org/pdf/2209.10655}, GLA \url{https://arxiv.org/pdf/2312.06635}

\noindent\textbf{Convolution-like sequence models.} Gated-convolutions \cite{dauphin2017language} and state-space models, such as S4~\cite{gu2021efficiently}, utilize long convolutions to reduce the computational demands associated with attention mechanisms. Performance enhancements have also been achieved through novel filter parametrization techniques~\cite{gupta2022diagonal, gu2022parameterization}. Despite these innovations, challenges in Multi-query Associative Recall (MQAR) prompted the development of input-dependent convolution techniques. Notable developments in this area include, Liquid S4~\cite{hasani2022liquid}, Mamba~\cite{gu2023mamba,dao2024transformers} and \cite{yang2019condconv,kosma2023time} where convolution filters are directly parametrized by inputs and include correlation terms between input tokens to enhance state mixing. \cite{li2022makes} empirically explores the reason underlying the success of convolutional models. \vspace{-1pt}

\noindent\textbf{Expressivity, recall, length generalization.} Recent works \cite{arora2024simple,jelassi2024repeat,zoology2023,fu2022hungry} explore the limitations of purely convolutional models, including Mamba, and demonstrate that, these models inherently lack the capability to solve recall problems unless they have large state dimensions (i.e.~memory). \cite{jelassi2024repeat} also provides a construction for 2-layer self-attention to solve AR with length generalization. Interestingly, this construction uses Hard Alibi, which is a variation of Alibi PE \cite{press2021train} that utilize explicit linear biases in attention. Their Hard Alibi restricts the attention layer to focus on and aggregate only the recent $N$ tokens. In this regard, this construction is related to our short convolution. On the other hand, while this work is constructive, we also prove that CAT has good loss landscape and all CAT solutions to AR provably length generalize. It has also been observed that PE can hurt length generalization and reasoning. In fact, \cite{kazemnejad2024impact} has found NoPE to be viable. On the other hand, in our real data evaluations, we have found pure NoPE to be highly brittle as it either fails to converge or optimization is unreasonably slow. Our AR experiments also corroborate that NoPE by itself is indeed not a viable strategy. \vspace{-1pt}

%For instance, \cite{olsson2022context} introduced the AR task to evaluate the model's ability for in-conext recall. \cite{zoology2023} proposes multi-query AR tasks to evaluate the model's ability to recall multiple queries. \cite{poli2024mechanistic} proposes a family of synthetic tasks for probing the model's ability. 

%, however, our work further illustrate the power of convolutional model both theoretically and empirically via multiple tasks.

\noindent{\textbf{Hybrid architectures.}} There is a growing interest in integrating different language modeling primitives to obtain best-of-all-world designs. To this end, mechanistic tasks such as AR, copying, induction head, and in-context learning have been important to demystify the functionalities of language models \cite{olsson2022context,park2024can} and have been utilized to guide architecture design \cite{zoology2023,poli2024mechanistic}. Gating mechanisms have been integrated within convolutional frameworks to enhance the model's selectivity. Models employing gating functions, have shown substantial improvements in AR tasks~\cite{fu2022hungry, poli2023hyena}. Additionally, recent innovations on hybrid architecture, such as BaseConv~\cite{zoology2023, arora2024simple}, GLA~\cite{yang2023gated}, MambaFormer \cite{park2024can}, and \cite{ma2024megalodon,ma2022mega,ren2024samba} have provided more effective solutions to AR tasks. This comprehensive foundation of hybrid architectures informs our exploration into the convolution-attention synergy.

\vspace{-4pt}
\section{Problem Setup}\vspace{-4pt}
% \redp{@Mingchen: Please add a table of problems: Problem types: AR, NAR, Selective Copy, Problem instances: All vs Single (we can introduce subset layer)}

\subsection{Convolutional-Augmented Attention}\label{sec:methodology}\vspace{-2pt}

%\redp{Index starts from zero throughout} 
Let us first introduce helpful notation. $\Iden_d$ is the identity matrix of size $d$. $\dly{i}$ denotes the causal delay filter that shifts a signal $\x$ $i$-timesteps forward i.e.~$(\x\ast \dly{i})_j=\x_{j-i}$. For an integer $n \geq 1$, we denote the set $\{0, \dots, n-1 \}$ by $[n]$. We use lower-case and upper-case bold letters (e.g., $\m, \M$) to represent vectors and matrices, respectively. $m_i$ denotes the $i$-th entry of a vector $\m$. 
% Let $\sft{\m}:= \frac{e^{\m}}{\sum_{l \in [L]}e^{m_l}}$ denote the softmax function. 

Below, we introduce the Convolution-Augmented Attention layer, which incorporates learnable filters into the K/Q/V embeddings. Let $\X=[\x_0~\dots~\x_{L-1}]^\top \in\R^{L\times d}$ denote the input to the layer containing $L$ tokens with embedding dimension $d$. Let $\Fb\in\R^W$ denote the convolutional filter with temporal length $W$. We examine two convolution types which handle multi-head attention in different ways:
%. We consider the following convolution types:

%The token embeddings $\x_i$ are obtained by embedding tokens $x_i$ from a discrete vocabulary $\Vc$. 

\noindent $\bullet$ \textbf{1D per-head convolution:} For each attention head, we have a distinct 1D filter $\Fb\in\R^W$. $\Fb$ is applied temporally to each of the $d$ embedding dimensions. This results in $\Fb\ast \X$ where $(\Fb \ast \X)_i = \sum_{j \in [W]} F_j \x_{i - j}$, with $F_j$ being the $j$-th entry of $\Fb$.
%\item \textbf{Time-Feature convolution:} The filter is $\Fb\in\R^{W\times d}$ and we return $\Fb\ast \X$ where the $i$'th features of $\Fb$ and $\X$ are convolved.

\noindent $\bullet$ \textbf{Multi-head convolution:} Suppose we have $H$ sequences $\bar{\X}=[\X_1,\dots,\X_H]\in\R^{L\times d\times H}$ each corresponding to one of the $H$ attention heads. We use a filter $\bar{\Fb}=[\Fb_1,\dots,\Fb_H]\in\R^{W\times H\times H}$. Each $\Fb_i$ is convolved with $\bar{\X}$ to obtain the $i$-th head's output of size $L\times d$.
%t\item \textbf{Gated convolution:} Following the BaseConv of \cite{}, we return $(\X\Gb+b_1)\odot(\X\ast\Fb+b_2)$ where $b_i$ are scalar bias weights. Gating can be applied to any of the above convolutions. An important remark is that, within the attention layer, gating matrix $\Gb$ will directly correspond to the standard attention weights (see Def \ref{cat layer} below).
% Specifically, , thus use of gating will not have additional parameter cost compared to .% (to accomplish K/Q/V embeddings).
%will essentially be accounted by the standard attention weights.

Observe that both convolution types are identical when there is a single attention head. However, multi-head convolution is more expressive because it mixes the attention heads. In Section \ref{sec:experiments}, we will also examine a variation of multi-head convolution where we mix the attention maps rather than embeddings. The architecture of CAT is illustrated in Fig.~\ref{fig:intro} and is formally defined as follows:

\begin{definition}[Convolution-Augmented Attention (CAT)]\label{cat layer} A CAT layer incorporates learnable convolutional filters to the key/query/value embeddings. For a single-head CAT, the key embeddings are given by $\Kb=(\X\ast\Fb_k)\W_k$ with weights $\Fb_k,\W_k$ (same for query and value embeddings). %A Gated CAT layer uses gating to obtain key embeddings $(\X\ast\Fb_k+b_{k1})\odot (\X\W_k+b_{k2})$.
\end{definition}

\begin{table}[b!]\vspace{-8pt}\caption{\small{Illustrative examples of synthetic tasks. In all AR-based tasks, keys and queries are highlighted in \red{red} and the values in \green{green}. For NAR tasks, parentheses denote N-gram queries; note that the parentheses are not part of the input. In SC tasks, signal tokens are in \green{green} and noise tokens in \gray{gray}, and the model begins output when $\bot$ appears in the sequence.}
}\label{tab:syn_tasks}
\begin{center}
\begin{tabular}{c|c|c|c|c}
    \toprule
    & & Input & Query & Output\\
    \midrule
    \multirow{3}{*}{Single Query} & AR & \red{a} \green{2} \red{c} \green{1}& \red{a} & \green{2} \\
    & NAR  &  \red{(a b)}  \green{2}  \red{(b a)}  \green{q} \red{(a a)} \green{4} &  \red{b a}  & \green{q} \\
    & SC & \green{a} \gray{[n] [n]} \green{c} \gray{[n]} \green{k} & $\bot$  & a c k\\
    \midrule
    \multirow{3}{*}{Multi Query} & AR & \red{a} \green{2} \red{c} \green{1}& \red{c a} & \green{1 2} \\
    & NAR  &  \red{(a b)}  \green{2}  \red{(b a)}  \green{q} \red{(a a)} \green{4} &  \red{(b a) (a a)}  & \green{q 4} \\
    \bottomrule
\end{tabular}

\end{center}\vspace{-10pt}
\end{table}

% \vspace{-5pt}
\subsection{Mechanistic Tasks for Language Modeling}
% \vspace{-5pt}
To proceed, we describe the Associative Recall and Selective Copying tasks that will help us mechanistically study CAT. Table \ref{tab:syn_tasks} provides an illustration of these tasks which are adapted from the sequence modeling literature \cite{gu2023mamba,zoology2023,poli2024mechanistic,olsson2022context}. 
\begin{definition}[Associative Recall Problem] \label{AR problem def}
Consider a discrete input sequence $X = [x_0, x_1, \dots, x_{L-1}]$, with tokens drawn from a vocabulary $\Vc$ of size $|\Vc|$. The AR problem is defined as follows: Suppose that there is a unique index $i$ ($0 \leq i < L-1$) such that $x_i = x_{L-1}$. A model $f$ successfully solves the AR problem if $f(X) = x_{i+1}$ for all inputs $X$. In this problem, $x_i$ becomes the key, $x_{i+1}$ is the associated value, and the last token $x_{L-1}$ is the query.\label{ARprob}
\end{definition}
    
Building on the AR problem, we introduce its N-gram variation: The model needs to identify the copy of the last $N$ tokens in the context window and return the associated value.

% \begin{definition}[N-gram AR Problem]\label{NAR prob def} Given a vocabulary $\Vc$, let $\Zc\subset \Vc^N$ be a set of N-gram. Consider prompts $X$ with the following property: For some $Z\in\Zc$, $Z$ occurs within $X$ exactly twice and the second occurrence is the last $N$ tokens of $X$. Suppose token $v\in\Vc$ follows the first occurrence of $Z$. We say that $f$ solves N-gram AR over $\Zc$ if for all feasible prompts $X$, $f(X)= \shaw{v}$.
% \end{definition}

\begin{definition}[N-gram AR Problem] 
Consider a discrete input sequence $X = [x_0, x_1, \dots, x_{L-1}]$, with tokens drawn from a vocabulary $\Vc$ of size $\Vc|$. Let $X_{\{i, j\}} = [x_i, x_{i+1}, \dots, x_j]$ denote the subsequence of $X$ from index $i$ to $j$. The N-gram associative recall (NAR) problem is formulated as follows: for $X_{\{L-N,L-1\}}$ (which are the last N tokens), there exists a unique index $i$ ($0 \leq i < L-N$) such that $X_{\{i, i+N-1\}} = X_{\{L-N,L-1\}}$. A model $f$ solves NAR if $f(X) = x_{i+N}$ for all inputs $X$.\label{NAR prob def}
\end{definition}

Selective copying (SC) task is originally introduced by \cite{jing2019gated} and it is utilized by the recent Mamba \cite{gu2023mamba} and Griffin \cite{de2024griffin} papers to assess their model's approximation capabilities. In SC, given an input sequence $X$ containing noisy tokens, the model should denoise $X$ and return the \emph{signal tokens} within.
\begin{definition}[Selective Copying]\label{sel copy def} Consider a vocabulary $\Vc$ composed of a set of signal tokens $\Sc$, a set of noise tokens $\Nc$, and special token $\bot$ i.e.~$\Vc=\Sc\cup\Nc\cup\{\bot\}$. Let $X$ be a sequence whose tokens are drawn from $\Sc\cup\Nc$ and let $X_{\Sc}$ be the sub-sequence of $X$ that includes all signal tokens in order. $f$ solves \emph{selective copying} over $\Sc$ if it autoregressively outputs $X_{\Sc}$ following the prompt $[X~\bot]$ for all inputs $X$. {$f$ solves \emph{unique selective copying} if it outputs all unique tokens of $X_{\Sc}$ in order for all $X$.}% $|X_{\Sc}|\leq N$.
\end{definition}

\subsubsection{Multi-Query Associative Recall}\label{sec:MQAR}
%\redp{Optional: Not sure if we need a separate section for this or introduce under  }
In this section, we introduce the multi-query versions of the AR and NAR tasks, abbreviated as MQAR and MQNAR, respectively. In the multi-query (MQ) setting, a model receives multiple queries in a single input and must generate corresponding outputs in a single forward pass, at varying positions in the sequence. This approach was first introduced in \cite{zoology2023}, which demonstrated that while the Mamba model successfully addresses single-query AR tasks, it struggles with MQAR when operating with a limited model dimension. This highlights the increased complexity of MQAR tasks where models need to memorize more sequence information and recall queries at different positions. 

% \redp{Following the questions above: What increases the complexity? Is it because we have a larger query set to ask? Or because model has to output a special token saying no match? Because even if we use standard single-query AR, if I get to ask about any query of my choice, it should be as difficult as MQAR.}

% \begin{comment}
% \begin{definition}[Multi-Query Associative Recall Problem~\cite{zoology2023} (MQAR)]
%     Consider a discrete input sequence $X = [x_0, x_1, \dots, x_{L-1}]$, where each $x_i \in \Vc$ represents a token from a vocabulary $\Vc$ with size $v = |\Vc|$. The multi-query associative recall (MQAR) problem is defined as follows: for every query $1\leq i <L$, whether there exists a $0\leq j < i$ such that $x_j = x_i$. If so, output the value $x_{j+1}$ as the result, else output a special token to indicate no match is found. We call model $f$ solves MQAR if $f(X) = [x_{j_1+1}, x_{j_2+1}, \dots, x_{j_k+1}]$ which is a length L sequence corresponds to the queries.
% \end{definition}
% \end{comment}
% \redp{MQAR and MQNAR defs were a bit repetitive. I merged them under MQNAR def. Feel free to revert back.}

\begin{definition}[Multi-Query Associative Recall (MQAR)]
    Consider a discrete input sequence $X = [x_0, x_1, \dots, x_{L-1}]$ with tokens drawn from a vocabulary $\Vc$. Let $X_{\{i,j\}} = [x_i, \dots, x_j]$ denote a subsequence of $X$ from index $i$ to $j$. The \textbf{multi-query N-gram associative recall (MQNAR)} problem is defined as follows: for every N-gram query $Q_k = X_{{k-N+1} \dots {k}}$, $N\leq k < L$, determine if there exists a $N\leq j < k$ such that $X_{\{j-N+1, j\}} = Q_k$. If so, output the value $x_{j+1}$ as the result, else output a special token to indicate no match is found. A model $f$ solves MQNAR if it outputs the correct values for all $N$-gram queries and all inputs $X$. %$f(X) = [x_{j_1+1}, x_{j_2+1}, \dots, x_{j_k+1}]$ corresponding to the queries. 
    The standard \textbf{MQAR} problem \cite{zoology2023} is a special instance of MQNAR by setting $N=1$.
\end{definition}

Table~\ref{tab:syn_tasks} provides examples of the synthetic tasks we consider in this work. Specifically, we conduct AR and NAR experiment on their multi-queiry variants to evaluate the model's ability to recall multiple queries. For the selective copying task, we generate the output auto-regressively by predicting the signal tokens in the input sequence after the special token $\bot$.

% \vspace{-5pt}
\section{Provable Benefits of Convolution-Augmented Attention}
% \vspace{-5pt}
%\redp{revise and clean up here} 

Before diving into the theoretical results, we make a few clarifying remarks. We assume that all token embeddings have unit $\ell_2$ norm. Secondly, a CAT layer maps each query to a vector-valued output $f(\X)\in\R^d$. To sample the discrete output token, we will simply return the nearest neighbor in the vocabulary of token embeddings. For associative recall problems, we will use a single head attention layer with weights $\W_q,\W_k$ are chosen as suitably scaled identity matrices. With this choice, attention essentially implements a \emph{nearest neighbor retrieval}. It suffices for the theory thanks to the simple nature of the AR problem where we wish to identify the replica of a query within the context window. In general, we can easily contrive natural generalizations of AR and Selective Copy problems that necessitate a more sophisticated attention mechanism (see \cite{poli2024mechanistic}). One such generalization is, given query $q$, we wish to retrieve a general key $k$ (possibly $k\neq q$) and return the value associated with $k$.

%where we wish to retrieve a target key $k$ (or its associated value) given a query $q$. These will necessitate choosing non-trivial K/Q weights. Please refer to the appendix for the details.

%$\Iden_d$, and key/query weights to $\W_q\W_k^\top=C\cdot\Iden_d$ where $C>0$ is a sufficiently large scalar.

%\begin{itemize}[leftmargin=*]
%    \item \textbf{Standard assumptions:} 
%    \item \textbf{Fixed attention choice:} 
%\end{itemize}
%Our choice of attention weights 

\noindent\textbf{N-gram AR.} Our first result shows that a single CAT layer can solve the NAR problem under fairly general conditions.

\begin{theorem}[Solving NAR]\label{thm NAR main} Let $\Fb\in\R^N$ be a causal 1-D convolutional filter of length $N$ and $\norm{\X}$ normalize the rows of a matrix to unit $\ell_2$ norm. Consider a single CAT layer 
% $f(\X)=\sft{\qb\X_k^{\top}}\Vb$ 
$f(\X)=(\X_v\W_v)^{\top}\sft{\X_k\W_k\W_q^{\top}\qb}$
where $\qb$ is the final token of $\X_q$ and $\X_q=\norm{\X\ast\Fb_q} \in \R^{L \times d}$ (same for $\X_k,$). Set $\Fb_q=\Fb$ and $\W_k=\W_q=\sqrt{c}\Iden_d$. Use either
\begin{itemize}
\item \textbf{Value delay:} $\Fb_k=\Fb_q$, $\Fb_v=\dly{-1}$ and $\W_v = 2 \Ib_d$ or,
\item \textbf{Key delay:} $\Fb_k=\dly{1}\ast\Fb_q$, $\Fb_v=\dly{0}$ and $\W_v=\Iden_d$
\end{itemize}
Let $\eps>0$ be the minimum $\ell_2$ distance between two distinct tokens embeddings. For almost all choices of $\Fb$, there is a scalar $c_0>0$ depending on $\Fb$ such that, setting $c=c_0\log(4L/\eps)$, CAT layer solves the NAR problem of Def.~\ref{NAR prob def} for all input sequences up to length $L$.
%as long as the $N$-gram query (last $N$ tokens) occurs exactly twice in the sequence.
\end{theorem}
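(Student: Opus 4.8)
The plan is to show that the softmax attention concentrates essentially all its mass on the single token position whose convolved key embedding best matches the convolved query, and that this "best match" is precisely the position $i+N-1$ identified in the NAR problem (so that the value read off — after the appropriate delay — is $x_{i+N}$). The whole argument reduces to two ingredients: (i) a combinatorial/algebraic claim that the inner product $\langle (\X_k\W_k)_j, \W_q^\top \qb\rangle$ is strictly maximized, over $j\in[L]$, exactly at the position aligned with the matching N-gram; and (ii) a standard softmax-concentration estimate converting that strict gap into an $O(\eps)$-accurate output once the temperature $c$ is large enough, so that nearest-neighbor decoding recovers $x_{i+N}$ exactly.

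First I would set up the key observation about the convolved embeddings. After normalization, $(\X_q\ast\Fb)$ evaluated at the last position is a fixed linear combination $\qb = \mathrm{norm}(\sum_{t\in[N]}F_t\, x_{L-1-t})$ of the embeddings of the last $N$ tokens — i.e., of the query N-gram. Similarly $(\X_k)_j$ is the normalized version of $\sum_{t\in[N]} F_t\, x_{j-t}$, the same filtered combination of the $N$-token window ending at position $j$ (with the $\dly{1}$ shift in the "key delay" variant, ending at $j-1$). With $\W_k = \W_q = \sqrt{c}\,\Iden_d$, the attention logit at position $j$ is $c\,\langle (\X_k)_j, \qb\rangle$. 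Since all token embeddings are unit-norm and distinct, and since the filtered windows are unit-normalized, $\langle (\X_k)_j,\qb\rangle \le 1$ with equality iff the window at $j$ equals the query window *as a filtered vector*. Here is the one genuinely non-trivial point, and the reason for the phrase "for almost all choices of $\Fb$": different N-token windows could in principle produce the *same* normalized linear combination even if the underlying token tuples differ, if the $F_t$ are chosen adversarially (e.g.\ $F$ could annihilate a coordinate of variation). I would argue that the map (ordered $N$-tuple of distinct-or-repeated vocabulary embeddings) $\mapsto$ (normalized $\sum_t F_t x_{j-t}$) is injective for $\Fb$ outside a measure-zero "bad" set: the collision locus for any fixed pair of distinct tuples is a proper algebraic subvariety of $\R^N$ (it imposes a nontrivial polynomial condition on $(F_0,\dots,F_{N-1})$ because the two tuples differ in at least one slot and token embeddings are distinct), and there are finitely many tuple-pairs, so the union of bad sets has measure zero. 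Hence for generic $\Fb$ the maximizer of the logit is unique and is exactly the matching position, with all other positions having logit at most $1-\gamma$ for some gap $\gamma>0$ determined by $\Fb$ and $\eps$ (this $\gamma$ is where $c_0$ comes from: $c_0 \sim 1/\gamma$).

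Next I would run the softmax-concentration estimate. Writing $p_j$ for the softmax weights, the non-matching positions collectively carry weight at most $(L-1)e^{-c\gamma} \le \tfrac14 e^{-c\gamma + \log(4L)}$, which is $\le \eps/(4L)\cdot\text{const}$ once $c = c_0\log(4L/\eps)$ with $c_0$ chosen so that $c\gamma \ge \log(4L/\eps) + \log 4$ or so. Then $f(\X) = \sum_j p_j (\X_v\W_v)_j$ is within $O(\eps)$ in $\ell_2$ of $(\X_v\W_v)_{j^\star}$, where $j^\star$ is the matching position; by the value-delay/key-delay bookkeeping and the choice $\W_v = 2\Ib_d$ (resp.\ $\Ib_d$), $(\X_v\W_v)_{j^\star}$ equals (a rescaling of, in fact exactly, after the $\dly{-1}$ shift or the value-side normalization) the embedding $x_{i+N}$ — here I'd double-check the constant $2$ accounts for the fact that $\Fb_v = \dly{-1}$ followed by row-normalization of a single unit vector is the identity, and that $\W_v=2\Ib_d$ rescales to keep the decoded output's nearest neighbor correct; in the key-delay branch $\Fb_v = \dly{0}$ and $\W_v = \Ib_d$ make this transparent. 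Finally, since the target embedding $x_{i+N}$ is at $\ell_2$-distance $\ge \eps$ from every other token embedding, an $O(\eps)$ (with a small enough implied constant, absorbed into $c_0$) perturbation still has $x_{i+N}$ as its unique nearest neighbor, so nearest-neighbor decoding outputs $x_{i+N}$, solving NAR for all sequences of length $\le L$.

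**The main obstacle** is the injectivity-of-the-filtered-embedding-map step — i.e., making "for almost all $\Fb$" precise and ruling out that some legitimate but distinct N-gram window maps to the same normalized vector as the query window, or that a window could tie with the query at logit $1$ without equalling it. Everything else (softmax concentration, nearest-neighbor decoding, tracking the delay filters and the scalar $2$) is routine once the strict logit gap $\gamma>0$ is in hand; the delicate part is establishing that gap uniformly over all length-$\le L$ inputs from a finite vocabulary, which is exactly why the genericity of $\Fb$ and the finiteness of the vocabulary are both used.
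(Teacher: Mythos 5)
Your overall plan is correct and mirrors the paper's: establish a strict logit gap via a genericity argument on $\Fb$ (distinct $N$-gram windows yield distinct normalized ``signatures,'' a measure-zero condition per tuple pair with finitely many pairs), then let softmax concentration plus nearest-neighbor decoding finish the job. The genericity step in particular is the same argument the paper makes, just less explicit (the paper pins down a specific nonzero coefficient of the fourth-order polynomial to certify it is not identically zero; you merely assert the condition is nontrivial).

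However, there is a genuine gap in your handling of the value-delay branch, and it stems from a false premise: you assert that ``the maximizer of the logit is unique and is exactly the matching position.'' This cannot be true. With $\Fb_k=\Fb_q$, the key embedding at the \emph{last} position $L-1$ is, by construction, exactly the query $\qb$ itself, so it has inner product $1$ with $\qb$ just as the matching position $i+N-1$ does. There are always \emph{two} positions attaining the maximal logit $c$, and genericity of $\Fb$ only separates these two from the rest. Consequently the softmax does not concentrate on a single index: it places mass $\approx 1/2$ on each of the two maximizers, and your error bound ``$(L-1)e^{-c\gamma}$'' for the non-matching mass is off by an additive $1/2$. This is precisely why the construction sets $\W_v=2\Ib_d$ and $\Fb_v=\dly{-1}$: the $-1$ shift makes the value read at position $L-1$ vanish (it indexes $\x_L$, out of bounds), and the factor $2$ compensates for the halved weight, so the limiting output is $\tfrac12\cdot 2\x_{i+N}+\tfrac12\cdot 0=\x_{i+N}$. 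Your claim that $f(\X)$ is $O(\eps)$-close to $(\X_v\W_v)_{j^\star}=2\x_{i+N}$ is therefore wrong (the true output is close to $\x_{i+N}$, at distance $1$ from your claimed target), and while nearest-neighbor decoding would happen to return the right token from either vector, the concentration estimate you state does not hold as written. The key-delay branch is the one where your ``unique maximizer'' picture is actually accurate, because the forward shift pushes the self-match to the out-of-bounds index $L$, leaving a single in-range maximizer; the paper treats that case exactly as you intend. To repair the value-delay branch you need the ``golden map'' $\s_\ast(\X)=\tfrac12(\Ic_{L-1}+\Ic_{i+N-1})$ and the accompanying $\ell_1$ bound against it, rather than a one-hot target.
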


As a corollary, using a simple 1-D convolutional filter on the key embeddings solves the AR problem.

\begin{corollary}[1-D CAT solves AR] Consider a CAT layer employing 1-D convolution on key embeddings with the delay filter $\Fb_{k}=\Db_{1}=[0~1~0~\dots~0]$ and $\Fb_{q}=\Fb_{v}=\Db_{0}$. This model solves AR.
\end{corollary}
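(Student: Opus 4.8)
The plan is to derive the corollary directly from Theorem~\ref{thm NAR main} by specializing to $N=1$ and matching the stated filter choices against the ``Key delay'' branch of the theorem. First I would note that the AR problem of Def.~\ref{AR problem def} is exactly the NAR problem of Def.~\ref{NAR prob def} with $N=1$: the unique index $i$ with $x_i = x_{L-1}$ is precisely the $N$-gram match, and the target output $x_{i+1} = x_{i+N}$. So it suffices to instantiate Theorem~\ref{thm NAR main} at $N=1$ and check that the proposed filters are a legitimate (almost-all) choice covered by the theorem.

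Next I would unpack the ``Key delay'' option with $N=1$. When $N=1$, the convolutional filter $\Fb\in\R^N$ is a length-one filter, i.e.\ a scalar; taking $\Fb = \Db_0 = [1]$ (the trivial identity filter) is a valid generic choice. Then $\Fb_q = \Fb_v = \Db_0$ matches the corollary's stated $\Fb_q = \Fb_v = \Db_0$, and the theorem's prescription $\Fb_k = \dly{1}\ast\Fb_q = \dly{1}\ast\Db_0 = \Db_1 = [0~1~0~\dots~0]$ matches the corollary's stated $\Fb_k = \Db_1$. The weights become $\W_k = \W_q = \sqrt{c}\,\Iden_d$ and $\W_v = \Iden_d$, so the CAT layer is exactly the one described, with normalization applied to the (shifted) key/query embeddings. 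The only remaining point is the ``almost all $\Fb$'' and the scaling constant $c = c_0\log(4L/\eps)$: since a length-one filter is a single nonzero scalar and the delay structure does the work, the genericity condition is vacuous here, and one simply picks $c$ large enough (depending only on $L$ and the token-embedding separation $\eps$) so that the softmax concentrates on the matching index. Invoking Theorem~\ref{thm NAR main} then gives that this layer solves NAR with $N=1$, i.e.\ AR.

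I do not expect a genuine obstacle here, since the corollary is essentially a restatement of a special case; the only mild subtlety is bookkeeping around the degenerate length-one filter and confirming that ``value delay vs.\ key delay'' is being read consistently with the shift conventions for $\dly{i}$ (recall $(\x\ast\dly{i})_j = \x_{j-i}$, so $\dly{1}$ shifts forward and aligns key $x_i$ with query position through the delay). If one wanted a self-contained argument rather than a reduction, I would instead verify directly that with $\Kb = \norm{(\X\ast\Db_1)}\cdot\sqrt{c}$, the key at position $j$ equals $\sqrt{c}\,x_{j-1}$ (normalized), the query equals $\sqrt{c}\,x_{L-1}$, and the inner product $\langle \kb_j,\qb\rangle$ is maximized exactly at $j = i+1$ where $x_{j-1} = x_i = x_{L-1}$; then the softmax with temperature $c = \Theta(\log(L/\eps))$ puts $1-O(\eps)$ mass there, and $(\X_v\W_v)^\top$ reads off $x_{i+1}$, whose nearest vocabulary neighbor is itself. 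But the cleanest exposition is the one-line reduction to Theorem~\ref{thm NAR main}, so that is what I would write.
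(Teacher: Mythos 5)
Your reduction is correct and is exactly the intended proof: the paper states this as a corollary of Theorem~\ref{thm NAR main} without a separate argument, and the specialization you give --- $N=1$, $\Fb = \Db_0 = [1]$, and the ``Key delay'' branch, which yields $\Fb_k = \dly{1}\ast\Db_0 = \Db_1$, $\Fb_v = \dly{0} = \Db_0$, $\W_v = \Iden_d$ --- is precisely how the corollary follows. Your observation that the genericity condition is harmless at $N=1$ is also right (the signature-uniqueness polynomial in the theorem's proof reduces to $F_0^4[(\ab_0^\top\z_0)^2 - 1]$, which is nonzero as soon as $F_0 \neq 0$, and $F_0 = 1$ clearly qualifies).
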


\noindent\textbf{Length generalization.} Our next result shows that the global minima of CAT provably exhibit length generalization, thereby shedding light on the empirical benefits of CAT in Figure \ref{fig:lengen_intro}. Concretely, even if we train CAT to solve AR for a fixed context length, the AR capability will generalize to all other context lengths. This result is distinct from Theorem \ref{thm NAR main} because \textbf{it establishes length generalization for all CAT models} that approximately solve the AR problem for a context length, rather than constructing one such solution. The proof is provided in Section \ref{sec len gen new}.

\begin{theorem}[Length generalization]\label{prop len gen new} Let $\Fb_v\in\R^{2W+1}_{+}$ be a convolutional filter from time $t=-W$ to $t=W$ where $W \leq L - 1$. Consider a CAT layer of the form $f(\X)=\X_v^\top \sft{\X\W \x_{L - 1}}$ where $\X \in \R^{L \times d}, \X_v=\X\ast \Fb_v \in \R^{L \times d}$ and $\x_{L - 1}$ is the last token of $\X$ and $\W=\W_k\W_q^\top$. Suppose that token embeddings have unit norm. Consider any model $f=(\W,\Fb_v)$ that can solve the AR problem defined in Def.~\ref{ARprob} up to $\eps$-accuracy on all sequences of length $L\geq 3$. That is, for all $(\X,\y)$ where query $\x_{L - 1}$ repeats twice and $\y$ being the associated value token, we have $\tn{\y-f(\X)}\leq\eps$. Define the minimum embedding distance within vocabulary $\Vc$ as $\Delta=(1-\max_{\ab\neq\bb\in\Vc}(\ab^\top\bb)^2)^{1/2}$ and \shaw{assume that $\Delta > 0$}. There are absolute constants $R_0,R >0$ such that, if $\eps_0:=\eps/\Delta\leq R_0/L$, we have that
\begin{itemize}[leftmargin=20pt]
\item The filter obeys $\tone{\Fb-\dly{-1}}\leq L\eps_0$, which is in line with Theorem \ref{thm NAR main}.
\item Let $\X$ be an input sequence of length $L'$ following Def.~\ref{AR problem def}. Let $\s_\st(\X)\in\R^{L'}$ be the ``golden attention map'' with entries equal to $1/2$ at the positions of the query $\x_{L' - 1}$ and $0$ otherwise. For all such $\X$, the attention map of $f$ obeys $\tone{\sft{\X\W \x_{L'-1}}-\s_\st(\X)}\leq L'\eps_0$.
\item For all $\X$ of length $L'$ following Def.~\ref{AR problem def}, we have that $\tn{\y-f(\X)}\leq R L'\eps_0$.
\end{itemize}
\end{theorem}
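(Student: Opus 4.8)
\textbf{Proof proposal for Theorem \ref{prop len gen new}.}

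The plan is to reverse-engineer the structure of any approximate AR solver from the $\eps$-accuracy hypothesis, working entirely at length $L$ first, and then transplant the recovered structure to arbitrary length $L'$. The key observation is that the $\ell_2$-accuracy bound $\tn{\y - f(\X)} \le \eps$ must hold \emph{for every} AR instance, and we get to choose the instances adversarially. So the first step is to design a family of probing inputs: for a fixed query token $q$ appearing at position $i+1$ (with value token at position $i+2$, in value-delay convention) and again at position $L-1$, I would vary which token sits at position $i+2$ and which "distractor" tokens fill the remaining slots. Since $f(\X) = \X_v^\top \smax{\X\W\x_{L-1}}$ is an attention-weighted average of the (convolved) value embeddings, and the output must land within $\eps$ of the true value embedding $\y$, the attention weight mass must concentrate at the correct position up to $O(\eps/\Delta)$ slack — here $\Delta$ enters because two distinct value tokens are at least $\Delta$ apart after projecting onto the relevant direction, so any leakage of attention mass $m$ onto a wrong value contributes $\gtrsim m\Delta$ to the output error. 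This yields the second bullet: $\tone{\smax{\X\W\x_{L-1}} - \s_\st(\X)} \le L\eps_0$ at length $L$ (the golden map places $1/2$ at each of the two query positions because, after the value-shift convolution, the token \emph{after} the key equals $\y$, and with a symmetric/degenerate filter both query occurrences can carry value mass — I need to double-check which normalization of $\s_\st$ the statement intends, but the factor $1/2$ suggests both query positions are "hot").

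The third step is to extract the filter constraint $\tone{\Fb - \dly{-1}} \le L\eps_0$. This follows by choosing inputs where the attention map is already pinned (by the previous step) to be the golden map, so $f(\X)$ reduces to a fixed convex-ish combination of $\X_v$ entries near the query position; since $\X_v = \X \ast \Fb_v$, the output is $\sum_j F_j x_{(\text{pos}) - j}$ over the window, and demanding this equal the single token $\y = x_{i+2}$ for all choices of the surrounding tokens forces $F_{-1} \approx 1$ and all other $F_j \approx 0$ — the positivity constraint $\Fb_v \in \R^{2W+1}_+$ is what makes this rigid rather than allowing cancellation. Quantitatively, feeding orthogonal-ish token embeddings into the window slots and reading off each coordinate gives $\sum_{j \ne -1} |F_j| + |F_{-1} - 1| \lesssim L\eps_0$ after accounting for the residual attention leakage from step two. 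This is essentially an $\ell_1$ identification argument, and the positivity is doing real work.

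The final and most delicate step is the transplant to length $L'$. The matrix $\W$ is length-independent, so once I know that at length $L$ the bilinear form $\x_{L-1}^\top \W \x$ separates the query's own embedding from all others by a margin large enough to make softmax concentrate, I want to argue the \emph{same} $\W$ does this at length $L'$. The subtlety: softmax concentration depends on the number of competing tokens, so a margin that suffices for $L-1$ distractors must be re-examined for $L'-1$ of them. I expect to handle this by showing the $\eps_0 \le R_0/L$ hypothesis forces the logit gap between the query position and any non-query position to be at least $\sim \log(L/\eps_0)$ — large enough that even with $L'$ competitors (and $L'$ could be much larger than $L$, which is the real concern) the total leaked mass is $\le L'\eps_0$; this is where I anticipate needing $L'$ to enter the bound \emph{linearly}, matching the statement, and where the constant $R$ gets pinned down. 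A cleaner route: note the AR instance at length $L'$ restricted to the query-relevant tokens looks, as far as $\W$ is concerned, like a bag of (query, non-query) pairs, and the per-pair logit gap is what was certified at length $L$; then a union bound over the $L'-1$ non-query tokens plus the softmax tail estimate gives attention error $\le L'\eps_0$, hence (composing with the now-known filter structure $\Fb \approx \dly{-1}$) output error $\le RL'\eps_0$. The main obstacle is precisely this decoupling — arguing that the length-$L$ accuracy certifies a \emph{per-token} (length-free) property of $\W$ and $\Fb_v$, rather than something that degrades as the context grows; the unit-norm embedding assumption and the $1/L$ slack in the hypothesis are the levers that make it go through.
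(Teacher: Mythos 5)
Your proposal follows essentially the same route as the paper's proof: probe with adversarially chosen AR instances at length $L$ to force attention concentration and pin down the filter, and then observe that what is certified is a \emph{per-pair}, length-free logit ratio for $\W$ that transfers to arbitrary $L'$ by summing over the $L'-2$ non-query positions. Your ``cleaner route'' in the last step is exactly the paper's argument: Lemma~\ref{lemma filter ar} establishes $s_v/s_q\le\Gamma=\mathcal{O}(\eps_0)$ where $s_v/s_q=\exp((\vb-\qb)^\top\W\qb)$ for the worst-case competitor $\vb=\arg\max_{\x\ne\qb}\x^\top\W\qb$, and since this ratio is manifestly independent of sequence length, at length $L'$ one gets $s_q\ge 1/(2+(L'-2)\Gamma)$ and hence $|2s_q-1|\lesssim L'\eps_0$.

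A few of your hedges can be resolved. First, the $1/2$ in $\s_\st$ is not a filter artifact: the two query occurrences carry \emph{identical} embeddings, so $\x^\top\W\x_{L'-1}$ is literally equal at both positions and softmax must split the remaining mass evenly; the output is then $\tfrac{1}{2}(\z_\alpha+\z_{L'-1})$, where right-clipping makes $\z_{L'-1}\approx 0$ and the proof actually recovers $F_{-1}\approx 2$ (not $1$) so that $\tfrac{1}{2}\cdot 2\x_{\alpha+1}=\x_{\alpha+1}$. Second, your first formulation of the transplant (logit gap $\sim\log(L/\eps_0)$) overclaims what the hypothesis gives; the certified per-pair gap is $\sim\log(1/\eps_0)$, which is all the ``cleaner route'' needs. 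Third, the paper is more economical than your ``orthogonal-ish embeddings'' sketch: the probing family $\X^i$ uses only two distinct tokens ($\qb$ at two swept positions, $\vb$ everywhere else), reads off the output coefficients $\alpha_i,\beta_i$ in the $(\vb,\qb)$ basis, and uses $\Delta$ to convert $\ell_2$ accuracy into coefficient bounds --- this avoids any assumption about the geometry of the remaining vocabulary and works with as few as two tokens. None of these alter the structure of your argument, which matches the paper.
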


%\redp{Add some discussion/refs here}

Here it worths noting that all CAT models that approximately solve the AR problem ends up learning convolution and attention weights that are consistent with the constructive result of Theorem \ref{thm NAR main}. This simple ``universal solution'' is in contrast to attention-only models where length generalization not only requires standard positional encoding but also additional adjustments to extend the context window of PE \cite{peng2023yarn,chen2023extending}.

Additionally, in Appendix \ref{sec nar len gen}, we generalize the length generalization result to the N-gram AR problem under slightly stronger assumptions, which is specified in Assumption~\ref{assumption bound}. The reader is referred to Proposition \ref{prop len gen nar}. Besides showcasing the value of convolution-attention hybrids, these results also motivate future research into the optimization landscape: Under what conditions gradient methods provably converge to generalizable CAT models, namely those described in Theorem \ref{prop len gen new}? Answers to such questions could build on the recent optimization theoretic results on the transformer/attention models \cite{tian2023joma,tarzanagh2023transformers,deora2023optimization,oymak2023role,li2024mechanics,nichani2024transformers,ildiz2024self,ataee2023max,makkuva2024attention,collins2024context} and extend them to hybrid designs.
%Specifically, as long as 
% We also provide a generalization of Theorem \ref{prop len gen new} to the NAR problem, which is shown in Prop.~\ref{prop len gen nar}.
%let $\Fb+\g$ where $\g\sim\Nn(0,\eps^2\Iden)$ and $\eps>0$ is arbitrarily small. Given two distinct $N$-grams $\A,\Z$, we have that $\s(\A)-\s(\Z)=\sum_{i=1}^N (F_{N-i}+g_{N-i})(\ab_i-\z_i)$ (utilize the fact that individual entries have distinct variations?) \redp{Utilize nonzero polynomials are nonzero almost surely. Nonzero arises by studying the $F_i^4$ term for $\ab_i\neq \z_i$}. Since 
%\[ (\s(\A)^\top \s(\Z))^2=\frac{(\sum_{i=1}^N\sum_{j=1}^N F_{N-i}F_{N-j}\ab_i\z_j)^2}{\tn{\sum_{i=1}^NF_{N-i}\ab_i}^2\tn{\sum_{i=1}^NF_{N-i}\z_i}^2}\]

%\sout{On the other hand, it may struggle to solve the scenarios where same signal token repeats multiple times.} \sout{, however we also verify that the model cannot handle selective copying with repetitions} \sout{We further corroborate this capability through experiments, see Section~\ref{app:syn_exp} for details.}

\noindent\textbf{Selective Copy.} Our next result shows that, 1-layer CAT model can solve the \emph{unique selective copy} problem. That is, it can provably generate all signal tokens in the correct order as long as the input contains each distinct signal token at most once. Corroborating this, our experiments demonstrate that 1-layer CAT performs on par with or better than alternative architectural choices. The proof is deferred to Section \ref{sel copy proof}.  
% \redp{@Mingchen: Please bring an experiment figure to the main text. Add a few more lines of discussion as needed.}
\begin{theorem}[Selective Copy]\label{sel copy thm} Consider the setting of Def.~\ref{sel copy def}. There is a 1-layer CAT using exponential-decay query-convolution (i.e.~$F_{q,i}=\rho^i$) and $d=|\Sc|+3$ dimensional token embeddings such that, it outputs all signal tokens in order for all inputs where signal tokens appear uniquely. 
\end{theorem}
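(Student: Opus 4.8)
The plan is to build a single CAT layer whose exponential-decay query-convolution turns the selective-copying task into a sequence of associative-recall queries, solved one at a time by the attention mechanism. First I would fix the token embedding scheme: reserve $|\Sc|$ coordinates as a one-hot indicator of the signal-token identity, one coordinate as a "signal flag" that is $1$ on signal tokens and $0$ on noise tokens, one coordinate to encode the special token $\bot$, and one coordinate for a positional/counting feature that the convolution will populate. The key/query weights $\W_k,\W_q$ are taken to be scaled identities (as in the AR setting described before Theorem \ref{thm NAR main}), so that attention again implements nearest-neighbor retrieval, but now in the \emph{convolved} feature space rather than the raw embedding space.

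The core idea is that convolving the query stream with the exponentially decaying filter $F_{q,i}=\rho^i$ accumulates a geometrically weighted running sum of past signal flags. Concretely, at a decoding position the convolved query carries a scalar of the form $\sum_{j}\rho^{\,t-j}\cdot(\text{signal flag at }j)$, whose value encodes \emph{how many signal tokens have already been emitted}. Because $\rho<1$, distinct prefix-counts of signal tokens map to distinct, well-separated scalars (this is exactly where the "signal tokens appear uniquely" hypothesis is used — it guarantees the count is the right bookkeeping variable and that each target value is retrieved exactly once). I would then arrange the key stream so that the $m$-th signal token in the input carries a matching scalar $\approx\rho^{\,(\text{something})}$ keyed to the index $m$; applying a delay filter on the value stream (as in the "value delay" variant of Theorem \ref{thm NAR main}) makes attention return the $m$-th signal token when the decoder has already emitted $m-1$ of them. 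One then invokes a softmax-sharpening / temperature argument identical in spirit to the one in Theorem \ref{thm NAR main}: choosing the identity scaling $c=\Theta(\log(L/\eps))$ forces the softmax to concentrate on the unique matching key, so the output is within nearest-neighbor distance of the correct signal token, and the discrete read-out is exact. The autoregressive step is then routine: once the model emits the $m$-th signal token, feeding it back increments the running count to $m$, and the same layer retrieves token $m+1$.

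The main obstacle I anticipate is the bookkeeping that makes the exponential-decay accumulator both \emph{injective} on the relevant set of prefix-counts and \emph{robustly separated} so that the softmax concentration argument goes through with only $d=|\Sc|+3$ coordinates — in particular, ensuring the convolved query at a decoding step matches the intended key's scalar and is strictly farther from all other keys' scalars, uniformly over input length $L$ and over all valid inputs. This requires carefully choosing $\rho$ (and possibly rescaling per position) so that the gaps between consecutive attainable accumulator values do not shrink faster than the temperature sharpening can compensate; handling the boundary between the "read phase" (before $\bot$) and the "write phase" (after $\bot$) cleanly, and verifying that noise tokens — which carry signal flag $0$ — never create a spurious near-match, are the remaining details. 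Once the separation bound is in hand, the softmax error analysis is a direct reuse of the estimate behind Theorem \ref{thm NAR main}, and the autoregressive correctness follows by a short induction on the number of signal tokens emitted.
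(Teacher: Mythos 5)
There is a genuine gap. Your mechanism amounts to a scalar-matching scheme: you convolve the query's signal flags into a count of already-emitted signal tokens, and then you want to "arrange the key stream so that the $m$-th signal token in the input carries a matching scalar $\approx\rho^{(\cdot)}$ keyed to the index $m$." But in the setting of this theorem only the \emph{query} stream is convolved, and the first case of the appendix construction uses no positional encoding. The raw key embeddings are purely local — a signal token's key cannot know whether it is the 3rd or 7th signal token in the sequence — so there is nothing on the key side for your accumulated count to match against. To realize your scheme you would need either a key convolution or a positional feature in the key stream, and neither is available in this instance (you allocate a "positional/counting" coordinate, but convolution cannot populate an all-zero coordinate; and you have dropped the noise-token base-embedding dimension, which the paper does need).

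The paper's mechanism sidesteps this entirely. It encodes the base embeddings of distinct signal tokens (and $\bot$) as mutually orthogonal vectors, so that after exponential-decay query-convolution the inner product $\langle \z^{*}_t, \x_i'\rangle$ of the convolved query with any particular signal token's base embedding isolates that token's accumulated convolution coefficient. The attention matrix applies a \emph{negative} weight $-\alpha$ to this base-embedding block and a large positive weight $\beta$ to the scalar signal-indicator coordinate. Because already-emitted signal tokens (and $\bot$) sit at the most recent positions, they carry the largest coefficients $\rho^i$ and hence the most negative scores; among the not-yet-emitted signal tokens, the leftmost one $q_1$ has the smallest coefficient and hence the highest score. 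Thus the model always selects the oldest unemitted signal token, with no need for keys to know their rank, no positional coordinate, and no scalar matching. The "already emitted vs. not yet emitted" separation that you wave at ("distinct prefix-counts map to distinct, well-separated scalars") is precisely the step that falls out for free from the sign flip plus the strictly decreasing $\rho^i$ — and it is also where you would need the uniqueness hypothesis to guarantee orthogonality gives you a clean coefficient readout. Finally, the softmax-sharpening step in your sketch is fine and does mirror the paper's closing move of proportionally scaling $(\alpha,\beta)$.
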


Selective Copy problem is distinct from AR in the sense that, it requires a global view of the token positions as the model has to distinguish the order of the distinct signal tokens within the context window. In Theorem \ref{sel copy thm}, we actually describe two ways to achieve this (see appendix for the details): The first option is using an infinitely long convolution $F_{q,i}=\rho^i$ which admits a simple parameterization as a state-space model \cite{gu2021efficiently}. We show that this convolution choice can aggregate all signal tokens in the query embedding while distinguishing their order. This also partly explains how Mamba/SSMs are equally effective in solving Selective Copying. An alternative construction is using a short convolution together with a simple positional encoding. Here, convolution equips the query with local context (specifically the summary of the signal tokens generated so far) and PE provides the global context on the locations of remaining signal tokens. This synergy of PE and short convolution is in line with our real language modeling experiments where CAT with PE outperforms CAT without PE in terms of perplexity as well as length generalization.

\section{{Benefits of Long Convolution for Enabling Sparse-Attention}}\label{sec tradeoff}

So far we have discussed the benefits of short convolutions to equip transformer with local context to solve AR and its variations. During this discussion, we have used dense attention which has exact recall capabilities thanks to its  ability to scan the full context window.
In this section, we ask the following: Can convolution also help mitigate the need for dense attention?
Intuitively, we should be able to tradeoff the accuracy of attention computation with computation. Here, we describe how long convolutions can enable this by effectively summarizing the context window so that we can identify where to attend in (extremely) long-context settings.

\begin{wrapfigure}{r}{0.55\textwidth}
    \centering
    % \vspace{-10pt}
    \begin{tikzpicture}
        \node at (0,0) [scale=0.35]{\includegraphics[trim=0 100 20 0, clip]{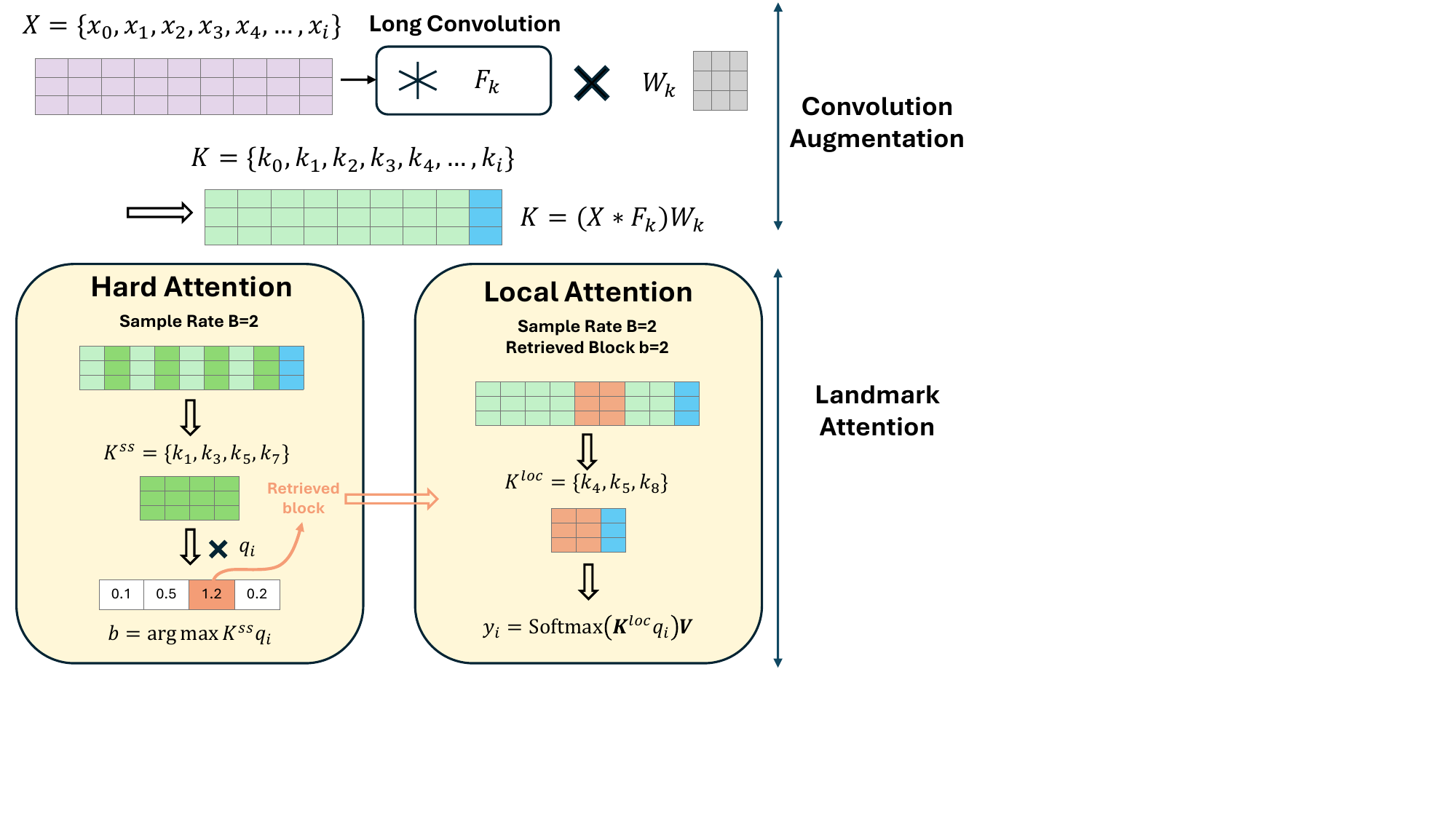}};
    \end{tikzpicture}
    \caption{\small{ Illustration of the Landmark CAT. We first apply long convolution on the input sequence and subsample it to obtain landmark tokens representing individual blocks. 
    Hard Attention computes the similarity between the query and landmarks to retrieve the most relevant block. Local Attention concatenates the retrieved block with the final block containing the query and computes the output token.}}\vspace{-15pt}\label{lcat fig}
\end{wrapfigure}

Specifically, we will prove that, long convolutions (such as SSMs) allow us to utilize sparse attention while retaining (high-probability) recall guarantees. These findings complement the recent research that establish the recall limitations of purely recurrent models \cite{arora2024simple,zoology2023}. Our theory will also shed light on the mechanics of landmark attention \cite{mohtashami2023landmark}. While \cite{mohtashami2023landmark} does not rely on convolution, we will describe how convolution can generate \emph{landmark tokens} by summarizing/hashing the chunks of the context window, and attention can efficiently solve recall by attending only to these summary tokens.

%al Landmark CAT  by describing how attention can successfully solve AR by attending to the landmark tokens  summarized chunks of the input sequence. Let us proceed by introducing the algorithmic setting.% which is depicted in Figure \ref{}.\smallskip % thanks to convolution's ability to summarize the context window?

%We consider the following attention mechanism: 
\noindent\textbf{Landmark Convolutional Attention (LCAT):} Figure \ref{lcat fig} describes the LCAT block that apply on input sequence $\X$. Let $\Fb_k\in\R^L$ be the convolutional filter on keys, $B$ be the sampling rate, and $\barL=\lceil L/B\rceil$. Setting $\Kb=(\X\ast \Fb_k)\W_k\in\R^{L\times d}$, we obtain $\Ks\in\R^{\barL\times d}$ by sampling $\Kb$ at every $B$ tokens. Additionally, define $\X_i$ to be the $i$th block of $\X$ of size $B$ spanning tokens $(i-1)B+1$ to $iB$. Let $\Vb=(\Fb_v\ast\X)\W_v$ denote the value embeddings. For a query $\qb_i$ for $i\in[L]$, the LCAT layer outputs:
\begin{align}\label{hcateq}
&\text{(1) Hard Attention:}~~\quad b=\arg\max_{j\neq \lceil i/B\rceil} \Ks\qb_i\tag{LCAT}\\
&\text{(2) Local Attention:}~\quad \y=\sft{\Kbl\qb_i}\Vb^l\quad\text{where}\quad \Kbl=\text{concat}(\Kb_{\lceil i/B\rceil},\Kb_b).\nonumber
%\y=\sft{\X_b\W\x_L}\X_b\Vb
\end{align}
Above, \emph{hard attention} phase aims to retrieve the correct block associated to the query. This block is merged with the local block $\lceil i/B\rceil$ that contains the query itself similar to sliding window attention. We then apply \emph{dense local attention} on the concatenated blocks $\Kbl$.
%\mli{We need to explicitly mention how hard and local attention are combined and interact with image or more clear discussion.  }

%For each token $i$, the model above retrieves the corresponding local block and the most significant non-local block, and computes attention over these two blocks.

%Above,  whereas $\Fb$ denotes the global convolution critical for enabling sparse attention. 

\begin{comment}
\noindent \textbf{Remark:} LCAT can be generalized to have further hierarchy e.g.~we can retrieve subblocks within a block. Similarly, rather than retrieving 1 block, we can retrieve multiple blocks. Since our goal here is to establish the provable benefits of long convolutions for AR, we defer these to a future extension.
\end{comment}

\noindent\textbf{Computational complexity of LCAT:} For a fixed query, \eqref{hcateq} requires $\order{d(L/B+B)}$ computations. This is in contrast to $\order{dL}$ computations of vanilla attention. Choosing a suitable block size (e.g.~$B=\order{\sqrt{L}}$), this model should save up to $\times\sqrt{L}$ in computational savings. Importantly, our theory will highlight the interplay between the embedding dimension $d$ and the allowable acceleration by characterizing the exact performance of \eqref{hcateq} under a random context model.

\begin{definition}[Random Context Model] The query token $\x_L$ occurs twice in the sequence and has unit $\ell_2$ norm. All other tokens of $\X$ are IID and drawn with IID $\Nc(0,\sigma^2/d)$ entries. 
\end{definition}
The following proposition shows that, \eqref{hcateq} will solve AR if and only if $\frac{d}{2B\log\barL}\geq 1+o(1)$.

\begin{proposition}\label{prop long conv1} Recall $\barL=\lceil L/B\rceil$ is the number of blocks. Let $\W_v=2\Iden_d$, $\Fb_v=\Db_{-1}$, and $\W_k=\W_q=\sqrt{c}\cdot\Iden_d$ with $c\rightarrow\infty$. Set key convolution as $F_{k,i}=1$ for $0\leq i<B$ and zero otherwise.\\\textbf{(A)} If $d\geq 2\sigma^2 B(\sqrt{\log\barL}+t)^2$, then \eqref{hcateq} solves AR for fixed $\x_L$ with probability at least $1-3e^{-t^2/4}$.\\\textbf{(B)} Conversely, for any $\eps>0$ there is $C_\eps>0$ as follows: If $\barL\geq C_\eps$ and $d\leq 2\sigma^2 B(\sqrt{(1-\eps)\log\barL}-t)^2$, then \eqref{hcateq} fails to solve AR with the same probability.\\\textbf{(C)} Finally, suppose we wish to solve AR \textbf{uniformly} for all queries $\x_L$ over a subspace $S$. This succeeds with the same probability whenever $d\geq 2\sigma^2 B(\sqrt{\log\barL}+\sqrt{\texttt{dim}(S)}+t)^2$. %Here, $\texttt{dim}(S)$ is controlled by Gaussian width of $S$ and reduces to standard dimension for subspaces.
\end{proposition}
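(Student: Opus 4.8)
The plan is to isolate the \emph{hard-attention} step — this is where the threshold in $d$ lives — and reduce it to a comparison of independent Gaussians, treating the subsequent local dense attention as a high-probability clean-up. Write $\x_L$ for the fixed unit-norm query, $p$ for the position of its unique earlier copy, and $b^\star=\lceil p/B\rceil$ for the block holding this key. With the stated weights the $j$-th coordinate of $\Ks\qb_L$ is $c$ times $s_j:=\sum_{t\in\text{block }j}\langle\x_t,\x_L\rangle$, and the positive scalar $c$ is irrelevant to the $\arg\max$; since $\|\x_L\|=1$ and every non-query token is $\Nc(0,\frac{\sigma^2}{d}\Iden_d)$, I would observe that $s_{b^\star}=1+G$ with $G\sim\Nc(0,(B-1)\sigma^2/d)$ while the $\le\barL-1$ scores $s_j$ ($j\ne b^\star$) are independent $\Nc(0,B\sigma^2/d)$. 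Set $\tau^2:=B\sigma^2/d$, so the hypothesis of (A) reads $\frac1{2\tau^2}\ge(\sqrt{\log\barL}+t)^2$. For (A) hard attention returns $b^\star$ as long as $1+G>\max_{j\ne b^\star}s_j$; I would take the slack $u:=\frac{t}{2(\sqrt{\log\barL}+t)}$, for which $(1-u)(\sqrt{\log\barL}+t)=\sqrt{\log\barL}+\frac t2$ and $u(\sqrt{\log\barL}+t)=\frac t2$, so a Gaussian tail bound gives $\Pro[G\le-u]\le e^{-u^2/2\tau^2}\le e^{-t^2/4}$ and a union bound over the $\le\barL$ blocks gives $\Pro[\max_j s_j\ge1-u]\le\barL\,e^{-(1-u)^2/2\tau^2}\le\barL\,e^{-(\sqrt{\log\barL}+t/2)^2}\le e^{-t^2/4}$. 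On the complement hard attention succeeds; I would then add one more event of probability $\ge1-e^{-t^2/4}$ — a union bound over the $\le2B$ tokens of the two retrieved blocks, each with $|\langle\x_t,\x_L\rangle|$ below a fixed threshold (again by the Gaussian tail bound, which under the stated $d$ and $B=\order{\sqrt L}$ is $\le e^{-t^2/4}$) — on which the $c\to\infty$ local softmax concentrates on the position carrying $\x_L$ and, through the $\Fb_v=\Db_{-1}$, $\W_v=2\Iden_d$ value wiring and nearest-neighbor decoding exactly as in Theorem~\ref{thm NAR main}, outputs $\x_{p+1}$. Summing the three failure probabilities gives $3e^{-t^2/4}$. (If $p$ falls in the last block, hard attention is vacuous and the local step alone settles the instance.)

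\emph{Part (B).} For the converse it is enough to show hard attention picks a wrong block, $1+G<\max_{j\ne b^\star}s_j$, since then the key is missing from $\Kbl$ and the decoded output is unrelated to $\x_{p+1}$. I would fix $\eps'\in(0,\eps)$ and set $a:=\tau\sqrt{2(1-\eps')\log\barL}$; the hypothesis gives $\tau\ge\frac1{\sqrt2(\sqrt{(1-\eps)\log\barL}-t)}$, hence $a\ge1+c_\eps$ for an absolute $c_\eps>0$ once $\barL\ge C_\eps$. Then (i) $\Pro[G\ge a-1]\le\Pro[\Nc(0,1)\ge(a-1)/\tau_0]\le e^{-t^2/4}$ for $\barL\ge C_\eps$ (because $(a-1)/\tau_0\ge(a-1)/\tau\gtrsim\sqrt{\log\barL}$), and (ii) Gaussian anti-concentration gives $\Pro[\Nc(0,1)>a/\tau]\gtrsim\frac1{\sqrt{\log\barL}}\barL^{-(1-\eps')}$, so by independence $\Pro[\max_{j\ne b^\star}s_j\le a]\le(1-\Pro[s_1>a])^{\barL-2}\le\exp(-c''\barL^{\eps'}/\sqrt{\log\barL})\le e^{-t^2/4}$ for $\barL\ge C_\eps$ (the hypothesis forces $t\le\sqrt{\log\barL}$, so $t^2\ll\barL^{\eps'}$). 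Intersecting the two complementary events gives failure probability $\ge1-2e^{-t^2/4}\ge1-3e^{-t^2/4}$.

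\emph{Part (C).} Here the query ranges over the unit sphere of $S$; since in AR the key equals the query, position $p$ carries $q$, and against a unit $q\in S$ the $j$-th block score is $s_j(q)=\langle v_j,q\rangle$ with $v_j:=\sum_{t\in\text{block }j}\x_t$, so $\sup_{q\in S,\|q\|=1}s_j(q)=\|P_S v_j\|$ for $P_S$ the orthoprojection onto $S$. I would control $\|P_S v_j\|$ for $j\ne b^\star$ (where $P_S v_j\sim\Nc(0,\frac{B\sigma^2}{d}\Iden_{\dim S})$) by Gaussian Lipschitz concentration, $\Pro[\|P_S v_j\|\ge\tau(\sqrt{\dim S}+s)]\le e^{-s^2/2}$, and union-bound over $\le\barL$ blocks with $s=\sqrt{2\log\barL}+t$ to get $\sup_{q,\,j\ne b^\star}s_j(q)\le\tau(\sqrt{\dim S}+\sqrt{2\log\barL}+t)$ with probability $\ge1-e^{-t^2/4}$; here the $\sqrt{\dim S}$ is the Gaussian width of the sphere of $S$ and the $\sqrt{2\log\barL}$ is the cost of the maximum over blocks. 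Since $s_{b^\star}(q)=1+\langle v'_{b^\star},q\rangle$ with $v'_{b^\star}$ the sum of only the $B-1$ noise tokens in block $b^\star$, we have $\inf_q s_{b^\star}(q)\ge1-\|P_S v'_{b^\star}\|$, a lower-order term (no $\sqrt{\log\barL}$), and controlling it and the uniform local clean-up costs a further $2e^{-t^2/4}$. I would close the argument with the elementary inequality $\sqrt2\,(\sqrt{\log\barL}+\sqrt{\dim S}+t)\ge\sqrt{\dim S}+\sqrt{2\log\barL}+t$, which shows that $d\ge2\sigma^2 B(\sqrt{\log\barL}+\sqrt{\dim S}+t)^2$ pushes the leading term $\tau(\sqrt{\dim S}+\sqrt{2\log\barL}+t)$ below $1$, with a margin of order $\dim S$ in the $d$-budget left over to absorb the lower-order pieces — so, provided $\barL$ exceeds a constant (depending on $\dim S$), the gap $\inf_q(s_{b^\star}(q)-\max_{j\ne b^\star}s_j(q))$ is positive on the three good events and the instance is solved uniformly.

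\textbf{Main obstacle.} The Gaussian tail and anti-concentration estimates are routine; the genuinely delicate points are (1) making the reduction precise — ``LCAT solves AR'' $\Longleftrightarrow$ ``hard attention picks $b^\star$ and the retrieved two-block window has no over-correlated noise token'' — which requires pinning down how the $\Fb_v=\Db_{-1}$, $\W_v=2\Iden_d$ value wiring plus nearest-neighbor decoding turn a correctly retrieved block into the right token; and (2) in (C), extracting the Gaussian-width term $\sqrt{\dim S}$ by Lipschitz concentration of $\|P_S\cdot\|$ rather than an $\eps$-net (which would cost a factor $\dim S$ in the exponent) and then checking the constant-bookkeeping, namely that the in-block key-noise $\|P_S v'_{b^\star}\|$ — the only term without the $\sqrt{\log\barL}$ gain — stays inside the margin that the factor $2$ provides (this is what forces $\barL$ to be bounded below in terms of $\dim S$). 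A minor point: the local clean-up folds into the $3e^{-t^2/4}$ budget only when $B$ is not too large relative to $L$, e.g.\ $B=\order{\sqrt L}$, which is precisely the regime of interest.
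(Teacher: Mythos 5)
Your part (A) is essentially the paper's argument: normalize by $\sqrt{d/B}$ so the correct block score is $1+G$ with $G$ a small Gaussian and the other $\le\barL$ block scores are centered Gaussians, then a two-sided tail bound plus a union bound plus the local-attention cleanup event gives $3e^{-t^2/4}$; your slack variable $u$ is just a different bookkeeping of the same two tail inequalities. One small correction: the local cleanup does \emph{not} need $B=\order{\sqrt L}$. The paper shows the hypothesis $d\ge 2\sigma^2 B(\sqrt{\log\barL}+t)^2$ already implies $d\ge\sigma^2(\sqrt{2\log(2B)}+t)^2$ for every $B\ge1$, via $\sqrt{B}\sqrt{2\log\barL}\ge\sqrt{2B\log 2}\ge\sqrt{2\log(2B)}$, so no additional regime assumption is required there.

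Your part (B) is correct but takes a genuinely different route. The paper cites the fact that $\E[\max$ of $K$ IID $\Nn(0,1)]\to\sqrt{2\log K}$ and then concentrates the max around this mean via the reversed Gaussian inequality; you instead lower-bound a \emph{single} tail $\Pro[s_j>a]$ by anti-concentration and exploit independence of the blocks to show $\Pro[\max_j s_j\le a]\le(1-p)^{\barL-2}$ is super-polynomially small. Your version is more elementary and self-contained (no appeal to the expectation of the maximum); the paper's is shorter. The step $(a-1)^2/(2\tau^2)\ge\big((\sqrt{1-\eps'}-\sqrt{1-\eps})\sqrt{\log\barL}+t\big)^2\ge t^2$ does go through, so the tail for $G$ is fine.

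Your part (C) has a genuine gap, and the obstacle is not the one you name. You bound $\sup_{q\in S}\max_{j\ne b^\star}s_j(q)=\max_j\|P_S v_j\|\le\tau(\sqrt{\dim S}+\sqrt{2\log\barL}+t)$ by Lipschitz concentration of the norm plus a union bound over blocks, and separately $\|P_S v'_{b^\star}\|\le\tau(\sqrt{\dim S}+s')$. Positivity of the gap then requires $1/\tau>2\sqrt{\dim S}+\sqrt{2\log\barL}+t+s'$, whereas the hypothesis only gives $1/\tau\ge\sqrt{2}(\sqrt{\log\barL}+\sqrt{\dim S}+t)=\sqrt{2\log\barL}+\sqrt{2}\sqrt{\dim S}+\sqrt{2}t$. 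Subtracting, you need $(\sqrt{2}-2)\sqrt{\dim S}+(\sqrt{2}-1)t>s'\ge0$, i.e. roughly $t>\sqrt{2\dim S}$. This is a constraint on $t$, not on $\barL$: the $\sqrt{2\log\barL}$ term cancels on both sides, so taking $\barL$ large does not help, contrary to your ``provided $\barL$ exceeds a constant depending on $\dim S$''. The source of the loss is that $\sqrt{\dim S}$ appears twice in your decomposition (once for the inter-block max, once for the in-block noise). The paper avoids this with Slepian's lemma (Lemma~\ref{lemma slep version}, Proposition~\ref{prop slepian}): the detection event is written as a \emph{single} supremum $\sup_{\ab\in A,\,\bt\in S}\ab^\top\Z'\bt<1$ over the product set $A\times S$, where $A$ encodes ``one coordinate at $\beta$, one elsewhere,'' and the bilinear Gaussian process is compared to a decoupled one, yielding $\E[\sup]\le\E[\tin{\g}]+\sqrt{2}\,\omega(S)\le\sqrt{2}(\sqrt{\log\barL}+\omega(S))$. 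The in-block noise and the inter-block maximum are handled together, $\omega(S)\le\sqrt{\dim S}$ appears exactly once, and the $\sqrt{2}$ prefactor matches the proposition's constant. Your worry about $\eps$-nets is well placed, but the right alternative here is the Gaussian comparison inequality, not a tighter discretization; without it your route gives the result only up to a larger constant in front of $\sqrt{\dim S}$.
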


Figure \ref{fig_vision} corroborates the predictive accuracy of Proposition~\ref{prop long conv1}: As the block size increases, the embedding dimension to maintain success of AR grows approximately linearly. One can expand on this proposition in two directions. Firstly, a fundamental bottleneck in \eqref{hcateq} is the requirement $d\gtrsim B\log\barL$. This arises from a \emph{memory-recall tradeoff} \cite{arora2024simple,jelassi2024repeat} as we are summarizing the information of block $\X_i$ of length $B$ through its landmark token.
However, once this requirement is satisfied, the model can identify the correct block in $\order{\barL}$ cost. To avoid paying the additional $\order{B}$ cost of local attention, we could apply the LCAT approach hierarchically within the selected block to reduce the compute cost to $d(\barL+\log B)$ per token. The dominant term $d\barL$ captures the recall capacity of the LCAT model: Consistent with our theorem and lower bounds of \cite{arora2024simple}, for AR to succeed, we need \[ 
\text{recall\_capacity}=d\barL\geq L=\text{required\_memory}
\]
\begin{wrapfigure}{r}{0.5\textwidth}
    %\begin{figure}
    % \vspace*{-3mm}
    \centering 
    \includegraphics[width=0.45\textwidth]{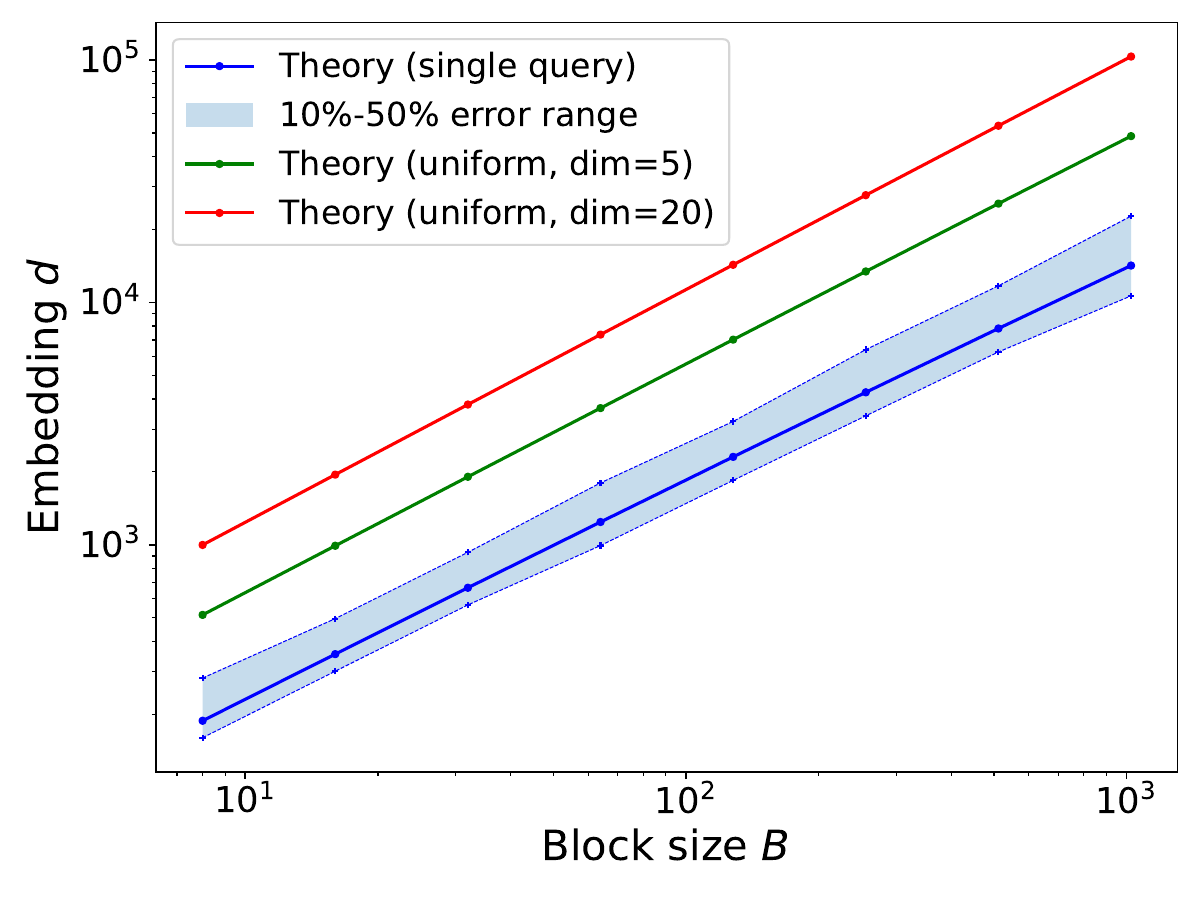}
    \caption{\small{Behavior of the embedding dimension as a function of block size for context length $L=2^{20}\approx 1$ million (noise level $\sigma^2=1)$. Shaded region highlights te range of $d$ that exhibits 10\%-50\% empirical success. Proposition \ref{prop long conv1} accurately captures the empirical behavior. For the success of uniform AR, we need larger $d$ as the dimension of the query space $S$ grows.}}
    \label{fig_vision}
    %\end{figure}
    \vspace{-35pt}
\end{wrapfigure}

Secondly, Proposition \eqref{prop long conv1} chooses a particular long convolution where landmarks become the mean of the input tokens within the block. In practice, we can use a state-space model \cite{gu2022parameterization} to parameterize convolution efficiently. A particular SSM choice of state dimension 1 is simply using exponential smoothing. This yields the following SSM variant of Proposition \ref{prop long conv1}.%This lower bound is fundamental because solving AR requires going through each sequence element and comparing to query necessitating $\order{L}$ operations.

% which implies $\order{1}$ operation per token when $d\approx B$. 

\begin{proposition}\label{prop long conv2} Consider the setting of Proposition \ref{prop long conv1} with the exponential smoothing filter $\Fb_i=\rho^i$ for $i\geq 0$. Set $\rho=e^{-1/B}$ so that $\rho^B=e^{-1}$. Suppose $d\geq 50B(\sqrt{\log\barL}+t)^2$. Then, \eqref{hcateq} solves AR with probability at least $1-3e^{-t^2/4}$.
\end{proposition}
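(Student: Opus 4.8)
The plan is to reduce the exponential-smoothing case to the box-filter case already handled in Proposition \ref{prop long conv1}. First I would write out the landmark token associated with block $j$ explicitly. With $\Fb_i=\rho^i$ the convolution gives $\Kb_t=\sum_{i\geq 0}\rho^i\x_{t-i}$ up to scaling, and the landmark $\Ks_j$ is $\Kb$ sampled at the block boundary, say $t=jB$. So $\Ks_j=\sum_{i\geq 0}\rho^i\x_{jB-i}$, a geometrically-weighted sum of the tokens in block $j$ (weights between $\rho^{B-1}\geq e^{-1}$ and $1$ on the $B$ tokens of that block) plus a geometric tail bleeding into earlier blocks with total weight $\sum_{i\geq B}\rho^i=\rho^B/(1-\rho)=\Theta(B)\cdot e^{-1}$. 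The key structural facts I need are: (i) the block containing the query contributes a component along $\x_L$ of magnitude at least a constant (since one of its terms is $\rho^{r}\x_L$ with $r<B$, so coefficient $\geq e^{-1}$), and (ii) every landmark is, conditioned on which block the query lies in, a Gaussian vector whose covariance I can bound, since each $\Ks_j$ is a fixed linear combination of the IID $\Nc(0,\sigma^2/d)$ tokens (the query block being the only exception, and there the extra term along $\x_L$ only helps).

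The core estimate is the squared $\ell_2$ norm of the ``noise weight vector'' $\w^{(j)}$ whose entries are the coefficients $\rho^i$ attached to each token feeding into $\Ks_j$: $\|\w^{(j)}\|_2^2=\sum_{i\geq 0}\rho^{2i}=1/(1-\rho^2)$. With $\rho=e^{-1/B}$ we have $1-\rho^2=1-e^{-2/B}\leq 2/B$, so $\|\w^{(j)}\|_2^2\geq B/2$; and $1-\rho^2\geq (2/B)(1-1/B)\geq 1/B$ for $B\geq 2$, giving $\|\w^{(j)}\|_2^2\leq B$. Hence each landmark, as a function of the ambient token noise, behaves like a Gaussian with per-coordinate variance in $[\,\sigma^2/(2d),\ \sigma^2/d\,]\cdot B$ — i.e. exactly the same $\Theta(\sigma^2 B/d)$ scaling as the box filter of width $B$ in Proposition \ref{prop long conv1}, only with constants shifted by a factor of at most $2$. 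The inner product $\Ks_j^\top\x_L$ for a non-query block is therefore $\Nc(0, \|\w^{(j)}\|_2^2\,\sigma^2/d)$, a scalar Gaussian with variance at most $\sigma^2 B/d$, while for the query block the inner product has mean $\geq e^{-1}$ plus the same-order Gaussian fluctuation. Then I run the identical argument as in Prop.\ \ref{prop long conv1}(A): a union bound over the $\barL-1$ wrong blocks shows $\max_{j\neq \lceil i/B\rceil}\Ks_j^\top\x_L$ concentrates around $\sqrt{(2\sigma^2 B/d)\log\barL}$ with Gaussian-tail slack $t$, and hard attention picks the query block correctly provided $e^{-1}>\sqrt{(2\sigma^2 B/d)}(\sqrt{\log\barL}+t)$, i.e.\ $d\geq 2e^2\sigma^2 B(\sqrt{\log\barL}+t)^2$; the stated constant $50$ is a convenient upper bound for $2e^2\approx 14.8$ (with room to absorb the $\rho^{2i}$-tail slack and the $[1/2,1]$ norm window), and the failure probability $3e^{-t^2/4}$ comes from the same three-term bound (two for the one-sided Gaussian maxima, one for the query-block fluctuation) as before. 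Once hard attention selects the right block, the local-attention phase is verbatim identical to Prop.\ \ref{prop long conv1}, since with $\W_k,\W_q\to\infty$ the softmax over $\Kbl$ collapses onto the matched key and $\Vb=\Db_{-1}\ast\X$ delivers the value token $x_{i+1}$.

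The main obstacle — really the only nontrivial point — is controlling the geometric tail that leaks from earlier blocks into a given landmark: unlike the box filter, $\Ks_j$ is not supported on block $j$ alone, so the ``noise'' in different landmarks is correlated and each landmark sees a little of the query token if the query sits in an earlier block. I would handle this by noting the tail has $\ell_2$ mass $\sum_{i\geq B}\rho^{2i}=\rho^{2B}/(1-\rho^2)=e^{-2}\|\w^{(j)}\|_2^2$, a constant fraction that is already subsumed in the $\|\w^{(j)}\|_2^2\leq B$ bound used above — so it changes no rates — and the cross-landmark correlations are irrelevant because the union bound over wrong blocks only uses the marginal distribution of each $\Ks_j^\top\x_L$, not joint independence. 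The stray $\rho^{r}\x_L$ term that appears in a non-query landmark when the query lies in an earlier block has coefficient $\leq e^{-1}\cdot e^{-1}<e^{-1}$ strictly less than the matched coefficient in the query block, so it cannot overturn the arg-max; making this last inequality quantitative (matched coefficient $\geq e^{-1}$ vs.\ any stray coefficient $\leq \rho^B=e^{-1}$ times a further geometric factor $<1$) is the one place I would be careful, and the slack left by choosing $50$ rather than $2e^2$ in the hypothesis absorbs it.
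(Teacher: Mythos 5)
Your plan follows the same route as the paper's proof: bound the landmark noise variance via $\sum_{i\ge 0}\rho^{2i}=1/(1-\rho^2)=\Theta(B)$, show the matched (query-block) landmark carries a signal component of size $\Theta(e^{-1})$ along $\x_L$, observe that downstream landmarks leak a strictly smaller multiple of $\x_L$, and conclude by the same union-bound-over-blocks concentration argument as Proposition~\ref{prop long conv1}, with the local-attention step carried over unchanged. The one place you flag for care — turning ``matched $\ge e^{-1}$ vs.\ stray strictly smaller'' into a quantitative mean gap — is exactly where the paper works: it centers $g_i$ for $i>\beta$ around its mean $e^{-(p+i-\beta)}\sqrt{d/B}$ and uses the worst-case gap $e^{-1}-e^{-2}\approx 0.23$ rather than the bare $e^{-1}$, which is what pushes the constant from your $2e^2\approx 14.8$ up to roughly $37$; your observation that the stated $50$ leaves enough slack to absorb this is correct.
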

% \vspace{-5pt}

Above, we fixed the decay rate $\rho$ for exposition purposes. More generally, any $\rho$ choice with an effective context size of $\order{B}$ would result in similar guarantee.

%\noindent\red{\textbf{Practical Considerations and Importance of Gating.} If all tokens are repeating in each block, you would get *BAD* performance.}

%\begin{definition}[Structured-Random Data Model] The query token $\x_L$ occurs twice in the sequence and lies in an $r$-dimensional subspace $S$. Within each block of size $B$, non-query tokens repeats $\gamma$ times for some $\gamma\geq 1$. All unique tokens of $\X$ are IID and are drawn with IID $\Nc(0,1/d)$ entries. \end{definition}
% \vspace{-5pt}
\section{Experiments}\label{sec:experiments}
% \vspace{-5pt}
\subsection{Model Evaluation on N-gram AR and Length Generalization Capability}\label{sec:syn_exp}

For the synthetic experiments on associative recall problems, we employ the CAT architecture as detailed in Section~\ref{sec:methodology}. We utilize convolution kernels with a width of $W=3$ and explore model embedding sizes of $d = 32$, $64$, and $128$ across MQAR and MQNAR problems to assess the impact of model dimension on performance. In addition to the standard attention mechanism, we introduce a perturbation strategy by implementing linear attention on the convoluted $Q$, $K$, and $V$ embeddings, referred to as LinCAT. We adhere strictly to the parameters set by \cite{zoology2023}. Our experimental setup and code are available on GitHub\footnote{\url{https://github.com/umich-sota/CAT}}. More detailed information on the training setup can be found in Section~\ref{app:exp_setup} including the data generation and hyperparameters. For reporting results, we conduct each experiment three times and present the maximum accuracy achieved across these runs, aligning with the methodologies of \cite{zoology2023} and \cite{arora2024simple}.

As illustrated in Fig.~\ref{fig:train_syn}, the CAT model consistently outperforms all baseline models across a range of sequence lengths and model dimensions. Notably, both Mamba and Based models exhibit improved performance as the model dimension increases, particularly with shorter sequence lengths. This improvement is due to the memory-recall tradeoff~\cite{arora2024simple} where models store and recall sequence information more as their dimensionalities expand. In contrast, thanks to the short convolution, the \emph{single-layer} CAT model maintains 100\% accuracy across all experimental settings, aligned with our theorem~\ref{thm NAR main}. Interestingly, aside from CAT, Mamba is the only model demonstrating the potential to effectively address the MQAR task within a single-layer network architecture. We will discuss this observation in further detail in Section~\ref{app:syn_exp}.

\begin{figure}[t]
    \centering
    \begin{subfigure}{\textwidth}
    \begin{tikzpicture}
        \node at (0,0) [scale=0.65] {\includegraphics[trim=15 10 0 10, clip]{figs/conv_1gram.pdf}};
    \end{tikzpicture}\label{fig:1gram_2layer}
    \end{subfigure}\vspace{-7pt}
    \begin{subfigure}{\textwidth}
        \begin{tikzpicture}
            \node at (0,0) [scale=0.65] {\includegraphics[trim=15 10 0 10, clip]{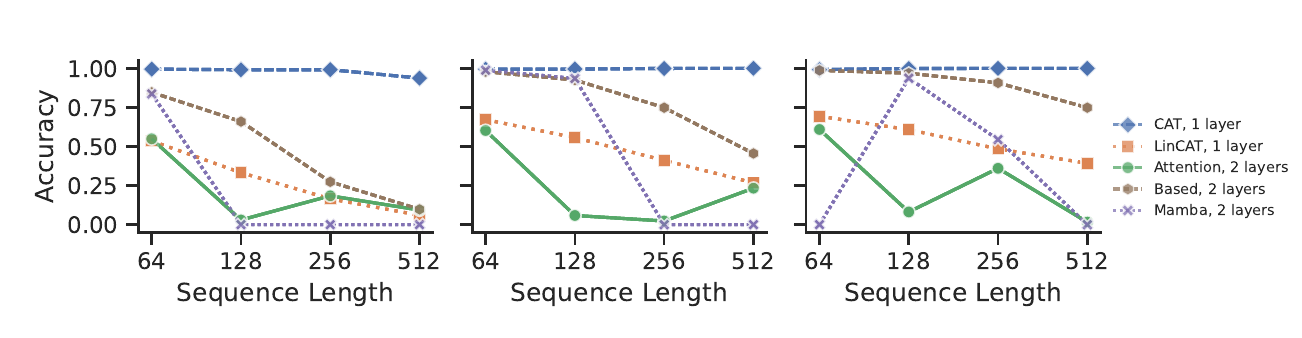}};
        \end{tikzpicture}\label{fig:2gram_2layer}
        \end{subfigure}
    \caption{\small{Evaluation of models on MQAR and MQNAR tasks with varying model dimensions and sequence lengths. Model dimensions are \textbf{32, 64, 128} for each column of the figures, from left to right. \textbf{Top:} Models trained on the MQAR setup. \textbf{Bottom:} Models trained on the MQNAR setup. Note that CAT models employ a single-layer architecture, whereas all other models utilize two layers. Refer to Section~\ref{sec:syn_exp} for detailed setup descriptions.}\label{fig:train_syn}
    }\vspace{-15pt}
    
\end{figure}

\begin{figure}[t]
    \centering
    \begin{subfigure}{\textwidth}
    \begin{tikzpicture}
        \node at (0,0) [scale=0.65] {\includegraphics[trim=15 10 0 10, clip]{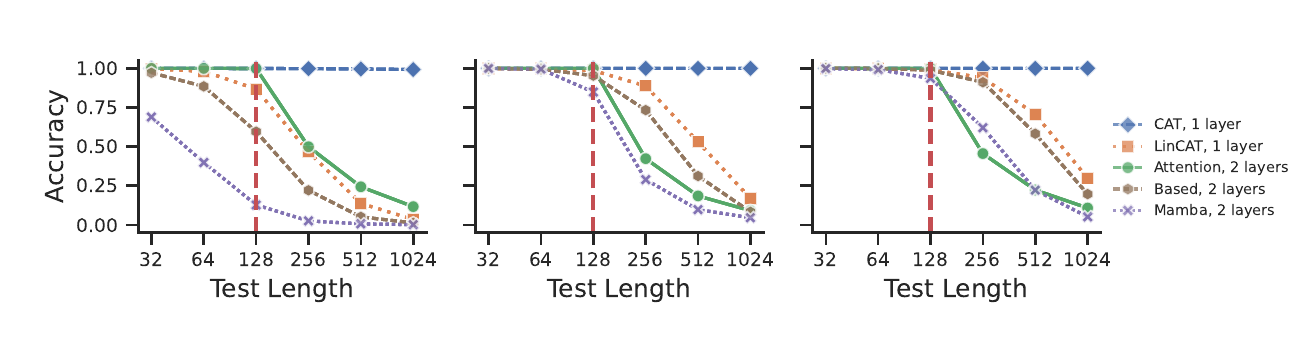}};
    \end{tikzpicture}\label{fig:1gram_2layer_lengen}
    \end{subfigure}\vspace{-9pt}
    \begin{subfigure}{\textwidth}
        \begin{tikzpicture}
            \node at (0,0) [scale=0.65] {\includegraphics[trim=15 10 0 10, clip]{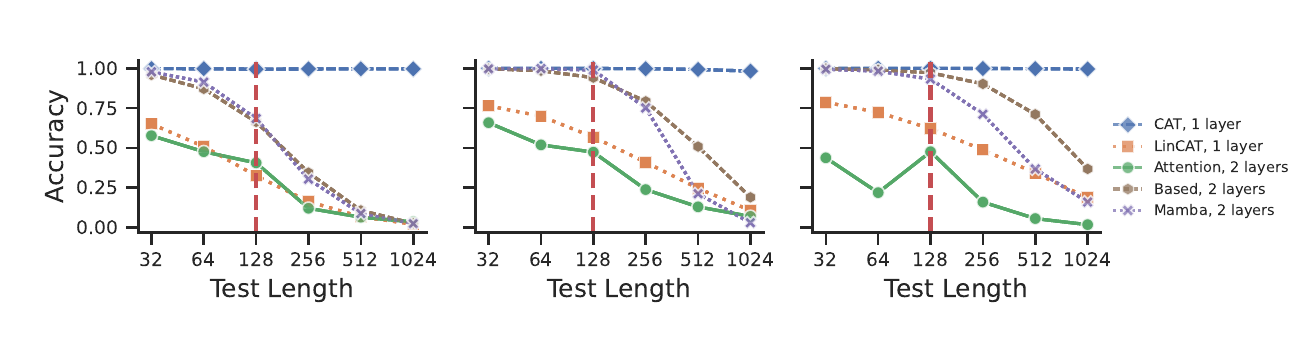}};
        \end{tikzpicture}\label{fig:2gram_2layer_lengen}
        \end{subfigure}
    \caption{\small{Evaluation of models on length generalization. Model dimensions are \textbf{32, 64, 128} for each column of the figures, from left to right. The models are trained with sequence length 128 (vertical red dashed lines) and tested on varying test length. \textbf{Top:} Models trained on the MQAR. \textbf{Bottom:} Models trained on the MQNAR. Note that CAT models establish length generalization aligned with Theorem~\ref{prop len gen new}}
    .\label{fig:lengen}
    }
\end{figure}

\textbf{Evaluation of Length Generalization.}
  In Fig.~\ref{fig:lengen}, we train models with 128 sequence length (the vertical red dashed line) and evaluate their performance on varying sequence lengths from 32 to 1,024. Fig.~\ref{fig:lengen} shows the results of length generalization, which is aligned with our Theorem~\ref{prop len gen new}: CAT models maintain 100\% accuracy while all other models exhibit a sharp decline in performance as the sequence length increases. This decrease is due to the increased demand of recall which requires the model to store and retrieve more information as the sequence length grows. The CAT model, however, is able to maintain its performance by leveraging the convolutional filters to shift the context and retrieve the necessary information. 
  Remarkably, in Fig.~\ref{fig:train_syn}, we observe non-monotonic accuracy behavior for Mamba and Attention-only models as a function of sequence length. This is due to the fact that these models are more sensitive and harder to optimize in AR problems. In Fig \ref{fig:lengen}, we used a denser hyperparameter grid and more trials to ensure smoother curves with better reproducibility.

% \begin{wrapfigure}{R}{0.5\textwidth}
%     \centering
%     \begin{tikzpicture}
%         \node at (0,0) [scale=0.65] {\includegraphics[trim=15 10 0 10, clip]{figs/conv_results_2gram_layers.pdf}};
%     \end{tikzpicture}\caption{Performance of baseline models on 2-gram MQNAR tasks with varying number of layers and sequence length. We observe that all the baseline models cannot solve the 2-gram MQNAR problem within 6 layers, while the CAT model can solve the problem with only 1 layer.
%     }\label{fig:2gram_more_layers}
% \end{wrapfigure}

\begin{table}[t]\caption{\small{Experiment results for model pretraining. $^*$ are results from~\cite{yang2023gated}, which uses a same dataset and training procedure as ours. We use the same hyperparameters as~\cite{yang2023gated} for fair comparison. For perplexity, lower is better, and for accuracy, higher is better. The average accuracy in last column is calculated by averaging the accuracy across all tasks but excluding the perplexity tasks. The best and second best results are highlighted in boldface and underline, respectively.} \label{tab:real_nlp}}
    \centering
    
    \resizebox{\textwidth}{!}{%
    % \begin{tabular}{@{}lS[table-format=2.3]S[table-format=2.3]S|S[table-format=2.3]SSSSSS|S[table-format=2.3]@{}}
    \begin{tabular}{@{}lccc|ccccccc|c@{}}
    \toprule
    {\thead{Model}} & {\thead{Wikitext \\ ppl$\downarrow$}} & {\thead{Lambada\_std \\ ppl$\downarrow$}} & {\thead{Lambada\_openai \\ ppl$\downarrow$}} &
    {\thead{Lambada\_std \\ acc$\uparrow$}} & {\thead{Lambada\_openai \\ acc$\uparrow$}} & {\thead{Piqa \\ acc$\uparrow$}} & {\thead{Hella \\ acc\_norm$\uparrow$}} &
    {\thead{Winogrande \\ acc$\uparrow$}} & {\thead{Arc-E \\acc$\uparrow$}} & {\thead{Arc-C \\ acc\_norm$\uparrow$}} & {\thead{Avg \\ Acc$\uparrow$}} \\ \midrule

    % {\makecell{Model}} & {\makecell{Wikitext  ppl$\downarrow$}} & {\makecell{Lambada\_std  ppl$\downarrow$}} & {\makecell{Lambada\_openai  ppl$\downarrow$}} &
    % {\makecell{Lambada\_std  acc$\uparrow$}} & {\makecell{Lambada\_openai  acc$\uparrow$}} & {\makecell{Piqa  acc$\uparrow$}} & {\makecell{Hella  acc\_norm$\uparrow$}} &
    % {\makecell{Winogrande  acc$\uparrow$}} & {\makecell{Arc-E acc$\uparrow$}} & {\makecell{Arc-C  acc\_norm$\uparrow$}} & {\makecell{Avg  Acc$\uparrow$}} \\ \midrule

    Pythia  & 27.410  & 74.663  & 34.023 & 0.281 & 0.343 & 0.651 & 0.355 & \underline{0.529} & 0.443 & 0.235 & 0.405 \\
    CAT, no PE & 29.216 & 86.318 & 42.260 & 0.266 & 0.321 & 0.640 & 0.339 & 0.515 & 0.436 & 0.237 & 0.393 \\
    CAT, RoPE & \underline{26.776} & 65.423 & 38.557 & 0.288 & 0.341 & \underline{0.654} & \underline{0.362} & 0.507 & 0.461 & 0.239 & 0.407 \\
    MH-CAT, no PE & 27.417 & \underline{58.959} & \underline{32.822} & \underline{0.296} & \underline{0.355} & 0.644 & 0.352 & \textbf{0.531} & 0.460 & \underline{0.240} & \underline{0.411} \\
    MH-CAT, RoPE &\textbf{25.858}&\textbf{47.593}&\textbf{28.273}&\textbf{0.330}&\textbf{0.377}&\textbf{0.662}&\textbf{0.376}&0.512&\textbf{0.466}&0.231&\textbf{0.422} \\
    
    TF++~\cite{touvron2023llama}$^*$ & 28.390 & NA & 42.690 & NA &0.310 &0.633 &0.340&0.504&0.445&\textbf{0.242} & NA\\
    Mamba~\cite{gu2023mamba}$^*$ &28.390 & NA &39.660 & NA &0.306 & 0.650 & 0.354 &0.501 &\underline{0.463} & 0.236 & NA\\
    GLA~\cite{yang2023gated}$^*$ & 28.650 & NA &  43.350 & NA & 0.303 &0.648 &0.345 & 0.514 & 0.451 & 0.227 & NA \\
    
    \bottomrule
    \end{tabular}%
    }
    \end{table}

\subsection{Evaluations on Language Modeling}\label{sec:nlp_exp}

Based on the outcomes from the synthetic experiments, we further explore the efficacy of the CAT model in real-world NLP tasks by integrating a 1D CAT structure into the Pythia~\cite{biderman2023pythia} framework. We pretrain the modified 370M-parameter model on the SlimPajama~\cite{soboleva2023slimpajama} dataset, involving 15 billion tokens. We then assess the model on a variety of downstream zero-shot tasks, including Wikitext, Lambada, Piqa, Hella, Winogrande, Arc-E, and Arc-C, a methodology commonly used in the field to evaluate generalization capabilities across diverse tasks~\cite{biderman2023pythia, gu2023mamba, zoology2023, arora2024simple}. The findings are compiled in Table~\ref{tab:real_nlp}.

In this series of experiments, the CAT model is trained in two variants: one incorporating rotary positional embedding~\cite{su2024roformer} (PE) and another without positional embedding (noPE). We observe that the CAT model with PE not only consistently outperforms the Pythia model but also achieves performance better than state-of-the-art models, including Mamba~\cite{gu2023mamba}, TF++~\cite{touvron2023llama}, and GLA~\cite{yang2023gated}. Notably, the CAT model secures a superior perplexity gain compared to the standard model while maintaining a similar level of parameters.

Regarding the noPE variant, training a Pythia model without positional encoding leads directly to divergence and extremely large losses during training, affirming the critical role of positional encoding in enabling standard transformer models to learn and converge. Intriguingly, despite the absence of positional encoding, the CAT model still performs competitively with the leading models. This suggests that the convolutional structure in the CAT model effectively captures positional information within the data. We conjecture that the short convolutions provide positional information for neighboring tokens, while the deep multi-layer network structure hierarchically aggregates this information to establish long-range positional information.

This observation aligns with our synthetic experiment results, where the CAT model demonstrated the capability to handle the AR task without positional encoding. These insights indicate that the convolutional structure could potentially replace positional encoding, which might benefit length extrapolation and generalization in the model. This offers a promising direction for further model design and optimization in the field of NLP.

$\bullet$\textbf{Length Generalization} Figure~\ref{fig:lengen_intro} presents the results from a length generalization experiment with the CAT model, in which we trained the model on sequences of length 2,048 and assessed its zero-shot performance on the Wikitext dataset across varying test sequence lengths. As a baseline in our analysis, we implemented position interpolation (PI)~\cite{chen2023extending} and YaRN~\cite{peng2023yarn} tempreture scaling on RoPE models, including CAT/MH-CAT RoPE, to facilitate length generalization. The results indicate that among the three RoPE models examined, the CAT model consistently demonstrates excellent performance across all test sequence lengths. In contrast, the Pythia model exhibits a sharp decline in performance as the sequence length increases.  We suggest that is due to the additional positional embeddings introduced by PI that was absent during the training phase.  Despite this, CAT models proficiently manage the relative positioning of tokens (especially overcome the new positional embeddings by leveraging convolution information), which significantly boosts its ability for length generalization. Additionally, the CAT model without PE is superior to the Pythia model with RoPE, suggesting the effectiveness of the convolutional structure within the CAT model in capturing essential positional data in length extrapolation.

\subsection{Model Evaluation on Selective Copying}\label{sec:syn_selective}

Fig.~\ref{fig:selective_copying} displays the selective copying results for 1-layer and 2-layer models. We train these models across a variety of model dimensions and sequence lengths. The models are required to copy 16 signal tokens from the input sequence and output them in the correct order. We observe that all 2-layer models perform well and show overlapping results, except for LinCAT. Among the 1-layer models, CAT and Mamba achieve nearly 100\% accuracy, while the performance of other models is lower. These results are consistent with Theorem~\ref{sel copy thm} and demonstrate that the 1-layer CAT model can solve the selective copying problem without repetitions.

\begin{figure}[t]
    \centering
    \begin{subfigure}{\textwidth}
        \centering
        \begin{tikzpicture}
            \node at (0,0) [scale=0.65] {\includegraphics[trim=15 10 0 10, clip]{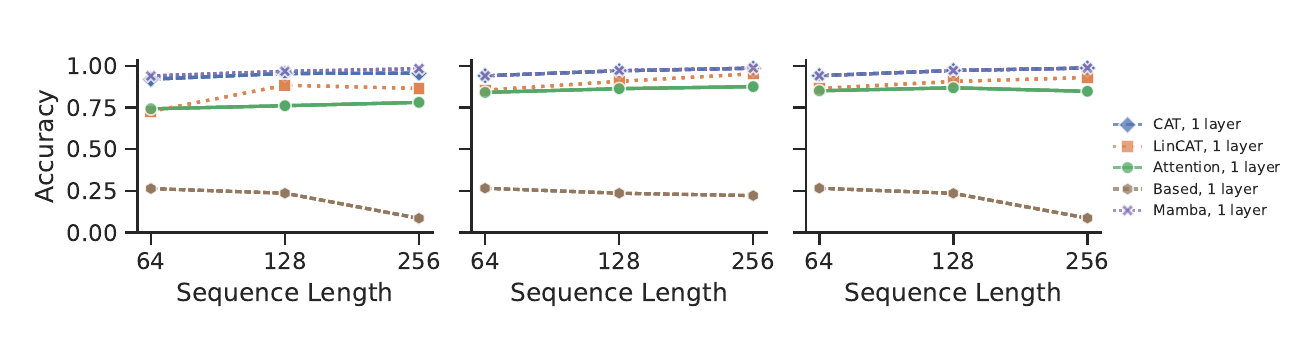}};
        \end{tikzpicture}
    \end{subfigure}\vspace{-7pt}
    \begin{subfigure}{\textwidth}
        \centering
        \begin{tikzpicture}
            \node at (0,0) [scale=0.65] {\includegraphics[trim=15 10 0 10, clip]{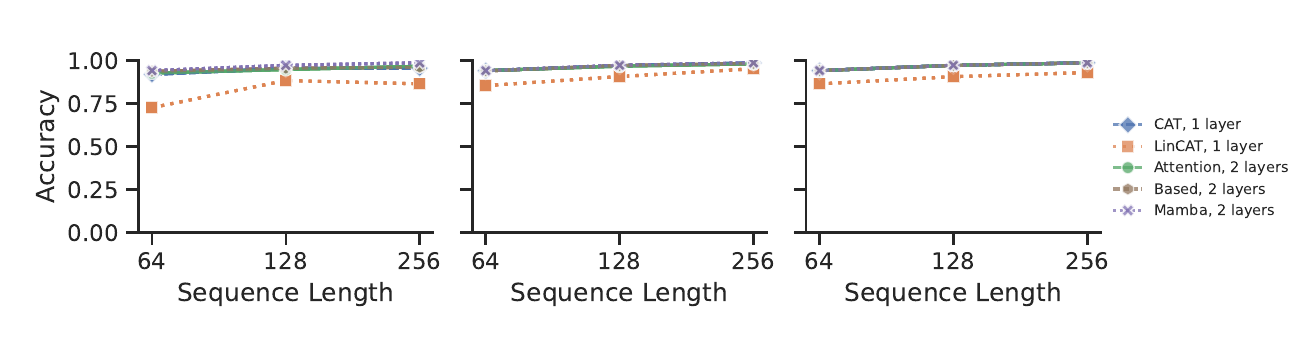}};
        \end{tikzpicture}
        \end{subfigure}
    \caption{\small{Evaluation of models on selective copying tasks with varying model dimensions and sequence lengths. Model dimensions are \textbf{32, 64, 128} for each column of the figures, from left to right. \textbf{Top:} Models trained on 1-layer architectures. \textbf{Bottom:} Models trained on 2-layer architectures. Note on 1-layer experiment, CAT and Mamba achieve nearly 100\% and their curves are overlapped. 
    }\label{fig:selective_copying}
    }
    
\end{figure}

\section{Discussion}\label{sec:limitation}
\vspace{-5pt}

In this work, we have examined the synergy between the attention and convolution mechanisms by introducing Convolution-Augmented Attention where K/Q/V embeddings are equipped with convolution. We have shown that CAT layer enjoys strong theoretical guarantees when it comes to AR and copying tasks with length generalization and also described insightful tradeoffs between the need for attention and convolution. Importantly, real experiments confirm the benefit of CAT model both in accuracy and in length generalization. Ultimately, we believe this work as well as the related recent literature \cite{gu2023mamba,arora2024simple,poli2024mechanistic,dao2024transformers} contributes to stronger design principles for the next generation of (hybrid) architectures. 

\noindent\textbf{Limitations and future work.} This work has a few shortcomings. We have only focused on pretraining. However, Fig.~\ref{fig:lengen_intro} shows the potential of CAT in finetuning as a future direction. While K/Q convolution helps in theoretical constructions for N-gram AR, in real experiments, they don't provide noticeable performance benefits. We suspect that K/Q convolution might be \emph{diluting} the attention scores and incorporating normalization or better parameterization can address this issue. An important parameterization to explore is replacing the short convolutions within CAT with SSMs. Finally, Section \ref{sec tradeoff} introduced Landmark CAT as a sparse attention strategy. It would be interesting to evaluate this proposal on real language modeling tasks.%beyond our current synthetic/theoretical settings. 

\subsection*{Acknowledgements}

This work was supported in part by the National Science Foundation grants CCF-2046816, CCF-2403075, the Office of Naval Research award N000142412289, and gifts by Open Philanthropy and Google Research. 

%This work was supported in part by the NSF grants CCF-2046816 and CCF-2403075, UMich’s MIDAS PODS program, an Adobe Data Science Research award, and a gift by Google Research. % a gift by Open Philanthropy,
%\clearpage
\bibliography{refs}
\bibliographystyle{plain}

\clearpage
\appendix
\section{Detailed Experiment Setup}\label{app:exp_setup}

\subsection{Associative Recall Experiments}
We first introduce the training setup for the synthetic experiments. In our MQAR and MQNAR experiments, we create a dataset with a vocabulary size of 8,092 to ensure that the vocabulary replicates the scope of real language data. The dataset is constructed as described in Sec.~\ref{sec:MQAR}, with varying sequence lengths $L$ of 64, 128, and 256, and 512. Specifically, we formulate the dataset in the form of key-value pairs accompanied by multiple queries where the keys are unique within each sequence. For each example, we initially select $k$ keys from the vocabulary without replacement and subsequently draw the values from the remaining vocabulary. We then randomly shuffle the keys and associated values to form the input sequence. The number of queries is set to match $k$, ensuring each key in the sequence is queried. It should be noted that while the keys are unique within a single example, they may be repeated across different examples. For sequence lengths of $L$ = 64, 128, 256, and 512, we set $k$ = 16, 32, 64, and 128 respectively, indicating that the number of keys and queries scales with the sequence length, thus increasing the task complexity. We generate 100,000 training examples and 3,000 testing examples for each of the sequence lengths. For NAR experiment, we primarily focus on $N=2$ to evaluate the performance. We construct the dataset similarly to the MQAR task with sequence lengths of 64, 128, and 256. Consequently, the number of keys and queries is reduced to $k=10, 20, 40$ respectively, to accommodate the larger $N$. We generate 200,000 training examples and 3,000 testing examples for each sequence length.

For the training, we adhere strictly to the parameters set by \cite{zoology2023}, and their experimental setup and code, using learning rate sweep among $0.001, 0.01, 0.1$ and train the model for 64 epoches. The maximum accuracy achieved across these learning rate is reported. 

We remark that for the length generalization experiments, we sweep the learning rate among $0.001, 0.003, 0.01, 0.03, 0.1$ and report the maximum accuracy over 5 runs to ensure the robustness and reproducibility of the results.

\subsection{Language Modeling Experiments}

For the language modeling experiments, we exactly follow the setup from~\cite{yang2023gated}. For the length generalization experiment, we train the model on sequences of length 2,048 and assess its zero-shot performance on the Wikitext dataset across varying test sequence lengths. 

% \section{3-gram MQNAR Experiment}\label{app:3gram_mqnar_exp}

% \begin{figure}
%     \centering
%     \begin{tikzpicture}
%         \node at (0,0) [scale=0.65] {\includegraphics[trim=15 10 0 10, clip]{conv_3gram_1layer_project.pdf}};
%     \end{tikzpicture}\caption{Performance of 1-layer CAT models on 3-gram MQNAR tasks with varying hidden sizes and sequence length. All models are trained using 1-layer architecture. 
%     }\label{fig:3gram_1layer}
% \end{figure}

% \begin{figure}
%     \centering
%     \begin{tikzpicture}
%         \node at (0,0) [scale=0.65] {\includegraphics[trim=15 10 0 10, clip]{conv_3gram_2layer_project.pdf}};
%     \end{tikzpicture}\caption{Performance of 1-layer CAT models on 3-gram MQNAR tasks with varying hidden sizes and sequence length, compared to 2-layer baseline models. We remark that the CAT models are still training using 1-layer architecture, while  outperforms the 2-layer baseline models.
%     }\label{fig:3gram_2layer}
% \end{figure}

\section{Additional Experiments}\label{app:syn_exp}

We conduct additional Experiments, Fig.~\ref{fig:1gram_1layer} and~\ref{fig:2gram_1layer} shows the result of 1-layer models on 1-gram and 2-gram MQNAR tasks with varying hidden sizes and sequence lengths. The model dimension is set to 32, 64, and 128 for each column of the figures, from left to right. 
 All other models perform much worse compare to their 2-layer counterparts. Fig.~\ref{fig:1gram_1layer_lengen} and~\ref{fig:2gram_1layer_lengen} show the length generalization results of 1-layer models on 1-gram and 2-gram MQNAR tasks. The results are consistent with the 2-layer models. 
\begin{figure}[t]
    \centering
    \begin{tikzpicture}
        \node at (0,0) [scale=0.65] {\includegraphics[trim=15 10 0 10, clip]{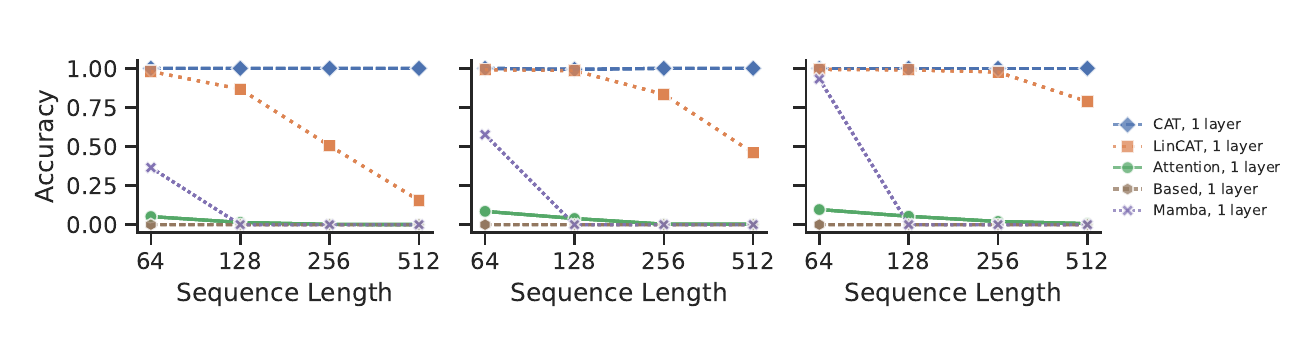}};
    \end{tikzpicture}\caption{Performance of 1-layer models on MQAR tasks with varying model dimension and sequence length. Noted that all models are trained using 1-layer architecture. 
    }\label{fig:1gram_1layer}
\end{figure}

\begin{figure}
    \centering
    \begin{tikzpicture}
        \node at (0,0) [scale=0.65] {\includegraphics[trim=15 10 0 10, clip]{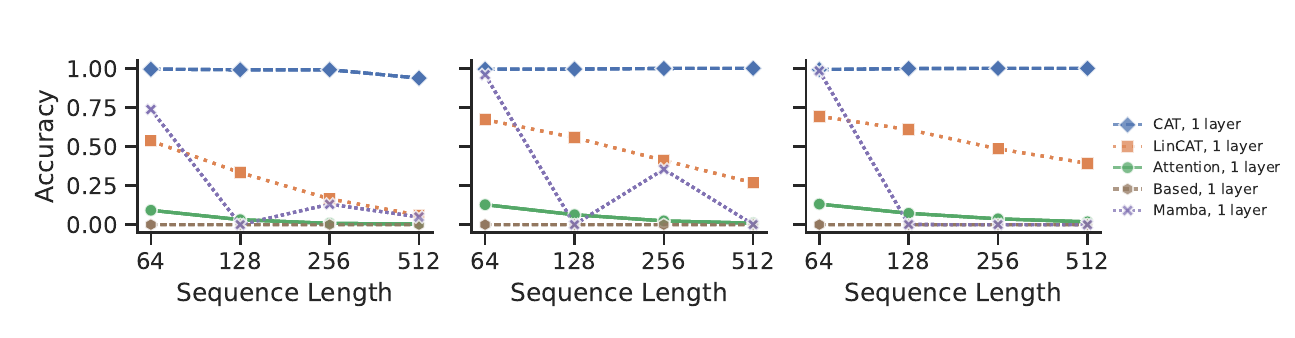}};
    \end{tikzpicture}\caption{Performance of 1-layer CAT models on 2-gram MQNAR tasks with varying hidden sizes and sequence length. All models are trained using 1-layer architecture. 
    }\label{fig:2gram_1layer}
\end{figure}

\begin{figure}[t]
    \centering
    \begin{tikzpicture}
        \node at (0,0) [scale=0.65] {\includegraphics[trim=15 10 0 10, clip]{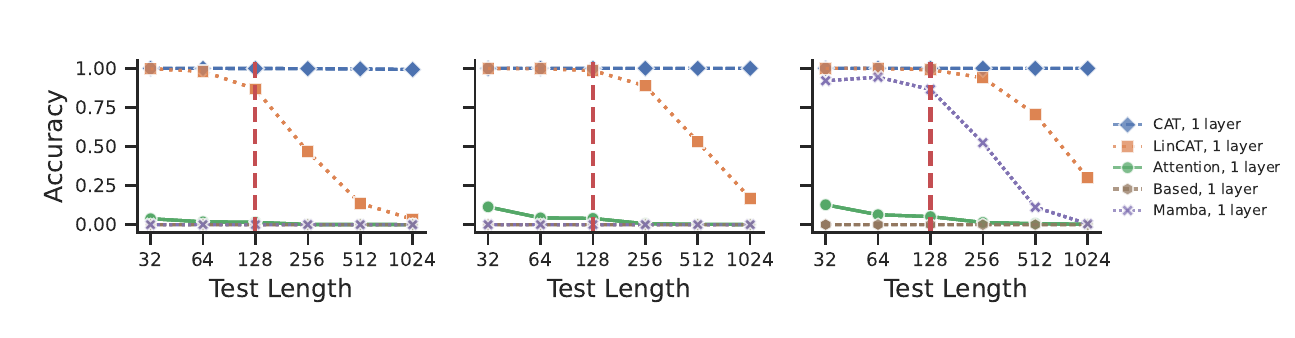}};
    \end{tikzpicture}\caption{ 1-gram Length generalization
    }\label{fig:1gram_1layer_lengen}
\end{figure}

\begin{figure}
    \centering
    \begin{tikzpicture}
        \node at (0,0) [scale=0.65] {\includegraphics[trim=15 10 0 10, clip]{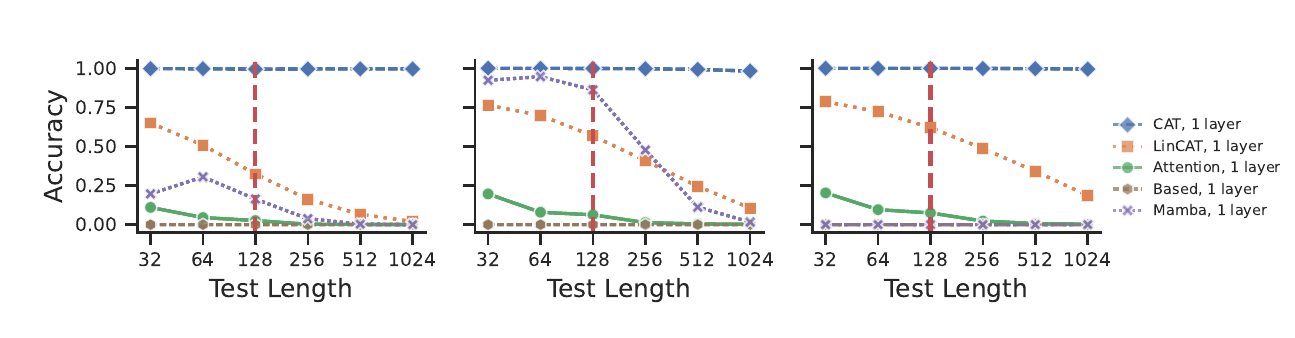}};
    \end{tikzpicture}\caption{ 2-gram Length generalization
    }\label{fig:2gram_1layer_lengen}
\end{figure}

% \begin{figure}
%     \centering
%     \begin{tikzpicture}
%         \node at (0,0) [scale=0.65] {\includegraphics[trim=15 10 0 10, clip]{figs/conv_selective.pdf}};
%     \end{tikzpicture}\caption{ Selective copying, 2 layers.
%     }\label{fig:selective_2layer}
% \end{figure}

% \begin{figure}
%     \centering
%     \begin{tikzpicture}
%         \node at (0,0) [scale=0.65] {\includegraphics[trim=15 10 0 10, clip]{figs/conv_selective_appendix.pdf}};
%     \end{tikzpicture}\caption{ Selective copying, 1 layer.
%     }\label{fig:selective_1layer}
% \end{figure}

\section{Proofs on Associative Recall and Selective Copying}

\subsection{Proof of Theorem \ref{thm NAR main}}
\begin{proof} Given an $N$-gram $\Z\in\R^{N\times d}$, let us define $\s(\Z)=\norm{\sum_{i\in [N]} F_{N-i}\z_i}$ to be its signature. We will first show that for almost all $\Fb$, each $N$-gram admits a unique signature. To see this, let $\A,\Z\in\R^{N\times d}$ be two distinct $N$-grams. Let us write the difference between their signatures as a correlation coefficient. Set $\s'(\Z)= \sum_{i \in[N]} F_{N-i}\z_i$. Note that if $\s(\A)=\s(\Z)$, we would have the following function of $\Fb$ that arises from correlation coefficient as zero:
\[ 
g_{\A,\Z}(\Fb)=(\s'(\Z)^\top \s'(\A))^2-\tn{\s'(\Z)}^2\tn{\s'(\A)}^2.
\]
Now, observe that $g$ is a fourth-order polynomial of the entries of $\Fb\in\R^N$ and we can expand $g(\Fb)$ further as follows
\begin{align}
g(\Fb)&=(\sum_{i\in[N]}\sum_{j\in[N]} F_{N-i}F_{N-j}\ab_i^\top\z_j)^2-\tn{\sum_{i\in[N]}F_{N-i}\ab_i}^2\tn{\sum_{i\in[N]}F_{N-i}\z_i}^2.
\end{align}
Above, let $c_i$ be the coefficient of the fourth moment term $F_{N-i}^4$. Note that
\[ 
c_i=(\ab_i^\top\z_i)^2-\tn{\ab_i}^2\tn{\z_i}^2.
\]
Since $\A\neq\Z$, there exists $i\in[N]$ such that $\ab_i\neq \z_i$. This implies that $c_i\neq 0$ and $g(\Fb)$ is a nonzero polynomial. As a result, $g(\Fb)\neq 0$ almost everywhere implying the same for $\s(\Z)\neq \s(\A)$. Since there are finitely many $N$-grams, repeating the same argument for all $N$-gram pairs, we find that all $N$-gram signatures are unique for almost all $\Fb$.

Next, suppose we have an $\Fb$ resulting in unique signatures. We will prove the ability of CAT layer to solve the N-AR problem. Consider an arbitrary sequence $\X$ and denote the last $N$ tokens by $\Z$. Let $\X_*=\norm{\X\ast\Fb}$ be the convolved sequence and let $\qb$ be the final token of $\X_*$. By assumption, $\qb$ repeats exactly twice in the sequence. Let $\alpha$ be the position of the $\qb$ in the sequence. By definition, the target token $\vb = \x_{\alpha + 1}$. Let $\Ic_i\in\R^L$ be the indicator function that has $1$ at position $i$ and $0$ everywhere else. Since all $N$-grams are unique and their signatures have unit norm, we have that
\begin{align}\label{core map}
\lim_{c\rightarrow\infty}\sft{c\X_*\qb}=\s_*(\X):=\frac{\Ic_L+\Ic_\alpha}{2}.
\end{align}
Above we use the standard fact that softmax will saturate at the top entry as the inverse-temperature goes to infinity. For the purposes of length generalization, we provide the precise temperature requirement. Let $\ab,\bb$ be two vectors in the normalized $N$-gram token set $\Sc_N$ (the set of tokens obtained after convolving with $\Fb$). Over all such $\ab,\bb$, define the minimum cosine distance to be
\[ 
\Delta =1-\max_{\ab\neq \bb\in \Sc_N}\ab^\top\bb.
%\shaw{\Delta=(1-\max_{\ab\neq\bb\in\Sc_N}(\ab^\top\bb)^2)^{1/2}}
\] 
Given sequence $\X_*$, using the worst case likelihood ratios of $e^{-\Delta}$ between the two $\qb$-tokens vs the remaining $L-2$ non-$\qb$ $N$-grams tokens, for any $\X$, we have that
\begin{align}\label{basic arg}
\tone{\text{map}(\X,c)-\s_*(\X)}=\tone{\sft{c\X_*\qb}-\s_*(\X)}\leq \frac{2(L-2)e^{-c\Delta}}{2+(L-2)e^{-c\Delta}}.
\end{align}
To make the right hand side $\leq \eps/2$ for all (admissible) sequences $\X$ of length at most $L$, we need $2(L-2)e^{-c\Delta}\leq \eps$ which implies $c\geq \Delta^{-1}\log(\frac{2(L-2)}{\eps})$.

\noindent\textbf{Value delay.} For value delay, we will use \eqref{basic arg} as key and query embeddings use the same filter. Let $\X_v=2\cdot \X\ast\dly{-1}$. Using the fact that rows of $\X_v$ are unit norm, for $c\geq \Delta^{-1}\log(\frac{2(L-2)}{\eps})$
\[ 
\tn{\X_v^\top \text{map}(\X,c)-\X_v^\top \s_*(\X)}\leq 2\tone{\sft{c\X_*\qb}-\s_*(\X)}\leq \eps.
\] 
Next, note that 
\[ 
\X_v^\top  \s_*(\X)=\X_v^\top (\frac{\Ic_L+\Ic_\alpha}{2})=\frac{\vb_{\alpha}+\vb_{L}}{2}.
\]
Now observe that, thanks to $-1$ delay, $\vb_{\alpha}=2\x_{\alpha+1}$ and $\vb_L=2\x_{L+1}=0$ resulting in $\lim_{c\rightarrow\infty}\X_v^\top \text{map}(\X,c)=\x_{\alpha+1}$. Combining the above results, we find that $\tn{\Vb^\top \text{map}(\X,c)-\vb}\leq \eps$ for all $\X$.
%Following the identical argument (namely there is a $c$ choice for all sequences that ensure $\eps$-approximation), we conclude with the result. 

\noindent\textbf{Key delay.} In this scenario, we are delaying $\X_*$ forward by one. Because of this, we have $\X_k=\X_*\ast\dly{1}$ and, within $\X_k$, $\qb$ appears in positions $\alpha+1$ and $L+1$. Since the latter is out of bounds, repeating the argument \eqref{basic arg} and defining $\text{map}(\X,c):=\sft{c\X_k\qb}$, for any sequence $\X$, we find that
\[ 
\tone{\sft{c\X_k\qb}-\Ic_{\alpha+1}}\leq \frac{2(L-1)e^{-c\Delta}}{1+(L-1)e^{-c\Delta}}.
\]
Similarly, the right hand side is upper bounded by $\eps$, whenever $c\geq \Delta^{-1}\log(\frac{2(L-1)}{\eps})$.

%Since there are finitely many $N$-grams, for any $\eps>0$, there is a value of $c$ such that for all sequences $\X$
%\[\tn{\text{map}(\X,c)-\Ic_{\alpha+1}}\leq \eps\]
To conclude, using the fact that tokens are unit norm and the target value vector is $\vb=\x_{\alpha+1}$, for any $\X$, we obtain
\[
\tn{\X^\top \text{map}(\X,c)-\vb}\leq \tone{\sft{c\X_*\qb}-\Ic_{\alpha+1}}\leq \eps,
\] 
completing the proof that $\tn{\X^\top \text{map}(\X,c)-\vb}$ for all $\X$ of length at most $L$.

\textbf{Concluding the proof of the theorem statement.} So far, we have concluded that, for all input sequences $\X$, CAT layer output guarantees $\tn{f(\X)-\vb}< \eps_0$ where $\vb$ is the target value token and $\eps_0$ is under our control by choosing $c=\Delta^{-1}\log(2L/\eps_0)$. Since we assume the minimum distance between distinct token embeddings are $\eps$, to accurately and uniquely decode the target $\vb$, we choose $\eps_0=\eps/2$ and apply nearest neighbor on $f(\X)$ to conclude.
\end{proof}
\subsection{Proof of Theorem \ref{prop len gen new}}\label{sec len gen new}

{In this section, we will use the shorthand $\Fb$ to denote the value filter $\Fb_v$} for notational simplicity. {Recall that $R_0>0$ is an absolute constant throughout the proof. Finally, the constant $R$ used in Theorem \ref{prop len gen new}'s statement will be subsumed within the $\order{\cdot}$ notation below.}

\begin{lemma} \label{lemma filter ar} Consider the same setting in Theorem~\ref{prop len gen new}. For any $f=(\W,\Fb)$ that can solve the AR problem {defined in Def.~\ref{ARprob}} up to $\eps$-accuracy on all sequences of length $L \geq 3$, if \shaw{$\eps_0:=\eps/\Delta\leq 1/8$}, we have that
    \begin{align}
    &\tone{\Fb-\Db_{-1}}\leq \order{W\eps_0(1+L\eps_0)+L\eps_0}\leq \order{L\eps_0(1+W\eps_0)}\label{main bounds}\\ 
    &\tone{\Fb_{\geq 0}}=\sum_{i=0}^{W}F_{i}\leq \order{\eps_0(1+L\eps_0)}
    \end{align}
    where we use $\order{\cdot}$ notation to denote an upper bound up to a constant i.e. for some absolute $r >0$, $\order{x}\leq r \cdot x$. 
    Moreover, let $\qb$ be a token within vocabulary $\Vc$ 
    % \redp{$v$ is a bad notation for vocabulary size because it will easily be confused with $\vb$. We should be using $K$. More importantly, do we even use it?} \red{$v = |\Vc|$} 
    and $\vb$ be the top query not equal to $\qb$ that maximizes the similarity $\vb^{\top}\W\qb$ i.e.~$\vb=\arg\max_{\x\in\Vc,\x\neq\qb}\x^{\top}\W\qb$, we have
    \begin{align}
        o_q=s_v/s_q\leq \Gamma=\frac{\eps_0}{1-4\eps_0}=\order{\eps_0}
    \end{align}
    where $s_q$ and $s_v$ are the softmax values for $\qb$ and $\vb$.
\end{lemma}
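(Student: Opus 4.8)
\textbf{Proof plan for Lemma \ref{lemma filter ar}.}

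The plan is to exploit the flexibility we have in choosing AR instances of length $L$: by plugging carefully designed sequences into the hypothesis $\tn{\y - f(\X)}\le\eps$, we can read off information about both the attention weights $\W$ and the filter $\Fb$. First I would analyze the attention map. Fix a query token $\qb$ and let $\vb$ be its ``top competitor'' as defined. Build an AR instance where the query $\x_{L-1}=\qb$ has its unique earlier copy at some position $i$, so that the golden output is $\y=\x_{i+1}$ (up to the filter's action). Since $f$ is $\eps$-accurate and distinct vocabulary embeddings are $\Delta$-separated, the softmax must concentrate on the ``correct'' positions; quantitatively, if a wrong position (e.g.\ the one contributing $\vb$) carried softmax mass $s_v$ that is too large relative to the mass $s_q$ on the query-aligned position, the output would be pushed more than $\eps$ away from $\y$ in the $\vb$-direction. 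Making this precise — keeping track of how the value filter $\Fb_v$ redistributes mass from token positions to output, and how the unit-norm/separation assumptions convert an $\eps$ error bound in $\ell_2$ into a bound on individual softmax entries — gives the ratio bound $o_q = s_v/s_q \le \Gamma = \eps_0/(1-4\eps_0)$, where the $1-4\eps_0$ denominator comes from the fact that the ``good'' mass is itself only guaranteed to be $\ge 1/2 - O(\eps_0)$ rather than exactly $1/2$.

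Next I would pin down the filter. Once we know the attention map is close to the golden map $\s_\star$ (mass $\approx 1/2$ at the query position $i$ and at $L-1$ itself, negligible elsewhere), the output is approximately $\tfrac12(\Fb\ast\X)_i + \tfrac12(\Fb\ast\X)_{L-1}$, which should equal $\x_{i+1}$. The term $(\Fb\ast\X)_{L-1} = \sum_{j} F_j \x_{L-1-j}$ involves tokens near the end of the sequence, which we are free to choose; similarly $(\Fb\ast\X)_i = \sum_j F_j \x_{i-j}$ involves tokens near position $i$. The strategy is to choose these ``context'' tokens to be (near-)orthogonal to the target $\x_{i+1}$ and to each other, and then project the approximate identity onto each coordinate direction. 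Projecting onto $\x_{i+1}$ forces $F_{-1}\approx 1$ (or $2$ before the $1/2$-averaging — this is exactly the $\Db_{-1}$ claim); projecting onto each other basis direction forces the corresponding $F_j$ to be small; summing the ``future/present'' coefficients $F_0,\dots,F_W$ gives the $\tone{\Fb_{\ge0}}$ bound. Each projection inherits an $O(\eps_0)$ error from the accuracy hypothesis and an additional $O(L\eps_0)$-type error from the attention map not being exactly golden (the ``leaked'' softmax mass on the $L-2$ wrong positions, each possibly aligned adversarially), which is where the $W\eps_0(1+L\eps_0)+L\eps_0$ shape comes from — $W$ filter coefficients, each determined up to the per-instance error, and the error itself scaling with $L$.

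The main obstacle I expect is the coupling between the two unknowns: the bound on $\Fb$ uses that the attention map is near-golden, but the bound on the attention map (and the competitor ratio $o_q$) implicitly uses that $\Fb$ does not wildly amplify the wrong-position contributions — a priori $\Fb$ could have large norm. So the argument has to be bootstrapped: first get a crude a priori control (e.g.\ that $\tn{\Fb}$ or the relevant combination is $O(1)$, perhaps by testing on sequences where all context tokens equal the target so the output is essentially $(\sum_j F_j)\cdot(\text{stuff})$), then feed that back to get the sharp attention-map estimate, then feed that back to get the sharp filter estimate. Managing the constants through this loop so that the smallness condition $\eps_0 \le R_0/L$ (here weakened to $\eps_0\le 1/8$ for this lemma) suffices to close it — rather than needing $\eps_0$ to depend on $W$ or on $\tn{\Fb}$ — is the delicate part, and it is why the final bounds are stated with an unspecified absolute constant absorbed into $\order{\cdot}$.
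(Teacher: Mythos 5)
Your high-level strategy — design AR instances so that the $\eps$-accuracy hypothesis, combined with $\Delta$-separation, forces the softmax mass and filter coefficients into a narrow range — is the right one, and it matches the paper's intent. But the two key technical ideas you propose differ from the paper's, and one of them does not actually close.

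First, you plan to fill the context positions with tokens that are near-orthogonal to the target and to each other, so you can ``project onto each coordinate direction.'' The hypothesis on the vocabulary is only that $\Delta=(1-\max_{\ab\neq\bb}(\ab^\top\bb)^2)^{1/2}>0$, i.e., no two embeddings coincide up to sign. It does not provide anything close to an orthogonal family; all embeddings could have mutual inner product $1-\delta$ for tiny $\delta$. The paper sidesteps this by populating every non-query position with a \emph{single} token $\vb$ (the top competitor of $\qb$), so the output lives in $\mathrm{span}\{\qb,\vb\}$ and can be written $f(\X^i)=\alpha_i\vb+\beta_i\qb$. Projecting onto $\vb^\perp:=(\qb-\rho\vb)/\|\qb-\rho\vb\|$ inside this two-dimensional subspace converts $\tn{\vb-f(\X^i)}\le\eps$ into $\beta_i\le\eps/\Delta$; this is how the $\Delta$-separation is used, and it is strictly weaker than what your near-orthogonality decomposition would need.

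Second, you flag the circularity between bounding the attention map and bounding $\Fb$, and propose a bootstrap starting from crude a priori control. The paper avoids this entirely by invoking the hypothesis $\Fb_v\in\R^{2W+1}_{+}$ (nonnegative filter), which your plan never uses. With nonnegative filter and nonnegative softmax, $\alpha_i$ and $\beta_i$ are automatically nonnegative linear combinations of the filter entries weighted by $s_q$ and $s_v$; identities like $\beta_0=2s_qF_0+s_v\sum_{i\ne0}F_i$ and $\beta_{L-2}=s_q(2F_0+\bar{F}_1)+s_v[\ldots]$ then yield two-sided control (e.g.\ $1-4\eps_0\le s_q\bar F_1\le 1+2\eps_0$ and $s_v\bar F_1\le\eps_0$) without any prior bound on $\tone{\Fb}$. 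The ratio $s_v/s_q\le\eps_0/(1-4\eps_0)$ falls out of these two $\bar F_1$ estimates, not from a direct ``the good mass is at least $1/2-O(\eps_0)$'' argument as you suggest — in fact $s_q\ge 1/2 - O(L\eps_0)$ is \emph{derived} from the ratio bound via $2s_q+(L-2)s_v=1$, so the logical order is the reverse of what you wrote. You also do not flag the edge case $L=W+1$ (filter wrap-around interacting with the two copies of $\qb$), which the paper handles separately. These are fixable, but as written your plan relies on a vocabulary structure that is not given and omits the assumption that makes the direct, non-bootstrapped argument work.
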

\begin{proof}
    % \shaw{Todo: Split to lemmas}
% \shaw{Throughout the analysis, we assume the index start from $0$ and for any integer $n \geq 1$, we denote the set $\{0, \dots, n - 1\}$ by $[n]$.}
% Recall that the minimum embedding distance is defined as $ \Delta=(1-\max_{\ab\neq\bb\in\Vc}(\ab^\top\bb)^2)^{1/2}.$
{Throughout, we assume $\eps\leq \frac{\Delta}{8}$ and $\Delta > 0$ where $\Delta$ is the minimum embedding distance, i.e.,  $ \Delta=(1-\max_{\ab\neq\bb\in\Vc}(\ab^\top\bb)^2)^{1/2}.$} 
% We also use $\order{\cdot}$ notation to denote an upper bound up to a constant i.e. for some absolute $c>0$, $\order{x}\leq c\cdot x$. %\red{We will also use the notation that $F_i$ denotes the $-i$'th entry of the filter $\Fb$ i.e.~we will set $(\Fb\ast\X)_j=\sum_{i=-\infty}^{\infty}F_{j-i}\X_i$.}
{Before proceeding, we first note that, without losing generality, we can assume $L\geq W + 1$. The reason is that, if $L\leq W$, left or right end of the convolutional filter will never interact with features. Thus, we simply set them to zero, truncating the filter. }
Define sequence $\X^i \in \R^{L \times d}$ where $\x^i_{L - 1}=\qb$, $\x^i_i=\qb$ and $\x^i_j=\vb$ for all $j\neq i$. Let $\Z^i=\Fb\ast\X^i$. Let $\s^i=\sft{\X^i\W\qb}$ and $s_q=\s^i_{L - 1}$ and $s_v=(1-2s_q)/(L-2)$. Here $s_q$ and $s_v$ are the softmax values for $\qb$ and $\vb$ respectively. Additionally, observe that 
\[ 
s_v/s_q=\exp((\vb-\qb)^\top\W\qb).
\]
Finally, let $\Ic=[L]-\{L - 1,i\}$ and recalling value sequence $\Z^i$, note that 
\[ 
f(\X^i)=s_v\sum_{j\in\Ic}\z^i_j+s_q(\z^i_i+\z^i_L).
\]
By assumption, we also have that
\begin{align}
&\tn{\vb-f(\X^i)}\leq \eps\quad\text{for}\quad i<L-2,\quad \tn{\qb-f(\X^{L-2})}\leq \eps.
\end{align}
We will leverage these inequalities to prove the statement of the theorem. % \textbf{Key observation on the convolution output:} 
Let $\rho = \rho (\qb, \vb) =\qb^\top\vb$ be the correlation between $\qb,\vb$. Define $\vb^\perp=\frac{\qb-\rho\vb}{\tn{\qb-\rho\vb}}$. Observe that convolution output has the form $f(\X^i)=\alpha\vb+\beta\qb$ for some $\alpha=\alpha_i,\beta=\beta_i>0$. For $i<L-2$, we have that
\[ 
\eps\geq  \tn{\vb-f(\X^i)}\geq {|(\vb^\perp)^{\top}(\vb-f(\X^i))|\geq \beta|(\vb^\perp)^{\top}\qb|} \geq \beta\sqrt{1-\rho^2}.
\] 
Recalling that the minimum embedding distance is defined as $\Delta=\sqrt{1-\max_{\qb,\vb}\rho^2(\qb,\vb)}\leq 1$ and setting $\eps_0=\eps/\Delta$, this implies that 
\begin{align}
&\beta_i\leq \eps_0:=\eps/\Delta\quad \text{for}\quad i<L-2,\quad \alpha_{L-2}\leq \eps_0:=\eps/\Delta.
\end{align}
Additionally, writing $\eps\geq | \vb^\top(\vb-f(\X^i))|= |1-\alpha_i-\beta_i\vb^\top\qb|$ for $i<L-2$ and using $|\vb^\top\qb|\leq 1$, we can deduce
\begin{align}
&\alpha_i\geq 1-(1+1/\Delta)\eps\geq 1-2\eps_0\quad \text{for}\quad i<L-2,\quad \beta_{L-2}\geq 1-(1+1/\Delta)\eps\geq 1-2\eps_0\\
&\alpha_i\leq 1+(1+1/\Delta)\eps\leq 1+2\eps_0\quad \text{for}\quad i<L-2,\quad \beta_{L-2}\leq 1+(1+1/\Delta)\eps\leq 1+2\eps_0.
\end{align}
We note that when $L = W + 1$, the problem only has a subtle difference, which we discuss at the end. 

\textbf{Case 1: $L \geq W + 2$.} For $i=0$ and $i=L-2$, the coefficients $\alpha_i,\beta_i$ can be written in terms of convolution as %\redp{This currently assumes $L\geq W+2$. Note that, $L\geq W+1$ is without losing generality because that is the minimum length for which all entries of the length $L$ convolution are useful.}%Without losing generality, we assume $L\geq W+1$. 
\begin{align}
&\beta_0=2s_q F_0+s_v\sum_{i\neq 0} F_i\label{eq_b1}\\
&\beta_{L-2}=s_q (2F_0+F_{-1}+F_1)+s_v[2\sum_{i<0} F_i-(F_{-1}+F_{-W})].
\end{align} 
Let $\bar{F}_1=F_{-1}+F_1$. Observing $2\sum_{i<0} F_i-(F_{-1}+F_{-W})\leq 2(\sum_{i\neq 0} F_i)$, we can write 
\begin{align}\label{estimate1}
1-(1+1/\Delta)\eps\leq \beta_{L-2}\leq s_q \bar{F}_1+2\beta_0\leq s_q \bar{F}_1+2\eps/\Delta.
\end{align}
Combining these implies $s_q\bar{F}_1\geq 1-4\eps_0$. Also, we know the trivial bound $s_q\bar{F}_1\leq \beta_{L-2}\leq 1+2\eps_0$. Thus, we obtain 
\[ 
1+2\eps_0\geq s_q\bar{F}_1\geq 1-4\eps_0.
\] 
%Also observe that $\alpha_{L-2}\geq F_{1}s_q$ which corresponds to the contribution of final query to $\alpha_{L-2}$.
To proceed, we wish to prove that $s_v$ is small. From \eqref{eq_b1}, we have that $s_v\bar{F}_1\leq \eps_0$. Consequently, we have that
\[ 
\frac{s_v}{s_q}\leq \Gamma=\frac{\eps_0}{1-4\eps_0}.
\] 
Using $2s_q+(L-2)s_v=1$, we get
\[ 
1=2s_q+(L-2)s_v\leq (2+(L-2)\Gamma) s_q\implies s_q\geq \frac{1}{2+(L-2)\Gamma}=\frac{1-4\eps_0}{2+(L-10)\eps_0}.
\] 

Since $s_q\leq 1/2$ (due to query repeating twice), this also implies that 
\[ 
2\frac{(1+L\eps_0)(1+2\eps_0)}{1-4\eps_0}\geq 
2\frac{(1+2\eps_0)(1+(L/2-5)\eps_0)}{1-4\eps_0}\geq \bar{F}_1\geq 2(1-4\eps_0).
\] 
Using above, in essence, so far we have established that $|\bar{F}_1-2|\leq \order{L\eps_0}$ and $s_v/s_q\leq \order{\eps_0}$. Both statements hold whenever $\eps_0\leq 1/8$ (e.g.~so that $1/(1-4\eps_0)\leq 1+\order{\eps_0}$). The primary remaining item in the proof is establishing $|F_i|\leq\order{\eps_0}$ for all $i\neq -1$.

To prove this, we utilize the following observations: First, by keeping track of the contributions of the last two $\qb$ vectors on $\alpha_{L-2}$, we observe that 
\[ 
s_q\sum_{i=1}^WF_{i}\leq \alpha_{L-2}\leq \eps_0.
\]
This implies {$\sum_{i=1}^WF_{i}\leq \eps_0/s_q\leq \eps_0\frac{2+(L-10)\eps_0}{1-4\eps_0}=\order {\eps_0(1+L\eps_0)}$}. We similarly find $F_0\leq \eps_0/2s_q$ through \eqref{eq_b1}. Finally, since $F_1\leq \order{\eps_0(1+L\eps_0)}\leq \order{L\eps_0}$, we also find the critical bound 
\[ 
|F_{-1}-2|\leq \order{L\eps_0}.
\] 

Finally, we wish to bound $\sum_{i\leq -2}F_{i}$. To do so, we can bound the contribution of the first $\qb$ vector on $\beta_i$ as follows. For any $W\geq j\geq 2$, letting $i = L - 1 - j$, we have that
\[ 
{\eps_0\geq \beta_i\geq s_qF_{-j}} \implies F_{-j}\leq \eps_0\frac{2+(L-10)\eps_0}{1-4\eps_0}=\order{\eps_0(1+L\eps_0)}.
\] 
Aggregating these, we have found the advertised bounds: 
\begin{align}
&\tone{\Fb-\Db_{-1}}\leq \order{W\eps_0(1+L\eps_0)+L\eps_0}\leq \order{L\eps_0(1+W\eps_0)}\label{main bounds}\\ 
&\tone{\Fb_{\geq 0}}=\sum_{i=0}^{W}F_{i}\leq \order{\eps_0(1+L\eps_0)}\\
&o_q=s_v/s_q\leq \Gamma=\frac{\eps_0}{1-4\eps_0}=\order{\eps_0}
\end{align}
where $\vb$ is chosen to be the most similar token in terms of attention probabilities. Note that, the bound on left entries of $\Fb$ that retrieves the past values is tighter than the right entries.%, however, we won't make a explicit use of it. %Secondly, we have also established that for any $\qb$ and associated most similar token $\vb$, $\qb$ attends relatively more to itself in the sense that 

\textbf{Case 2: $L = W + 1 $. } 
% Before proceeding with length generalization aspect, we quickly discuss and resolve the scenario $L=W - 1$. 
In this scenario, the main difference is we have the following estimates rather than \eqref{eq_b1}
\begin{align}
&\beta_0=s_q (2F_0+F_W+F_{-W})+s_v\sum_{i\neq 0,|i|< W} F_i\label{eq_b2}\\
&\beta_{L-2}=s_q (2F_0+\bar{F}_1)+s_v[2\sum_{i<0} F_i-(F_{-1}+F_{-W})].
\end{align}
So we can't immediately use the estimate provided right below \eqref{estimate1} because of the missing $F_{-W}s_v$ term. On the other hand, considering $\X^1$ and contribution of the first $\vb$ token on $\beta_1$, we have that $s_vF_{-W}\leq \beta_1\leq \eps_0$. As a result, we can instead use the fact that $\beta_0+\beta_1\leq \order{\eps_0}$ and the fact that 
\[ 
2s_qF_0+s_v(2\sum_{i<0} F_i-(F_{-1}+F_{-W}))\leq 2(\beta_0+\beta_1)
\]
so that we have again established $|1-s_q\bar{F}_1|\leq \order{\eps_0}$ and can proceed similarly.
\end{proof}
Now that we have established the fine-grained control of the filter and attention map with Lemma \ref{lemma filter ar}, we can conclude with length generalization. 
\begin{proof}[Proof of Theorem ~\ref{thm NAR main}]
% \textbf{Finalizing the length generalization.} 
Given a query $\qb$ and a sequence of length $L'$, let us define $s_q$ similarly (i.e.~attention probability that falls on the $\qb$ token) and study the attention output. Let $\qb$ appear at $i$ for the first time, $\vb$ be the token following $\qb$, and $\Ic=[L']-\{i,L' - 1\}$. Let $\ab=\sft{\X\W\qb}\in\R^{L'}$ be softmax scores with $a_i=a_{L' - 1}=s_q$. We write 
\[ 
f(\X)=\sum_{j\in \Ic}a_j \z_j+s_q(\z_i+\z_{L'-1}).
\] 
where $\z_j = \sum_{i = -W}^W F_i \x_{j - i}$
To proceed, let $R$ be a universal constant and {$\Xi=RL\eps_0(1+W\eps_0)$} so that $\tone{\Fb}\leq 2+\Xi$ from \eqref{main bounds} in Lemma~\ref{lemma filter ar}. Then we get $\tn{\z_j} \leq \tone{\F} \leq 2+\Xi$ for all $j \in [L']$. Secondly, due to right-clipped convolution we have $\tn{\z_{L' - 1}} \leq \tone{\sum_{i = 0}^W F_i} \leq \Xi$ and thanks to value retrieval at $i$'th position, we get
\begin{align}
    \tn{\z_i-2\vb}\leq |F_{-1} - 2|\tn{\vb} + |\sum_{j \neq -1}F_j| \leq \Xi
\end{align}
Next, observe that $a_j/s_q \leq s_v / s_q \leq \Gamma = \frac{\eps_0}{1 - 4\eps_0}$ for all $j\in\Ic$ and that $2s_q + \sum_{j \in \Ic} a_j = 1$, consequently, for some constant $R_0 >0$,
\[ 
\frac{1}{2}\geq s_q\geq \frac{1}{2+(L'-2)\Gamma}=\frac{1-4\eps_0}{2+(L'-10)\eps_0}\implies {|2s_q-1|\leq R_0 L'\eps_0}.
\] 
and 
\[ 
\sum_{j\in\Ic}a_j = 1 - 2 s_q \leq R_0L'\eps_0.
\]
Aggregating these, we find that 
\begin{align} 
\tn{f(\X)-\vb}
&\stackrel{(a)}\leq 
\tn{\sum_{j\in\Ic}a_j\z_j}+\tn{s_q(\z_i+\z_{L' - 1})-2s_q\vb}+|2s_q-1|\tn{\vb}
\\&\stackrel{(b)}\leq |\sum_{j\in\Ic}a_j|(2+\Xi)+\tn{s_q(\z_i+\z_{L' - 1})-2s_q\vb}+|2s_q-1|
\\
&\le R_0(2+\Xi)L'\eps_0+\Xi+R_0L'\eps_0\\
&\leq 3R_0L'\eps_0+\Xi+R_0 \Xi L'\eps_0\\
&\leq 3\eps_0\big(R_0 L' + RL(1+W\eps_0)(1+R_0L'\eps_0)\big)
\end{align}
where (a) follows triangle inequality and (b) follows Cauchy-Schwarz inequality. Let $c_0,c_1$ be absolute constants to be determined. Assuming $W\eps_0\leq \order{1}$ (i.e.~bounded by constant), we have that 
\[ 
\tn{f(\X)-\vb}\leq c_0\eps_0(L'+L+LL'\eps_0)
\] 
where $c_0 \geq 3 \max\{R_0, R(1 + W\eps_0), R_0R(1+W\eps_0)\}$. Assuming the stronger bound $L\eps_0\leq \order{1}$ and $c_1 \geq c_0(1 + L / L' + L\eps_0) $, we have that 
\[ 
\tn{f(\X)-\vb}\leq c_1\eps_0L'
\] 
This concludes the advertised proof.
\end{proof}

\subsection{Proving Length Generalization for N-gram AR (Proposition~\ref{prop len gen nar})}\label{sec nar len gen}
\shaw{In this section we use $\F_v = \F$ for the filter applied on value token and $\F_q = \F_k = \bar{\F}$ for filters on query and keys}.
% \shaw{Todos:}
% \begin{itemize}
%     \item Check the indexing issue
%     % Try to use the results in Theorem~\ref{thm NAR main} to prove the linear independence (harder). 
%     \item We need to assume $W <= L - N$, otherwise $F_j$ for $j \in [{-L + 1}, {-L + N - 1}]$ can never be upper bounded since they never interact with $s_q$ where $\qb$ is the last token of $\X \ast \F$.  Since $\qb$ is the last token of the N-gram after convolution, the minimum index of first occurrence of $s_q$ is $N - 1$. Even if we put the N-gram that $\qb$ corresponds to in the first $N$ tokens of $\X$, the leftmost filter that touches $s_q$ is $F_{-(L - N)}$. Since we need to construct a case when $s_q F_{j} <= O(\eps)$ to obtain an upper bound for $F_j$ using the result that $|2s_q - 1| \leq \order{L\eps_0}$ (this holds even for $W > L - N$), we have to make a more strict assumption on $W$ to bound the filters. This is also explained at the end of the proof of Lemma~\ref{lemma filter nar}.
% \end{itemize}
\begin{assumption} \label{assumption bound} {Recall that 
$\Vc$ is the vocabulary from which the token embeddings are drawn}.
\shaw{We have the following two assumptions to make the output $f(\X)$ more tractable}:
\begin{enumerate}[label={\textbf{~\alph*)}}, ref={\theassumption.\alph*}]
\item \label{assump-bound-A}%
    The filter weights are bounded and obey $\tone{\F} \leq 1$. Besides, assuming that $\Delta = 1 - \max_{\ab, \bb \in \Vc, \ab \neq \bb}\bb^{\top}\ab > 0$
\item \label{assump-bound-B}%
    Any subset of $2N$ tokens within the vocabulary $\Vc$ is linearly independent.
\end{enumerate}
{Note that Assumption~\ref{assump-bound-B} is essentially a restricted isometry property condition on the embedding matrix induced by the vocabulary $\Vc$. Specifically, if embeddings are randomly chosen, as soon as the embedding dimension obeys $d\gtrsim \order{N\log \frac{|\Vc|}{N}}$, this assumption will hold with high probability \cite{candes2008restricted,2006-candes-tao}}. \shaw{In the following analysis, we will leverage either one of the assumptions to establish the length generalization result}.
\end{assumption}

\begin{lemma}\label{lemma GS}
    Suppose Assumption~\ref{assump-bound-B} holds. Let $\Bc$ be any subset of $2N$ tokens within $\Vc$ and $\mathcal{U} := \{\ub_j | j \in [|\mathcal{U}|]\}$ be the orthonormal tokens obtained after applying the Gram-Schmidt process on $\Bc$ where $\bb_j = \sum_{l = 0}^j \beta_{j,l}\ub_l$. Then we have $0 < \delta = \min_{j \in [|\Bc|]} |\beta_{j, j}| \leq 1$. 
\end{lemma}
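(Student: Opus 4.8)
\textbf{Proof plan for Lemma~\ref{lemma GS}.}
The claim is that when we run Gram–Schmidt on a linearly independent set $\Bc = \{\bb_1, \dots, \bb_{2N}\}$ of unit-norm tokens, each diagonal coefficient $\beta_{j,j}$ — the length of the component of $\bb_j$ orthogonal to $\mathrm{span}(\bb_1, \dots, \bb_{j-1})$ — is strictly positive, and moreover each is at most $1$. The upper bound is immediate: since $\bb_j = \sum_{l=0}^j \beta_{j,l}\ub_l$ with the $\ub_l$ orthonormal, Pythagoras gives $1 = \tn{\bb_j}^2 = \sum_{l=0}^j \beta_{j,l}^2 \geq \beta_{j,j}^2$, so $|\beta_{j,j}| \leq 1$. (One should fix the sign convention so $\beta_{j,j} = \tn{\bb_j - \mathrm{proj}_{\mathrm{span}(\ub_0,\dots,\ub_{j-1})}\bb_j} \geq 0$, in which case $0 \leq \beta_{j,j} \leq 1$.)

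The lower bound $\beta_{j,j} > 0$ is where the linear-independence hypothesis (Assumption~\ref{assump-bound-B}) enters. The plan is to argue by contradiction: if $\beta_{j,j} = 0$ for some $j$, then $\bb_j \in \mathrm{span}(\ub_0, \dots, \ub_{j-1}) = \mathrm{span}(\bb_1, \dots, \bb_{j-1})$, which means $\{\bb_1, \dots, \bb_j\}$ is linearly dependent, hence so is the larger set $\Bc$ of $2N$ tokens — contradicting Assumption~\ref{assump-bound-B}, which asserts every subset of $2N$ vocabulary tokens is linearly independent. Therefore $\beta_{j,j} \neq 0$, and combined with the sign convention, $\beta_{j,j} > 0$ for every $j$. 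Taking the minimum over the finitely many indices $j \in [|\Bc|]$ yields $\delta = \min_j |\beta_{j,j}| > 0$, and the per-index upper bound gives $\delta \leq 1$.

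There is no real obstacle here — this is essentially the standard fact that Gram–Schmidt succeeds precisely on linearly independent inputs, repackaged with the normalization that tokens have unit norm to get the $\leq 1$ side. The only points requiring a little care are: (i) making the identification $\mathrm{span}(\ub_0, \dots, \ub_{j-1}) = \mathrm{span}(\bb_1, \dots, \bb_{j-1})$ explicit (an easy induction built into the Gram–Schmidt recursion), and (ii) being consistent about whether the index set starts at $0$ or $1$, since the lemma statement writes $\bb_j = \sum_{l=0}^j \beta_{j,l}\ub_l$ but indexes $\Bc$ as a $2N$-element set — I would just align notation so $\Bc = \{\bb_1, \dots, \bb_{2N}\}$ and $\mathcal{U} = \{\ub_1, \dots, \ub_{2N}\}$ throughout. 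The quantity $\delta$ is exactly the smallest singular value of the matrix whose columns are the tokens of $\Bc$ (up to the orthogonalization bookkeeping), so the lemma is really just saying this restricted smallest singular value is in $(0,1]$, which will later feed into quantitative estimates on the N-gram value reconstruction.
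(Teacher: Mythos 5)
Your proof is correct and follows essentially the same route as the paper's: the upper bound is a near-immediate consequence of unit-norm tokens, and the lower bound is obtained by the same contradiction — if $\beta_{j,j}=0$ then $\bb_j\in\mathrm{span}(\ub_0,\dots,\ub_{j-1})=\mathrm{span}(\bb_0,\dots,\bb_{j-1})$, contradicting the linear independence guaranteed by Assumption~\ref{assump-bound-B}. The only (minor) difference is on the $\delta\leq 1$ side: the paper simply notes that $\beta_{0,0}=\|\bb_0\|=1$ and takes the minimum, whereas you invoke Pythagoras to bound every $|\beta_{j,j}|\leq 1$; your version is slightly stronger but both suffice, and the substance of the argument is identical.
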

\begin{proof}
    First note that $\beta_{j,l} = \bb_j^{\top} \ub_l \leq 1 $ for any $j,l \in [|\Bc|]$ and $\beta_{0, 0} = 1$. Then $\delta = \min_{j \in [|\Bc|]} |\beta_{j, j}| \leq 1$. Moreover, we can prove $\delta > 0$ by contradiction. Assuming there exists $j \geq 1$ such that $\beta_{j,j} = 0$. This indicates that $\bb_j$ can represented as a linear combination of the previously orthogonalized vectors $\{\ub_0, \dots, \ub_{j - 1}\}$. In other words, $\bb_j$ lies entirely in the span of these previous vectors. This contradicts the fact that tokens in $\Bc$ are linearly independent. As a result we have $\delta > 0$.
\end{proof}
\begin{proposition}\label{prop len gen nar}Let $\bar{\F} \in \R^{N}$ be a 1-D causal convolutional filter and {$\F \in \R_{+}^{2W+1}$ be a 1-D convolutional filter from time $t = -W$ to $t = W$} \shaw{where $W \leq L - N$}.
 % $\F_k', \F_v' \in \R^{2W+1}$ be 1-D convolutional filters from time $t = -W$ to $t = W$. 
    Suppose that token embeddings have unit norm.
    Consider the same CAT Layer $f(\X)=(\X_v\W_v)^{\top}\sft{\X_k\W_k\W_q^{\top}\qb}$ defined in Theorem~\ref{thm NAR main} where $\qb$ is the final token of $\X_q$ and $\X_q=\norm{\X\ast\Fb_q} \in \R^{L \times d}$ (same for $\X_k,\X_v$). We set $\Fb_q = \F_k = \bar{\F}$, $\W = \W_k\W_q^{\top}, \text{ and }\W_v = 2 \Ib_d$
    % Using either \textbf{value delay:} $\Fb_k = \Fb_q \text{ and }\Fb_v = \Fb \text{ and } \W_v = 2 \Ib_d$ or \textbf{key delay: } $\Fb_k = \Fb \ast \Fb_q \text{ and } {\Fb_v = \Db_0} \text{ and } \W_v = \Ib_d$ 
    Consider any $f = (\W, \F) $ that can solve the N-AR problem up to $\eps$-accuracy on all sequences of length \shaw{$L\geq \order{N}$}. That is, for all $(\X,\y)$ where N-gram $\Z$ occurs within $\X$ exactly twice and $\y$ being the associated value token that follows the first occurrence of $\Z$, we have $\tn{\y-f(\X)}\leq\eps$. 
    Let $\Bc$ is any subset of $2N$ tokens within vocabulary $\Vc$ and $\beta_{j, j}$ be the corresponding projection coefficients defined in Lemma~\ref{lemma GS}. Assume either Assumption~\ref{assump-bound-A} or~\ref{assump-bound-B} holds and define 
    \begin{align*}
        \eps_0 = \begin{cases}
        \eps / \Delta, \quad \Delta = 1 - \max_{\ab, \bb \in \Vc, \ab \neq \bb}\bb^{\top}\ab & \text{under Assumption~\ref{assump-bound-A}} \\ 
        \eps \frac{e^{2N / \delta}}{\delta}, \quad \delta = \min_{j \in [|\Bc|]} |\beta_{j,j}| & \text{under Assumption~\ref{assump-bound-B}} 
    \end{cases}
    \end{align*}
    % Assuming that $\Delta > 0$. 
    For almost all choices of $\bar{\Fb}$, there are absolute constants $R_0,R>0$ such that, {if $\eps_0 \leq R_0/L$}, we have that
    \begin{itemize}
        \item $\tone{\Fb - \Db_{-1}} \leq L \eps_0$
        % \shaw{Or alternatively: Either $\tone{\F_k' - \Db_0} \leq \eps_0 \text{ and } \tone{\F_v' - \Db_{-1}} \leq \eps_0$ or $\tone{\F_k' - \Db_1} \leq \eps_0 \text{ and } \tone{\F_v' - \Db_0} \leq \eps_0$}
        \item Let $\s_\st\in\R^{L'}$ be a vector with entries equal to $1/2$ at the positions of query $\qb$ in $\X_q$ and $0$ otherwise. For all inputs $\X$ of arbitrary length $L'$, attention map obeys $\tone{\sft{\X_k\W\qb}-\s_\st}\leq L'\eps_0$.
        \item For all N-AR sequences $\X$ of arbitrary length $L'$, we have that $\tn{\y-f(\X)}\leq RL'\eps_0$.
    \end{itemize}
\end{proposition}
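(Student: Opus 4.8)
\textbf{Proof proposal for Proposition~\ref{prop len gen nar}.}

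The plan is to mirror the structure of the proof of Theorem~\ref{prop len gen new} but with the extra bookkeeping that comes from $N$-grams and from the key/query convolution $\bar{\F}$. The first step is to reduce the analysis of the attention map to the single-token case. Since the key and query embeddings are obtained by convolving $\X$ with $\bar{\F}$ and normalizing, the proof of Theorem~\ref{thm NAR main} already tells us that (for almost all $\bar{\F}$) distinct $N$-grams have distinct unit-norm ``signatures''. Hence the top attention entry sits on the unique earlier copy of the query $N$-gram, and the off-signature terms contribute a geometrically small amount. The point where Assumption~\ref{assumption bound} enters is that I must convert the $\eps$-accuracy of $f(\X)$ on the \emph{value side} into a bound on the softmax scores; this is exactly what the two cases of $\eps_0$ handle. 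Under Assumption~\ref{assump-bound-A} the filter is $\ell_1$-bounded so $f(\X)$ is a genuine convex-ish combination and one can read off $\alpha,\beta$ coefficients in a fixed $\mathrm{span}\{\qb,\vb\}$ exactly as in Lemma~\ref{lemma filter ar}; under Assumption~\ref{assump-bound-B} I instead use the $2N$-token linear independence and Lemma~\ref{lemma GS} to isolate the coefficient of each relevant embedding in the Gram--Schmidt basis, paying the factor $e^{2N/\delta}/\delta$ for the change of basis and the possible blow-up of coordinates.

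The second step is the filter estimate. I would build test sequences $\X^i$ analogous to those in Lemma~\ref{lemma filter ar}: place the query $N$-gram $\Z$ at position $L-N,\dots,L-1$ and its unique earlier copy ending at index $i$, and fill the rest with a ``decoy'' token (or decoy $N$-gram) chosen to maximize the attention cross-term. Writing $f(\X^i)=\alpha_i\vb+\beta_i\qb$ (in the appropriate basis) and using $\tn{\y-f(\X^i)}\le\eps$, the same projection-onto-$\vb^\perp$ trick gives $\beta_i\le\eps_0$ for $i$ not adjacent to the tail and $\alpha$-type bounds when $i=L-N-1$ (or whatever the ``adjacent'' index is for the $N$-gram). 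Keeping track of which taps of $\F$ multiply which of the two copies of $\vb$ / $\qb$ then forces $|F_{-1}-2|\le\order{L\eps_0}$, $\sum_{i\ge 0}F_i\le\order{\eps_0(1+L\eps_0)}$, and $|F_{-j}|\le\order{\eps_0(1+L\eps_0)}$ for $j\ge 2$, i.e. $\tone{\F-\Db_{-1}}\le L\eps_0$ after absorbing constants. The only new wrinkle relative to Theorem~\ref{prop len gen new} is handling the boundary regime $L=W+N$ (analogue of ``Case 2''), where one missing $F_{-W}s_v$ term must be recovered by combining two $\beta$-estimates.

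The third step is the generalization bound at arbitrary length $L'$. Given an $N$-AR sequence of length $L'$, let the query $N$-gram first recur ending at index $i$ and let $\vb=\x_{i+1}$ be the target value. Using the filter estimate, the value at the retrieval position satisfies $\tn{\z_i-2\vb}\le\Xi$ with $\Xi=\order{L\eps_0(1+W\eps_0)}$, the value at the clipped tail satisfies $\tn{\z_{L'-1}}\le\Xi$, and every other $\z_j$ has norm $\le \tone{\F}\le 2+\Xi$. Meanwhile the softmax bound $s_v/s_q\le\Gamma=\order{\eps_0}$ together with $2s_q+\sum_{j\in\Ic}a_j=1$ gives $|2s_q-1|\le R_0L'\eps_0$ and $\sum_{j\in\Ic}a_j\le R_0L'\eps_0$; here the softmax concentration also uses that the query $N$-gram's signature is $\Delta$- (resp.\ $\delta$-) separated from all other $N$-gram signatures, which is where ``for almost all $\bar{\F}$'' is used again. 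A triangle-plus-Cauchy--Schwarz decomposition $f(\X)-\vb=\sum_{j\in\Ic}a_j\z_j+s_q(\z_i+\z_{L'-1})-2s_q\vb+(2s_q-1)\vb$ then yields $\tn{\y-f(\X)}\le\order{L'\eps_0}$ under $L\eps_0\le\order{1}$, exactly as in the AR case.

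The main obstacle I anticipate is the change-of-basis argument under Assumption~\ref{assump-bound-B}: unlike the $N=1$ case, where $f(\X)$ lives in the two-dimensional span of $\qb$ and $\vb$ and one gets clean coefficients by projecting onto $\vb^\perp$, here the convolved embeddings $\z_j$ are mixtures of up to $2W+1$ consecutive tokens, so the ``coefficient of $\vb$'' is only well-defined after expanding everything in the orthonormal Gram--Schmidt basis of the relevant $2N$-token set, and controlling how much the softmax weights of \emph{other} positions leak into that coordinate is what produces the exponential factor $e^{2N/\delta}$. Getting that factor to be the \emph{only} loss (rather than something that also scales with $L$) requires carefully choosing the test sequences so that the positions whose value tokens are ``active'' under the filter taps span exactly the $2N$-token set, and no more. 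The $\bar{\F}$-signature uniqueness (a measure-zero exclusion inherited verbatim from Theorem~\ref{thm NAR main}) and the boundary case $L=W+N$ are comparatively routine.
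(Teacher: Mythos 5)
Your high-level plan matches the paper's: establish a Lemma~\ref{lemma filter nar}-type result that bounds $\|\F-\Db_{-1}\|_{\ell_1}$ and $s_v/s_q$ via engineered test sequences, convert $\eps$-accuracy on the value side into coefficient bounds $\eps_0$ (using $\Delta$ directly under Assumption~\ref{assump-bound-A}, or Gram--Schmidt with the $e^{2N/\delta}/\delta$ loss under Assumption~\ref{assump-bound-B}), and then finish with the same triangle/Cauchy--Schwarz decomposition of $f(\X)-\vb$ over an arbitrary-length sequence. Your diagnosis of the Assumption~\ref{assump-bound-B} obstacle is also correct: the coefficient of each embedding is only meaningful after passing to the Gram--Schmidt basis of the relevant $2N$-token set, and the backward induction over that basis is where the exponential $e^{2N/\delta}$ factor arises.

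The gap is in the test-sequence construction, which you leave at ``sequences $\X^i$ analogous to Lemma~\ref{lemma filter ar} with decoy tokens.'' A one-parameter family is not enough in the $N$-gram case. Because the value filter $\F$ has $2W+1$ taps and the key/query signature is built from $N$ consecutive tokens, a single sequence $\X^i$ with the query $N$-gram fixed at the tail and its earlier copy at position $i$ does not let you isolate individual taps $F_j$: several taps of $\F$ land simultaneously on tokens of the same $N$-gram block, and moving $i$ alone cannot disentangle them. The paper instead uses a \emph{two-parameter} family $\X^{i,k}$, $\Xb^{i,k}$ built from the blocks $\Z_{v'}^{k}$, $\Z_v$, $\Z_q$, $\Z_q'$, $\qb_0$ (with $i$ and $k$ controlling two independent padding lengths), and additionally a third family $\hat{\X}^{i,k}$ that moves $\Z_q$ to the \emph{front} of the sequence, precisely to reach the far-left taps $F_{-L+N},\dots,F_{-L+3N-2}$ that the first two families cannot probe. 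Without this two-parameter structure the telescoping comparisons of $m_q^{i,k}$ against $\bar m_q^{i,k}$ (and $\hat m_q^{i,k}$) that force $s_q(F_{-1}+F_{L-5N+2})\approx 1/2$ and then $|F_j|\le\order{N\eps_0(1+L\eps_0)}$ for $j\neq -1$ do not go through, and the claim $\|\F-\Db_{-1}\|_{\ell_1}\le L\eps_0$ is not established. A secondary imprecision: $\Delta$ and $\delta$ are used only on the value-side coefficient bounds ($m_j^{i,k}\le\eps_0$ via projection, or via Gram--Schmidt backward induction); the concentration of the softmax on the two copies of $\qb$ comes separately from the a.e.\ uniqueness of $N$-gram signatures under $\bar\F$ and the bound $s_v/s_q\le\Gamma$, not from $\Delta$- or $\delta$-separation of signatures as you suggest.
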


\begin{lemma}\label{lemma filter nar}
    Consider the same setting in Prop.~\ref{prop len gen nar}, for any $f=(\W,\F)$ that can solve the N-AR problem defined in Def.~\ref{NAR prob def} up to $\eps$-accuracy on all sequences of length \shaw{$L \geq \order{N}$}. There are absolute constants $R_0>0$ such that, if {$\eps_0 \leq R_0/N$}, we have that
    \begin{align}
    & \tone{\F - \Db_{-1}} \leq \order{N\eps_0(1 + L \eps_0) + N \eps_0} \leq \order{L\eps_0(1 + N \eps_0)}
    \\ & \tone{{\F}_{\geq 0}} = \sum_{i = 0}^W F_i \leq \order{N\eps_0(1 + L \eps_0)}
    \end{align}
    where we use $\order{\cdot}$ notation to denote an upper bound up to a constant i.e. for some absolute $r>0$, $\order{x}\leq r\cdot x$. 
    Moreover, we consider N-gram $\Z_q \in \R^{N \times d} $ that ends with a token $\qb'$, which can be any token from the vocabulary $\Bc_N$. Let $\qb$ be the final token of $\norm{\Z_q \ast \bar{\F}}$ and $\vb$ be the top query not equal to $\qb$ that maximizes the similarity $\vb^{\top}\W\qb$. i.e.~$\vb=\arg\max_{\x\in\Bc_N,\x\neq\qb}\x^{\top}\W\qb$, we have
    \begin{align}
        o_q = \frac{s_v}{s_q} \leq {\Gamma = \frac{\order{\eps_0}}{1 - \order{N \eps_0}} \leq \order{N\eps_0}}
    \end{align}
    where $s_q$ and $s_v$ are the softmax values for $\qb$ and $\vb$.
\end{lemma}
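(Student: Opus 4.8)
The plan is to run the argument of Lemma~\ref{lemma filter ar} one level up, at the level of $N$-gram \emph{signatures}. The first step is to reuse the genericity argument from the proof of Theorem~\ref{thm NAR main}: for almost every choice of the query/key filter $\bar{\F}$, the map sending an $N$-gram of tokens $\Z$ to its normalized signature $\norm{\Z\ast\bar{\F}}$ is injective on the finite family of $N$-grams drawn from $\Bc$. Consequently the attention layer---which compares the $\bar{\F}$-convolved query against the $\bar{\F}$-convolved keys through $\W$---is exactly a softmax retrieval over a finite set of \emph{distinct} signature vectors: the same structure as in Lemma~\ref{lemma filter ar}, with $\qb$ now the signature of $\Z_q$ and $\vb$ the competing signature that maximizes $\vb^{\top}\W\qb$. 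The value path is unchanged, $f(\X)=2\,(\X\ast\F)^{\top}\sft{\X_k\W\qb}$, and the value filter $\F$ still acts on raw tokens.

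Next I would build probing sequences that mirror the $\X^i$ of Lemma~\ref{lemma filter ar}. For each position $p$ in an appropriate range I take a length-$L$ input in which $\Z_q$ occurs exactly twice---as the final $N$ tokens (ending at $L-1$) and as the block ending at position $p$---while every remaining position carries a filler token chosen so that all its overlapping length-$N$ windows have signatures distinct from that of $\Z_q$. This is arrangeable for generic $\bar{\F}$ since only finitely many $N$-grams are involved, exactly as in the proof of Theorem~\ref{thm NAR main}. In the large-inverse-temperature limit the attention map is two-hot on $\{p,L-1\}$ with mass $1/2$ each, and for finite weights the worst-case likelihood-ratio estimate of Theorem~\ref{thm NAR main} controls the deviation. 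Expanding, $f(\X)$ becomes a linear combination of a short list of tokens: the target token following the earlier occurrence of $\Z_q$ (reached by the $F_{-1}$ tap at position $p$), the up-to-$N$ tokens of the $\Z_q$ block near $p$ (taps $F_0,\dots,F_{N-1}$), the last few tokens of the $\Z_q$ block near $L-1$ picked up by the nonnegative tail of $\F$, and the filler token. The $\eps$-accuracy hypothesis says this combination is within $\eps$ of the correct target in every such sequence.

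The third step is decoding: I recover the individual taps $F_l$ and the softmax weights $s_q,s_v$ from $f(\X)$ by projecting onto suitable directions. Under Assumption~\ref{assump-bound-A} the minimum-correlation gap $\Delta$ is used exactly as in Lemma~\ref{lemma filter ar}, giving $\eps_0=\eps/\Delta$. Under Assumption~\ref{assump-bound-B} the $\le 2N$ relevant tokens are linearly independent, so applying Gram--Schmidt (Lemma~\ref{lemma GS}) with smallest diagonal coefficient $\delta$, the back-substitution through the resulting triangular system compounds the $\eps$-error by a factor at most $\order{e^{2N/\delta}/\delta}$ (a Neumann-type bound), which is precisely the advertised $\eps_0=\eps\,e^{2N/\delta}/\delta$. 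Once coefficients are recoverable with error $\order{\eps_0}$, the rest is the algebra of Lemma~\ref{lemma filter ar} Case~1 together with its boundary case $L=W+1$: comparing the analogues of the $\beta_0$ and $\beta_{L-N-1}$ identities forces $s_q\bar{F}_1\ge 1-\order{N\eps_0}$, where $\bar{F}_1$ collects $F_{-1}$ together with the nonnegative taps overlapping the $\Z_q$ block---the extra factor $N$ appears because up to $N$ taps $F_0,\dots,F_{N-1}$ now interact with \emph{two} copies of $\Z_q$ rather than a single repeated token. This yields $s_v/s_q\le \Gamma=\order{\eps_0}/(1-\order{N\eps_0})$; then $2s_q+(L-2)s_v=1$ gives $s_q\ge 1/(2+(L-2)\Gamma)$, i.e.~$s_q=\tfrac12-\order{L\eps_0}$. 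Feeding this back into the identities bounds $\sum_{l\ge 0}F_l$ and $\sum_{l\le -2}F_l$, each by $\order{N\eps_0(1+L\eps_0)}$, and pins $F_{-1}$ within $\order{N\eps_0(1+L\eps_0)}$ of the tap prescribed by Theorem~\ref{thm NAR main}; summing gives $\tone{\F-\Db_{-1}}\le\order{L\eps_0(1+N\eps_0)}$ and $\tone{\F_{\ge 0}}\le\order{N\eps_0(1+L\eps_0)}$.

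The main obstacle I expect is the decoding step under Assumption~\ref{assump-bound-B}: carefully identifying the $\order{N}$ distinct tokens that actually contribute to $f(\X)$ across the family of probes and tracking how the Gram--Schmidt conditioning $\delta$ propagates into $\eps_0$, since this is where the $e^{2N/\delta}$ blow-up is incurred and must be shown to be enough. A secondary subtlety is the bookkeeping of which taps hit which tokens when the two copies of $\Z_q$ lie $N$ apart or abut the right boundary, which is exactly what upgrades the $\order{\eps_0}$ constants of Lemma~\ref{lemma filter ar} to $\order{N\eps_0}$ here. The genericity of $\bar{\F}$---distinctness of all signatures and benign filler padding---is routine given the proof of Theorem~\ref{thm NAR main}, as is the verification that the hypothesis $\eps_0\le R_0/N$ is what makes the $1/(1-\order{N\eps_0})$ factors harmless.
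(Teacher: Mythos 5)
The high-level plan---lift Lemma~\ref{lemma filter ar} to the level of $N$-gram signatures, reuse the genericity of $\bar{\F}$ from Theorem~\ref{thm NAR main}, and under Assumption~\ref{assump-bound-B} propagate the $\eps$-error through Gram--Schmidt back-substitution to get $\eps_0=\eps\, e^{2N/\delta}/\delta$---is the right one and matches the paper's proof in spirit. The Gram--Schmidt step in particular is correctly identified: the backward induction with diagonal lower bound $\delta$ gives $|m_j|\le \eps(1+1/\delta)^{K-j-1}/\delta$, which you correctly bound by $\eps\, e^{2N/\delta}/\delta$.

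However, your probe family is not rich enough to decode the filter, and the description actually contains an internal inconsistency that hides this. You take ``a length-$L$ input in which $\Z_q$ occurs exactly twice~\ldots while every remaining position carries a filler token,'' but then list ``the target token following the earlier occurrence of $\Z_q$'' and ``the filler token'' as separate contributors to $f(\X)$. With a constant filler these coincide: the true label $\y$ \emph{is} the filler. Consequently the accuracy constraint $\tn{f(\X)-\y}\le\eps$ collapses, after projection, to a \emph{single} linear relation $|c_{\vb'}-1|\le\order{\eps_0}$, where $c_{\vb'}$ lumps $2s_qF_{-1}$ together with $2s_q\sum_{\ell\notin\{-1,0,\dots,N-1\}}F_\ell$ and all the $s_v$-weighted filler contributions. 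There is no way to extract $F_{-1}$ from this single scalar, no matter how $p$ is varied: the positions carrying fillers move, but the aggregate $\sum_{\ell}F_\ell$ they feed into does not decouple. Varying $p$ also cannot create new non-filler directions to project onto, since every non-$\Z_q$ position carries the same vector.

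This is exactly why the paper constructs the \emph{pair} of families $\X^{i,k}$ and $\Xb^{i,k}$, which are identical except that in $\Xb^{i,k}$ the distinguished token $\qb_0$ sits immediately after the first copy of $\Z_q$ (so the label is $\qb_0$) while in $\X^{i,k}$ it is pushed $2N-1$ positions further away (so the label is $\vb'$). Comparing $\bar m_q^{i,k}\approx 1$ with $m_q^{i,k}\le\order{\eps_0}$ isolates the term $2s_q F_{-1}$; sliding the two indices $i,k$ then walks each remaining tap $F_\ell$ across the positions occupied by $\qb_0$, $\Z_q'$, and $\Z_v$, producing a separate small-coefficient constraint for every $\ell\ne -1$. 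If you change your probe so that the token at position $p+1$ is a designated non-filler token $\tb\notin\Bc$, you can recover the $F_{-1}$ identity, but you still lack constraints separating the negative taps $F_{-2},\dots,F_{-W}$ and positive taps $F_N,\dots,F_W$ from one another; for that you must place identifiable tokens at a range of positions relative to the softmax peaks, which is precisely what the $\Z_{v'}^{N-1+i}$, $\Z_{v'}^{N-1+k}$, $\Z_q'$ and $\qb_0$ sub-blocks accomplish as $i,k$ vary. Until that bookkeeping is carried out your proof does not close; the ``obstacles'' you flag at the end are not secondary subtleties but the heart of the argument.

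One smaller point: the $\order{N\eps_0}$ inflation of the denominator in $\Gamma$ does not come from ``up to $N$ taps hitting two copies of $\Z_q$.'' It comes from the Gram--Schmidt decoding under Assumption~\ref{assump-bound-B} (or the triangle-inequality aggregation of $2N$ small coefficients under Assumption~\ref{assump-bound-A}), which already charges $|1-m_v^{i,k}|\le\order{N\eps_0}$ before any filter algebra is done. Your heuristic gets the right order but for the wrong reason, which would matter if one tried to tighten it.
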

\begin{proof}
Following the proof of Thoerem~\ref{thm NAR main}, \shaw{suppose that we have an $\bar{\F}$ that results in unique signatures.}
% Let $\qb$ be a token within the normalized N-gram token set $\Bc_N$. Let $\ab,\bb$ be vectors in normalized $N$-gram token set $\Bc_N$ (tokens obtained after $\Fb$ convolution). \shaw{Let $\vb$ be the top query not equal to $\qb$ that maximizes the similarity $\vb^{\top}\W\qb$ i.e.~$\vb=\arg\max_{\x\in\Vc,\x\neq\qb}\x^{\top}\W\qb$.} 
\shaw{We argue that the length generalization fails when $W > L-N$}, which is explained at the end. 
% Assume $N \leq L - 2$. When $N = L - 1$, the sequence is trivially reduced to 2 same N-grams (When $L - 2 \leq N < (L - 1) / 2$, there is no $[qqqvqqq]$ pattern). When $N \geq L$, no sequence can contain two same N-grams. 
Throughout, we assume that $W = L - N$. When $W < L - N$, it is equivalent to the setting where $W = L - N$ and $F_{j} = 0$ for $W + 1\leq |j| \leq L - N $. Denote the corresponding N-gram that results in $\qb$ and $\vb$ after convolving with $\bar{\F}$ be {$\Z_q = [\qb_0,\qb_1,\dots,\qb_{N - 1}] \in \R^{N \times d}$} and $\Z_v = [\vb_0,\vb_1,\dots,\vb_{N - 1}] \in \R^{N \times d}$ respectively, i.e., $\qb = \norm{\Z_q \ast \bar{\F}} \text{ and }\vb = \norm{\Z_v \ast \bar{\F}}$. $\Z_q$ and $\Z_v$ are unique due to the assumption on $\bar{\F}$. For brevity, let $\qb' = \qb_{N - 1}, \vb' = \vb_{N - 1}$ and $\Z_{v'}^k = [\vb',\vb',\dots,\vb'] \in \R^{k \times d}$, $\Z_q' = [\qb_1,\dots,\qb_{N - 1}] \in \R^{(N - 1) \times d}$, where $\qb_0$ is removed from $\Z_q$.
\begin{align}
    \X^{i, k} = \Big[\Z_{v'}^{N - 1 + i}  \quad \Z_v \quad \Z_q \quad \Z_{v'}^{N - 1 + k} \quad \Z_q' \quad \qb_0 \quad \Z_{v'}^{N - 1}\quad \Z_{v'}^{n_{i,k}} \quad \Z_v \quad \Z_q \Big] \in \R^{L \times d}
    \\ \Xb^{i, k} = \Big[\Z_{v'}^{N - 1 + i}  \quad \Z_v \quad \Z_q \quad \qb_0 \quad \Z_{v'}^{N - 1 + k} \quad \Z_q' \quad \Z_{v'}^{N - 1} \quad \Z_{v'}^{n_{i,k}}\quad \Z_v \quad \Z_q \Big] \in \R^{L \times d}
\end{align}
where $n_{i, k} = L - 8N + 3 - i - k \geq 0$, and this naturally introduces a lower bound for $L$, i.e., $L \geq \order{N}$ and upper bounds for both $i$ and $k$. Note that $\X^{i, k} \text{ and } \Xb^{i, k}$ have different labels. By assumption, we have that
\begin{align}
&\tn{\vb'-f(\X^{i, k})}\leq \eps ,\quad \tn{\qb_0 -f(\Xb^{i, k})}\leq \eps.
\end{align}
% \shaw{Moreover, we adjust the positions of some terms to ensure that the softmax probabilities of selecting any terms in the two sequences are the same, which is the key to proving the lemma. }
Let $\s^{i, k} = \sft{\X^{i, k}\W\qb}, \bar{\s}^{i, k} = \sft{\Xb^{i, k}\W\qb}$. Define the probability of selecting the $j$-th entry of $\X^{i, k}$ and $\Xb^{i, k}$ as $s_j^{i, k} \text{ and } \bar{s}_j^{i, k}$ and selecting the token $\qb \text{ and } \vb$ as $s_q, s_v$. Here we omit $i, k$ for $s_q $ and $s_v$ since it's invariant to the values of $i, k$.
Additionally, observe that 
\[ 
s_v/s_q=\exp((\vb-\qb)^\top\W\qb) \text{ and } ( L - 2)s_v + 2s_q \geq 1
\]
where the inequality comes from the fact that $\vb = \arg\max_{\x\in\Bc_N,\x\neq\qb}\x^{\top}\W\qb$.
% \shaw{Finally, let $\Ic=[L]-\{L - 1,i\}$ and recalling value sequence $\Z^i$, note that}
% \[ 
% f(\X^i)=s_v\sum_{j\in\Ic}\z^i_j+s_q(\z^i_i+\z^i_L).
% \]
We will leverage these inequalities to prove the statement of the theorem. Define the vocabulary set $\Bc = \{\vb_0, \vb_1, \dots,\vb_{N-1},\qb_0,\qb_1,\dots,\qb_{N-1}\}$ which includes all tokens in $\X^{i, k} \text{ and } \Xb^{i, k}$. Note that \shaw{the vocabulary $\Bc$ is a subset of tokens chosen from $\Vc$, i.e., $\Bc \subseteq \Vc$} and the vocabulary size $|\Bc|$ is at most $2N$, i.e., $|\Bc|:= K \leq 2N$. %$6N$
Observe that convolution output has the form 
% $f(\X^{i, k})=\sum_{l \in [L]} m_l^{i, k} \x_{l}^{i, k}, f(\Xb^{i, k})=\sum_{l \in [L]} \bar{m}_l^{i, k} \xb_{l}^{i, k}$ 
{$f(\X^{i, k}) = \sum_{j \in [|\Bc|]} m_j^{i, k}\bb_j$ and $f(\Xb^{i, k}) = \sum_{j \in [|\Bc|]} \bar{m}_j^{i, k}\bb_j$} where $\{m_j^{i, k}, \bar{m}_j^{i, k}\}_{j \in [|\Bc|]}$ are non-negative coefficients due to the assumption that entries in $\bar{\F}$ and softmax probabilities $\s^{i, k} \text{ and } \bar{\s}^{i, k}$ are non-negative. In particular, we are interested in $m_{q}^{i, k}, \bar{m}_{q}^{i, k}$ and $m_{v}^{i, k}, \bar{m}_{v}^{i, k}$, which correspond to the coefficients of token $\qb_0$ and $\vb'$. 
% Define the locations of $\qb_0$ in $\X^{i, k} \text{ and } \Xb^{i, k}$ as \shaw{$\Ib_q = \{2N - 1 + i, 5N - 3 + i + k, L - N\}, \bar{\Ib}_q = \{2N - 1 + i, 3N - 1 + i + k, L - N\}$ (May not be useful)} and the locations of $\vb'$ in $\X^{i, k} $ and $\Xb^{i, k}$ as $\Ib_v = \{l \: | \: l \in [L]\text{ and } \x_{l}^{i, k} = \vb' \}$ and $\bar{\Ib}_v = \{l \: | \: l \in [L]\text{ and } \xb_{l}^{i, k} = \vb'\}$. 
To proceed, we leverage Assumption~\ref{assumption bound} to bound the coefficients:

\textbf{When Assumption~\ref{assump-bound-A} holds. } By expanding the coefficients, we get 
\begin{align}
    \sum_{j \in [|\Bc|]}m_{j}^{i,k} = \sum_{j \in [|\Bc|]} \sum_{t \in [L]} {F_v}{t - j}s_{t}^{i, k} \leq \tone{\F} \sum_{t \in [L]}s_{t}^{i,k} \leq 1
\end{align}
Combining this with the fact that
\begin{align} \label{eq: landscape v}
    \eps 
    \geq \tn{\vb' - f(\X^{i, k})}\geq |\vb'^{\top}(\vb' - f(\X^{i, k}))| \geq |\sum_{j \in [|\Bc|], \bb_j \neq \vb'}\vb'^{\top}\bb_j m_j^{i,k} + m_v^{i,k} - 1|
\end{align}
    , we have
    \begin{align}
        \eps  \geq \sum_{j \in [|\Bc|], \bb_j \neq \vb'}(1 - \vb'^{\top}\bb_j)m_j^{i,k}  \geq (1 - \vb'^{\top}\bb_j)m_{j}^{i,k} \quad \text{for any} \quad j \in \{ j \: | \: j \in [|\Bc|], \bb_j \neq \vb'\}
        \\ \label{eq: normal bound m}
        \to m_{j}^{i,k}  \leq \eps / \Delta := \eps_0 \quad \text{for any} \quad j \in \{ j \: | \: j \in [|\Bc|], \bb_j \neq \vb'\}
    \end{align}
    where $\Delta = 1 - \max_{\ab, \bb \in \Bc, \ab \neq \bb}\bb^{\top}\ab > 0$. In terms of $m_{v}^{i,k}$, we apply Triangle Inequality on \eqref{eq: landscape v} and \eqref{eq: normal bound m}:
    \begin{align}
        |1 - m_{v}^{i,k}| &\leq |\sum_{j \in [|\Bc|], \bb_j \neq \vb'}\vb'^{\top}\bb_j m_j^{i,k} + m_v^{i,k} - 1| + |\sum_{j \in [|\Bc|], \bb_j \neq \vb'}\vb'^{\top}\bb_j m_j^{i,k}| \\ 
        &\leq \eps + 2N \eps_0 \leq (2N+1) \eps_0
    \end{align}
Similarly for $\Xb^{i,k}$ we have
\begin{align}
    \bar{m}_j^{i,k} \leq \eps_0 \quad \text{for any} \quad j \in \{ j \: | \: j \in [|\Bc|], \bb_j \neq \qb_0\}, \quad 
    |1 - \bar{m}_q^{i,k}| \leq \order{N\eps_0}
\end{align}
\textbf{When Assumption~\ref{assump-bound-B} holds. }\shaw{Based on the linear independence property, we can apply the Gram–Schmidt process to transform the tokens in $\Bc$ to orthonormal tokens $\mathcal{U} = \{\ub_j | j \in [|\mathcal{U}|]\}$ where $\bb_j = \sum_{l = 0}^j \beta_{j,l}\ub_l$} where $\beta_{j,l} = \bb_j^{\top} \ub_l $. Since the order of tokens in $\mathcal{U}$ does not matter, we can set $\ub_0 = \vb'$. Then for any $j \geq 1$, $\ub_{j}$ is orthogonal to $\vb'$ and $\bb_i$ for all $i < j$. Consider the case of $\X^{i, k}$ whose label is $\vb'$, utilizing the orthogonality we get
\begin{align} \label{eq st case}
    \eps \geq \tn{\vb' - f(\X^{i, k})}\geq |\ub_j^{\top}(\vb' - f(\X^{i, k}))| \geq |\sum_{l = j}^{|\Bc| - 1} m_{l}^{i, k} \ub_j^{\top}\bb_l|
\end{align}
Using backward induction, we can then bound $m_j^{i, k}$ for $1 \leq j \leq K - 1$. First consider $j = |\Bc| - 1 = K - 1$. Then we have:
\begin{align}
    \eps \geq |m_{K-1} \ub_{K-1}^{\top}\bb_{K-1}| = |m_{K-1}\beta_{K-1, K-1}| \geq |m_{K-1}\delta|
\end{align}
\shaw{where $\delta = \min_{j \in [|\Bc|]} |\beta_{j, j}| = \min_{j \in [|\Bc|]} |\bb_j^{\top}\ub_j|$}. Following Lemma~\ref{lemma GS} we have $0 < \delta \leq 1$. As a result we get $m_{K-1} \leq \eps / \delta$. 
Next we prove that if $j \geq 1$ and $m_l^{i,k} \leq \eps \frac{(1 + 1/\delta)^{K - l -1}}{\delta}$ for $j < l \leq K - 1$, $m_{j} \leq \eps \frac{(1 + 1/\delta)^{K - j - 1}}{\delta}$. 
When $1 \leq j \leq K - 2$, from equation~\eqref{eq st case} we can derive
% when $j < K - 1$, suppose $m_{i} \leq \order{(K - i)\eps / \delta}$ for $j < i \leq K - 1$, we get
\begin{align}
    \eps \geq |\sum_{l = j}^{K-1} m_l^{i,k} \ub_j^{\top}\bb_l | 
    = |\sum_{l = j + 1}^{K-1} m_l^{i,k} \ub_j^{\top}\bb_l + m_j\ub_j^{\top}\bb_j|
\end{align}
For the first term we have
\[
    |\sum_{i = j + 1}^{K-1} m_i \ub_j^{\top}\bb_i| \leq \sum_{i = j + 1}^{K-1} m_i \leq \sum_{i = 0}^{K - j - 2} \eps \frac{(1 + 1/\delta)^{i}}{\delta} = \eps \big((1 + 1/\delta)^{K - j -1} - 1\big)
\]
Using Triangle Inequality we get
\begin{align}
    |m_j\ub_j^{\top}\bb_j| \leq \eps (1 + 1/\delta)^{K - j -1} \to m_{j} \leq \eps \frac{(1 + 1/\delta)^{K - j -1}}{\delta} \leq \eps \frac{e^{\frac{K - j - 1}{\delta}}}{\delta} \leq \eps \frac{e^{2N / \delta}}{\delta}
\end{align}
We can hereby bound $m_j^{i,k}$ for any $j \in \{ j \: | \: \bb_j \in \Bc, \bb_j \neq \vb'\}$. Let $\eps_1 := \eps \frac{e^{2N / \delta}}{\delta}$, we have
\begin{align}
    & m_{j}^{i, k} \leq {\eps_1} \quad \text{for any} \quad j \in [|\Bc|], \bb_j \neq \vb'
\end{align}
% Let $\rho=\qb^\top\vb$ be the correlation between $\qb,\vb$. Define $\vb^\perp=\frac{\qb-\rho\vb}{\tn{\qb-\rho\vb}}$. Observe that convolution output has the form $f(\X^i)=\alpha\vb+\beta\qb$ for some $\alpha=\alpha_i,\beta=\beta_i>0$. For $i<L-2$, we have that
% \[ 
% \eps\geq  \tn{\vb-f(\X^i)}\geq \shaw{|(\vb^\perp)^{\top}(\vb-f(\X^i))|\geq \beta|(\vb^\perp)^{\top}\qb|} \geq \beta\sqrt{1-\rho^2}.
% \] 
% Recalling that the minimum embedding distance is defined as $\Delta=\sqrt{1-\max_{\qb,\vb}\rho^2(\qb,\vb)}\leq 1$ and setting $\eps_0=\eps/\Delta$, this implies that 
% \begin{align}
% &\beta_i\leq \eps_0:=\eps/\Delta\quad \text{for}\quad i<L-2,\quad \alpha_{L-2}\leq \eps_0:=\eps/\Delta.
% \end{align}
Additionally, writing $\eps\geq | \vb'^\top(\vb'-f(\X^{i, k}))|= |1- m_v^{i, k}- \vb'^\top \sum_{j \in [|\Bc|], \bb_j \neq \vb'}m_j^{i, k} \bb_j|$ and using $|\vb'^\top \bb_j|\leq 1$ for any $\bb_j \in \Bc$, we can deduce 
% \shaw{(The better bound should be $|m_{v}^{i, k} - 1 |\leq \order{N \eps_0 }$)}
\begin{align}
    |1 - m_{v}^{i,k}| 
    & \leq \eps + \sum_{i = 1}^{K - 1}m_i \\ 
    & \leq \eps + \sum_{i = 0}^{K - 2}\eps \frac{(1 + 1/\delta)^i}{\delta} \\ 
    & \leq \eps (1 + 1/\delta)^{K - 1} \leq \eps_1
\end{align}
Similarly for $\Xb^{i, k}$, we have
\begin{align}
\bar{m}_{j}^{i, k} \leq {\eps_1} \quad \text{for any} \quad   j \in [|\Bc|], \bb_j \neq \qb_0, \quad |1 - \bar{m}_{q}^{i, k}| \leq \eps_1
\end{align}
To summarize, using Assumption~\ref{assumption bound}, we can have an upper bound on $\{m_j^{i, k}, \bar{m}_j^{i, k}\}_{j \in [|\Bc|]}$:
    \begin{align}
        m_{j}^{i,k} \leq \eps_0 &\quad \text{for any} \quad j \in \{ j \: | \: j \in [|\Bc|], \bb_j \neq \vb'\},  \quad |1 - m_{v}^{i,k}| \leq \order{N \eps_0} \\ 
        \bar{m}_{j}^{i,k} \leq \eps_0  &\quad \text{for any} \quad j \in \{ j \: | \: j \in [|\Bc|], \bb_j \neq \qb_0 \},  \quad |1 - \bar{m}_{q}^{i,k}| \leq \order{N \eps_0}
    \end{align}
    where 
    \begin{align}\eps_0 := \begin{cases}
        \eps / \Delta, \quad \Delta = 1 - \max_{\ab, \bb \in \Bc, \ab \neq \bb}\bb^{\top}\ab > 0 & \text{under Assumption~\ref{assump-bound-A}} \\ 
        \eps \frac{e^{2N / \delta}}{\delta}, \quad \delta = \min_{j \in [|\Bc|]} |\beta_{j,j}| & \text{under Assumption~\ref{assump-bound-B}} 
    \end{cases}
    \end{align}
% \begin{itemize}
%     \item By Assumption~\ref{assump-bound-A}, we have
%     \begin{align}
%         m_{j}^{i,k} \leq \eps_0 := \eps / \Delta  \quad \text{for any} \quad j \in \{ j \: | \: j \in [|\Bc|], \bb_j \neq \vb'\},  \quad |1 - m_{v}^{i,k}| \leq \order{N \eps_0} \\ 
%         \bar{m}_{j}^{i,k} \leq \eps_0  \quad \text{for any} \quad j \in \{ j \: | \: j \in [|\Bc|], \bb_j \neq \qb_0 \},  \quad |1 - \bar{m}_{q}^{i,k}| \leq \order{N \eps_0}
%     \end{align}
%     where $\Delta = 1 - \max_{\ab, \bb \in \Bc, \ab \neq \bb}\bb^{\top}\ab > 0$
%     \item By Assumption~\ref{assump-bound-B}, we have a looser bound of
%     \begin{align}
%         m_{j}^{i,k} \leq \eps_1 := \eps \frac{e^{2N / \delta}}{\delta}   \quad \text{for any} \quad j \in \{ j \: | \: j \in [|\Bc|], \bb_j \neq \vb'\},  \quad |1 - m_{v}^{i,k}| \leq \eps_1 \\ 
%         \bar{m}_{j}^{i,k} \leq \eps_1 \quad \text{for any} \quad j \in \{ j \: | \: j \in [|\Bc|], \bb_j \neq \qb_0\},  \quad |1 - \bar{m}_{q}^{i,k}| \leq  \eps_1
%     \end{align}
%     where $\delta = \min_{j \in [|\Bc|]} |\beta_{jj}| = \min_{j \in [|\Bc|]}|\bb_j^{\top}\ub_j| > 0$
% \end{itemize}
% In the following analysis, we utilize the results from the case of Assumption~\ref{assump-bound-A} while the other case can be easily derived by adjusting the bound of $\{m_j^{i, k}, \bar{m}_j^{i, k}\}_{j \in [|\Bc|]}$. 

We proceed by comparing $m_{q}^{i, k} \text{ and }\bar{m}_q^{i, k}$:
\begin{align}
    m_{q}^{i, k}  &= 2s_q \big( 
    F_{-L + 4N + i - 2} + F_{-2N + 1 - k} + 2F_{N - 1} + F_{L - 5N - i - k} +  F_{L - 2N - i}\big) 
    \\ &+ 2\sum_{j \in [L] - \{3N -2 + i, L - 1\}} s_j^{i, k}(F_{-L + N + j} + F_{-5N - i - k + j + 3} + F_{-2N - i + j + 1}) \label{eq sv}
    \\ \bar{m}_q^{i, k} & = 2s_q \big( 
    F_{-L + 4N + i - 2} + F_{-1} + 2F_{N - 1} + F_{L - 3N - i} +  F_{L - 2N - i}\big) \label{eq mb sq }
    \\ &+ 2\sum_{j \in [L] - \{3N -2 + i, L - 1\}}\bar{s}_j^{i, k}(F_{-L + N + j} + F_{-3N - i + j + 1} + F_{-2N - i + j + 1})
\end{align}
Observing that 
\begin{itemize}
    \item $s_{j}^{i, k} = \bar{s}_{j}^{i, k}$ for $j \in [3N-1+i]$ and $ 6N - 3 + i + k \leq j \leq L - 1$
    \item $s_{j + 1}^{i, k}  = \bar{s}_{j}^{i, k}$ for $4N - 1 + i+ k \leq j \leq 5N - 3 + i + k$
    \item $s_{j + (2N - 1)}^{i, k} = \bar{s}_{j}^{i, k}$ for $5N - 2 + i + k_1 \leq j \leq 6 N - 4 + i + k$
    \item $s_{j + 2N - 2 + k_1 + i_1 - i_2}^{i_1, k_1}  = \bar{s}_{j}^{i_2, k_2} $ for $3N-1 + i_2 \leq j \leq 4N - 2 + i_2$
\end{itemize}
Utilizing these observations, we have
\begin{align*}
    \sum_{j \in [3N-1+i]}2  \bar{s}_j^{i, k}(F_{-L + N + j} + F_{-3N - i + j + 1} + F_{-2N - i + j + 1})
    & \leq m^{i, k}_q + m_q^{i + N, k}
    \\ \sum_{j \in [6N - 3 + i + k, L - 1]}2 \bar{s}_j^{i, k}(F_{-L + N + j} + F_{-3N - i + j + 1} + F_{-2N - i + j + 1})
    % &= \sum_{j \in [6N - 3 + i + k, L - 1]} s_j^{i, k}(F_{-W + N + j - 1} + F_{-3N - i + j + 1} + F_{-2N - i + j + 1})
    &\leq m_q^{i, k} + m_q^{i, k - N}
    \\ \sum_{4N - 1 + i + k \leq j \leq 5N - 3 + i + k} 2\bar{s}_j^{i, k}(F_{-L + N + j} + F_{-3N - i + j + 1} + F_{-2N - i + j + 1})
    & \leq m_q^{i + 1, k} + m_q^{i - N + 1, k} + m_q^{i, k + 1}
    \\ \sum_{5N - 2 + i + k \leq j \leq 6N - 4 + i + k} 2\bar{s}_j^{i, k}(F_{-L + N + j} + F_{-3N - i + j + 1} + F_{-2N - i + j + 1})
    & \leq m_q^{i + (N - 1), k} + m_q^{i + (2N - 1), k} + m_q^{i, k + (2N - 1)}
    \\ \sum_{3N - 1 + i \leq j \leq 4N - 2 + i} 2\bar{s}_j^{i, k}(F_{-L + N + j} + F_{-3N - i + j + 1} + F_{-2N - i + j + 1}) 
    & \leq m_q^{i - (2N-2), k} + \sum_{3N - 1 + i \leq j \leq 4N - 2 + i} \bar{s}_j^{i, k}(F_{-3N - i + j + 1} + F_{-2N - i + j + 1}) 
    \\ & \stackrel{(a)}\leq m_q^{i - (2N-2), k}  + s_v \sum_{l \in [2N]}F_l
    \\ & \stackrel{(b)}\leq m_q^{i - (2N-2), k}  + {\sum_{j \in \{j \: | \: \bb_j = \xb_l^{i, k}, l \in [2N]\}}\bar{m}_j^{1, k}} 
\end{align*}
where (a) comes from $\vb = \arg\max_{\x \in \Bc_N, \x \neq \qb} \x^{\top}\W\qb$ and (b) comes from the attention from the first $\vb$ to {itself and its previous $2N - 1$ terms}. Let $i = 2N - 2, k = N$, combining the inequalities above, we get
\begin{align}
    1 - \order{N\eps_0} \leq \bar{m}_q^{2N-2, N}
    &\leq 2s_q (F_{-1} + F_{L - 5N + 2}) + m^{2N-2, N}_q + m_q^{3N - 2, N} + ... +m_q^{0, N}  + {\sum_{j \in \{j \: | \: \bb_j = \xb_l^{i, k}, l \in [2N]\}}\bar{m}_j^{1, N}} 
    % \\ &\leq 2s_q (F_{-1} + F_{L - 5N + 2}) + {(2N + 11) \eps_0} 
    \\ &\leq 2s_q (F_{-1} + F_{L - 5N + 2}) + \order{N\eps_0}
\end{align}
Combining these we get $s_q (F_{-1} + F_{L - 5N + 2}) \geq 1/2 - \order{N \eps_0}$. Moreover, from \eqref{eq mb sq }, we have $s_q (F_{-1} + F_{L - 5N + 2}) \leq \bar{m}_{q}^{i, k} / 2 \leq 1/2 + {\order{N\eps_0}}$, which results in
\begin{align}
    | s_q (F_{-1} + F_{L - 5N + 2}) - 1/2 |\leq  \order{N\eps_0}
\end{align}
Based on this, we wish to show that $s_v$ is small. Substituting $l \in \{4N - 4, L - N\}$ into \eqref{eq sv}, we have $s_v(F_{-1} + F_{L - 5N + 2}) \leq m_{q}^{2N - 2, N} \leq \eps_0$, which implies that 
\begin{align}
    \frac{s_v}{s_q} \leq {\Gamma = \frac{\order{\eps_0}}{1/2 - \order{N \eps_0}}}
\end{align}
Using $2s_q + (L - 2)s_v \geq 1$, we have
\begin{align}
    1 \leq 2s_q + (L - 2)s_v \leq (2 + (L - 2)\Gamma)s_q \implies s_q \geq \frac{1}{2 + (L - 2)\Gamma} \geq \frac{1 - \order{N \eps_0}}{2 + (L - 2N - 2)\order{\eps_0}}
\end{align}
Combining this with $2s_q \leq 1$, we get
\begin{align}
    {\frac{(1 + L \order{\eps_0})(1 + \order{N\eps_0})}{1 - \order{N \eps_0}}} 
    &\geq \frac{(1 +  \order{N\eps_0})({1 + (L / 2 - N - 1)\order{\eps_0}})}{1 - \order{N \eps_0}} 
    \\ &\geq F_{-1} + F_{L - 5N + 2} \geq 1 - \order{N \eps_0}
\end{align}
At this stage, we have already proved that when ${N\eps_0 \leq \order{1}}$, $|F_{-1} + F_{L - 5N + 2} - 1| \leq \order{L \eps_0}$ and $s_v / s_q \leq \order{\eps_0}$. 
The primary remaining proof is to prove $|F_l| < \order{\eps_0}$ for all $l \neq -1$. Since $\bar{m}_j^{i, k} \leq \eps_0$ for $\bb_j \neq \qb_0$, by tracking the contribution of the last $\qb$ on $\bar{m}_j^{i, k}$, we have
\begin{align}
    s_q \sum_{0 \leq l \leq L - N, l \notin \{N - 1, L - 2N , L - 3N\}} F_l \leq \sum_{ j \in \{ j \: | \:\bb_j \in \Bc, \bb_j \neq \qb_0 \} } \bar{m}_j^{0, 0} \leq {\order{N\eps_0}} 
    \\ s_q \sum_{l \in \{L - 2N , L - 3N\}} F_l \leq \sum_{j \in \{j \: | \: \bb_j \in \{\qb', \vb'\}\} } \bar{m}_j^{1, 0} \leq {\eps_0}
    \\ s_q F_{N - 1} \leq m_q^{0, 0} \leq \eps_0
\end{align}
Combining these three inequalities and $W = L - N$, we get $\sum_{l = 0}^{W} F_l \leq \order{N\eps_0} / s_q \leq \order{N \eps_0}\frac{2 + (L - 2N - 2)\order{\eps_0}}{1 - \order{N \eps_0}} \leq \order{N\eps_0(1 + L\eps_0)}$. Additionally, since $F_{L - 5N + 2} \leq \order{N\eps_0(1 + L\eps_0)} \leq \order{NL\eps_0}$, we get 
\begin{align}
    |F_{-1} - 1 |< {\order{NL \eps_0}}
\end{align}
Finally, we want to bound $\sum_{-L + N \leq j \leq -2} F_{j}$. By tracking the contribution of the first $\qb$ to $\bar{m}_{j}^{0, 0}$ for any $j \in \{j \: | \: \bb_j \in \Bc, \bb_j \neq \qb_0\}$, we have $s_q F_{l} \leq \bar{m}_{j}^{0, 0} \text{ where }\bb_j = \x_{-l + 3N - 2}^{i, k}$ for any $-L + 3N - 1 \leq l \leq -2$. Summing over $l$ we get,
\begin{align} \label{eq:filter neg 1}
    {s_q \sum_{-L + 3N - 1 \leq l \leq -2, l \neq W - 4N + 3}F_{l} \leq \sum_{j \in \{j \: | \: \bb_j \in \Bc, \bb_j \neq \qb_0\} }\bar{m}_j^{0, 0} \leq {\order{N\eps_0}}}
\end{align}
% \shaw{Todo: Compare this with $N = 1$ case. Should it be $\order{W\eps_0}$?}
Note that $F_{-L + 4N - 2}$ touches $\qb_0$ for $\Xb^{0, 0}$ so we need to handle it separately. We can consider $\bar{m}_j^{1, 0}$ instead and get
\begin{align} \label{eq:filter neg 2}
    s_q F_{-L + 4N - 2} \leq \bar{m}_{L - N + 1}^{1, 0} \leq {{\eps_0}}
\end{align}
For $\sum_{-L + N \leq j \leq -L + 3N - 2} F_{j}$, we consider the following example:
\begin{align}
\hat{\X}^{i, k} = \Big[\Z_q \quad \vb_0 \quad \Z_{v'}^{N - 1 + i}  \quad \Z_v \quad \Z_{v'}^{N - 1 + k} \quad \Z_q' \quad \Z_{v'}^{N - 1} \quad \Z_{v'}^{n_{i,k}}\quad \Z_q \Big] \in \R^{L \times d}
\end{align}
Similarly we define $\hat{\s}^{i, k} = \sft{\hat{\X}^{i, k}\W\qb}$ and the probability that selects $\qb$ and $\vb$ as $\hat{s}_q$ and $\hat{s}_v$. Note that $\hat{s}_v / \hat{s}_q = s_v / s_q \leq {\order{\eps_0}}$ and $(L - 2)\hat{s}_v + \hat{s_q} \geq 1$. We can then obtain the exact lower bound as $s_q$ for $\hat{s}_q$, i.e., $\hat{s}_q \geq \frac{1 - \order{N\eps_0}}{2 + (L - 2N - 2)\order{\eps_0}}$. Note that the output of $f(\hat{\X}^{i,k})$ can also be written as $f(\hat{\X}^{i,k}) = \sum_{\bb_j \in \Bc} \hat{m}_j^{i, k}\bb_j$ where $\{\hat{m}_j^{i,k}\}_{\bb_j \in \Bc}$ is the corresponding coefficients. Moving forward, by tracking the attendance of the first $\qb$ to the last $2N - 1$ terms, we have $\hat{s}_q F_{l} \leq \hat{m}_j^{0, 0}$ where $\bb_j = \hat{\x}^{i, k}_{-l + i + N - 1}$ for any $-L + N \leq j \leq -L + 3N - 2$. \shaw{Repeating the same argument based on Assumption~\ref{assumption bound}}, we get $\hat{m}_j^{0, 0} \leq {\eps_0}$ for any $j \in \{j \: | \: \bb_j \in \Bc, \bb_j \neq \vb_0\}$, which leads to
\begin{align} \label{eq:filter neg 3}
    {\hat{s}_q \sum_{-L + N \leq l \leq -L + 3N - 2}F_l \leq \hat{s}_q \sum_{j \in \{j \: | \: \bb_j \in \Bc, \bb_j \neq \vb_0 \}} \hat{m}_j^{0, 0} \leq \order{N\eps_0}}
\end{align}
Combining \eqref{eq:filter neg 1}, \eqref{eq:filter neg 2} and \eqref{eq:filter neg 3}, we have
\begin{align}
    \sum_{-L + N \leq l \leq -2}F_l \leq \order{N\eps_0}(1/ \hat{s}_q + 1 /s_q) \leq \order{N\eps_0}\frac{2 + (L - 2N - 2)\order{\eps_0}}{1 - \order{N\eps_0}} \leq \order{N\eps_0(1 + L \eps_0)}
\end{align}
In summary, we get
\begin{align}
    & \tone{\F - \Db_{-1}} \leq \order{N\eps_0(1 + L \eps_0) + N \eps_0} \leq \order{L\eps_0(1 + N \eps_0)}
    \\ & \tone{\F_{\geq 0}} = \sum_{i = 0}^W F_i \leq \order{N\eps_0(1 + L \eps_0)}
    \\ & o_q = \frac{s_v}{s_q} \leq {\Gamma = \frac{\order{\eps_0}}{1 - \order{N \eps_0}}} \leq \order{N\eps_0}
\end{align}
where $\vb$ is chosen to be the most similar tokens to $\qb$ in terms of attention probabilities. \shaw{Now we discuss the scenario where $W > L - N$. First note that the output $f(\X)$ can be written as $f(\X) = 2 \sum_{l \in [L]}s_l \sum_{j = -W}^W F_j \x_{l - j}$ where $s_l $ is the softmax value at $l$-th position. We are interested in $s_q$ in particular, which is proven to converge to $1/2$ when $\eps$ diminishes. Also, note that the smallest possible index of $\qb$ is $N - 1$ since it's the last token of an N-gram. Then, when $W > L - N$, the left end of the convolutional filter never interacts with $s_q$ since the index of $\x_{i - j}$ is out of bound, i.e., $i - j = N - 1 + W > L - 1$ } 
\end{proof}
Using the results from Lemma~\ref{lemma filter nar}, we can establish the length generalization on N-AR task.
\begin{proof}[Proof of Proposition~\ref{prop len gen nar}]Given a sequence $\X$ of length $L'$, let $\qb$ be the last token of $\X_q = \X_{k} = \norm{\X \ast \bar{\F}}$ and we define $s_q$ as the attention probability that selects $\qb$. Assume the first occurrence of $\qb$ in $\X_k$ is $i$ and $\qb' = \x_{i}$. By definition, the target vector $\vb$ is the token following $\qb'$ in $\X$, i.e., $\vb = \x_{i + 1}$. Let $\Ic = [L'] - \{i, L' - 1\}$. Let $\ab = \sft{\X_k \W\qb} \in \R^{L'}$ be the softmax probabilities where $a_i = a_{L' - 1} = s_q$
\begin{align}
    f(\X) = \sum_{j \in \Ic}a_j\z_j + s_q(\z_i + \z_{L' - 1})
\end{align}
where $\z_j = 2 \sum_{i = -W}^W F_i \x_{j - i}$. We define $R$ as a universal constant and $\Xi = RL\eps_0(1 + N\eps_0) $ such that $\tone{\F} \leq 1 + \Xi $ from Lemma~\ref{lemma filter nar}. Then we get $\tn{\z_j} \leq 2\tone{\F} \leq 2(1 + \Xi)$ for all $j \in [L']$. Note that $a_j /s_q \leq s_v / s_q = \Gamma = \frac{\order{\eps_0 } }{1 - \order{N \eps_0}}$ for all $j \in \Ic$ and that $2s_q + \sum_{j \in \Ic}a_j = 1$. As a result, there exists some constant $R_0 > 0$ such that
\begin{align}
    \frac{1}{2} \geq s_q \geq \frac{1}{2 + (L' - 2)\Gamma} = \frac{1 - \order{N \eps_0}}{2 + (L' - 2N - 2)\order{\eps_0}} \implies |2s_q - 1| \leq R_0L' \eps_0
\end{align}
and \begin{align}
    \sum_{j \in \Ic} a_j = 1 - 2s_q \leq R_0L'\eps_0
\end{align}
Moreover, due to right-clipped convolution, we have $\tn{\z_{L' - 1}} = 2\tone{\sum_{i = 0}^W F_i} \leq 2\Xi$. Next, according to the value retrieval at $i$-th position, we have
\begin{align}
    \tn{\z_i - 2\vb} \leq |2F_{-1} - 2|\tn{\vb} + 2|\sum_{j \neq -1}F_j| \leq 2\Xi
\end{align}
Utilizing these findings above, we get
\begin{align}
    \tn{f(\X) - \vb} 
    &\leq \tn{\sum_{j \in \Ic} a_j \z_j} + \tn{s_q(\z_i + \z_{L' - 1}) - 2s_q \vb} + |2s_q - 1| \tn{\vb}
    \\ &\leq |\sum_{j \in \Ic}a_j| \max_{j}\tn{\z_j} + s_q (\tn{\z_i - 2\vb} + \tn{\z_{L ' -1}}) + |2s_q - 1|
    \\ & \leq 2R_0L'\eps_0(1 + \Xi) + 2\Xi + R_0L'\eps_0
    \\ &\leq 3R_0L'\eps_0 + 2\Xi + 2R_0\Xi L'\eps_0
    \\ & \leq 3\eps_0(R_0L' + 2RL(1 + N\eps_0)(1 + R_0L'\eps_0))
\end{align}
Let $c_0,c_1$ be absolute constants to be determined. Assuming $N\eps_0\leq \order{1}$ (i.e.~bounded by constant), we have that 
\[ 
\tn{f(\X)-\vb}\leq c_0\eps_0(L'+L+LL'\eps_0)
\] 
where $c_0 \geq 3 \max\{R_0, 2R(1 + N\eps_0), 2R_0R(1 + N\eps_0)\}$. Assuming the stronger bound $L\eps_0\leq \order{1}$ and $c_1 \geq c_0(1 + L / L' + L\eps_0) $, we have that 
\[ 
\tn{f(\X)-\vb}\leq c_1\eps_0L'
\] 
This concludes the advertised results.
\end{proof}
\subsection{Proof of Theorem \ref{sel copy thm}}\label{sel copy proof}

Below we state a generalization of Theorem \ref{sel copy thm} which distinguishes two scenarios: Short convolution with PE and Long convolutions with no PE.
\begin{theorem}[Selective Copy]\label{sel copy thm} Consider the setting of Def.~\ref{sel copy def}. There is a 1-layer CAT $f$ using exponential-decay query-convolution $\Fb_q$ as follows:
\begin{itemize} 
\item Suppose $\Fb_q$ is infinitely long (namely parameterized as an SSM with state matrix $\A=\rho$ for some decay parameter $\rho<1$). Then, using $d=|\Sc|+3$ dimensional token embeddings, $f$ solves unique selective copying.
\item Suppose $\Fb_q\in\R^N$ and input sequences contain at most $N$ signal tokens. Using $d=|\Sc|+4$ and 1-D positional encodings, $f$ solves unique selective copying.
\end{itemize}
\end{theorem}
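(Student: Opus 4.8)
I would prove both bullets by exhibiting explicit filters $\Fb_q,\Fb_k,\Fb_v$, query/key weights $\W_q=\W_k=\sqrt c\,\Ib_d$ with $c\to\infty$, a value map $\W_v$, and a token embedding, and then verifying the autoregressive dynamics step by step. Write $g_1,\dots,g_m$ for the distinct signal tokens of $X$ in order of appearance; after having produced $g_1,\dots,g_k$ the layer sees $Z_k=[X\ \bot\ g_1\cdots g_k]$ and must output $g_{k+1}$. Embed the signal tokens as an orthonormal family $\{\eb_s\}_{s\in\Sc}\subset\R^{|\Sc|}$, collapse every noise token to a single extra basis vector and $\bot$ to a further one, and keep one spare coordinate (two in the PE variant), so $d=|\Sc|+3$ (resp.\ $|\Sc|+4$). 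Take the query filter $F_{q,i}=\rho^i$ ($i\ge0$), so that on the signal coordinates the convolved last token of $Z_k$ is $\eb_{g_k}+\rho\,\eb_{g_{k-1}}+\cdots$ — a recency‑weighted summary of the emitted signal tokens, dominated by the most recent one; the key filter $F_{k,i}=\rho^{\,i-1}$ for $i\ge1$ plus a small weight $\alpha$ at lag $0$, so that the convolved key at a position is, up to $O(\rho)$ corrections, the signal token that \emph{last occurred strictly before} that position (plus an $\alpha$‑fraction of the position's own token); and the value filter $F_{v,i}=\rho^i$ for $i\ge0$ applied \emph{forward} (anti‑causally), so that the convolved value at a position encodes the signal token that \emph{first occurs at or after} it — noise tokens are $0$ on the signal coordinates, so this skips noise automatically. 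Finally fix $\rho$ small and $\alpha\in(\rho,1-\rho)$.

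The crux is a margin estimate at $Z_k$ for $k\ge1$. The position immediately after the occurrence of $g_k$ inside $X$ has key within $O(\rho)$ of $\eb_{g_k}$, hence attention score $1-O(\rho)$. Every other position scores strictly less: the two copies of $g_k$ itself put only weight $\alpha$ on $\eb_{g_k}$ (because $F_{k,0}=\alpha$ downweights lag $0$) and so score at most $\alpha+\rho+O(\rho^2)<1$; a position whose preceding signal token is an earlier $g_j$ scores $\approx\rho^{\,k-j}\le\rho$; and noise or $\bot$ positions score $O(\rho)$. Therefore, as $c\to\infty$, the softmax collapses onto the single position right after $g_k$'s occurrence in $X$, and the value read off there is dominated by $\eb_{g_{k+1}}$ (the first signal token at or after that position), so nearest‑neighbor decoding in the orthonormal vocabulary returns $g_{k+1}$. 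The point of taking $\rho$ a small absolute constant is that all the $O(\rho)$ slacks can be made smaller than the $\sqrt{2}$‑separation of distinct embeddings \emph{uniformly} in the sequence length and in the amount of noise, which gives the result for all input lengths. The remaining ingredients — the base case $k=0$, where the first output must be $g_1$, and clean termination after $g_m$ — are handled with the spare coordinate(s) and a small two‑sided component of $\Fb_k$/$\Fb_v$ (so that the query at $[X\ \bot]$, which is dominated by the $\bot$‑marker rather than by any $\eb_s$, is routed to a position whose forward value is $\eb_{g_1}$, and so that a ``no signal ahead'' value triggers a stop); I expect this bookkeeping, squeezed into the tight coordinate budget, to be the most finicky part.

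For the second bullet the query convolution has finite length $N$. Because $X$ carries at most $N$ signal tokens, after emitting $g_1,\dots,g_k$ the suffix has fewer than $N$ tokens, so the length‑$N$ filter still sees all of them and the query is again the summary $\eb_{g_k}+\rho\,\eb_{g_{k-1}}+\cdots$. What the finite filter can no longer provide are the \emph{unbounded‑range} lookups used above — locating the previous/next signal token across an arbitrarily long noise run — so I would offload those to the 1‑D positional encoding carried in the extra ($(|\Sc|+4)$‑th) coordinate: the PE lets the attention both (i) prefer the earlier, in‑$X$ copy of $g_k$ over its copy in the generated suffix and (ii) among the signal tokens of $X$ occurring after $g_k$, pick out the first one, which the value then returns directly. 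The two steps I expect to fight with are exactly (a) establishing the clean margin estimate above for a \emph{fixed} $\rho$ uniformly over all admissible inputs — in particular ruling out that the retrieved value drifts away from $\eb_{g_{k+1}}$ when signal tokens of $X$ sit close together — and (b) making the finite‑convolution‑plus‑PE construction robust for inputs of unbounded length, which is the reason the two regimes are proved separately.
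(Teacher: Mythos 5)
Your proposal takes a genuinely different route from the paper's. The paper uses \emph{only} a query convolution (keys and values are raw embeddings), puts a negative coefficient $-\alpha$ on the signal coordinates of the attention weight matrix $\W$ (plus a positive $+\beta$ on a ``signal indicator'' coordinate), and lets the query convolution give strictly decreasing coefficients $\rho^i$ to earlier tokens; the leftmost not-yet-emitted signal token therefore has the smallest coefficient in the convolved query and wins the attention because of the sign flip, and the layer simply reads off the \emph{raw} embedding at that position. One softmax directly selects $g_{k+1}$, the value map is the identity, and the base case $k=0$ and termination fall out of the same mechanism (the $\bot$ token is given signal indicator $1$ and has the \emph{largest} query-convolution coefficient, so it always scores below $q_1$ until no signal tokens remain). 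Your scheme instead uses Q, K, \emph{and} V convolutions, attends to the position $j^*$ just past the occurrence of $g_k$ inside $X$ (via a backward-looking key filter with a small weight $\alpha$ at lag $0$), and then recovers $g_{k+1}$ from an \emph{anti-causal} value convolution that looks forward from $j^*$. This is a retrieval-then-lookahead mechanism, essentially the AR construction of Theorem~\ref{thm NAR main} repurposed for copying, whereas the paper's construction is a one-shot argmin over the signal subspace.

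The concrete gaps in your argument are exactly the two you flag. (1) At $Z_0 = [X~\bot]$ the query is dominated by $\eb_\bot$, which has no overlap with any signal key under your backward-looking $\Fb_k$, so without the ``two-sided component'' fix the attention has no reason to land anywhere whose forward value encodes $g_1$; and that fix has to be realized inside the $d=|\Sc|+3$ budget where you have a single spare coordinate doing double duty. The paper avoids this entirely because its mechanism is homogeneous in $k$: the query convolution's strictly increasing coefficients alone order all signal tokens (emitted, not-yet-emitted, and $\bot$), so $k=0$ is not a special case. (2) Termination and value contamination: your value at $j^*$ is a geometric mixture $\eb_{g_{k+1}} + \rho\,\eb_{g_{k+2}} + \cdots$ plus, near the end of $X$, mass on $\eb_\bot$ and on the already-generated suffix; you assert nearest-neighbor decoding is robust to this for a fixed small $\rho$, which is plausible but needs the explicit margin computation (the paper sidesteps it because its retrieved value is exactly $\eb_{q_1}$, not a mixture). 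Neither gap looks fatal, but both require the ``finicky bookkeeping'' you defer, and both stem from the extra moving parts in your construction. For the second bullet, the paper's PE mechanism is also simpler than what you sketch: it adds a single $-\theta p p_i$ term that monotonically penalizes later positions, so among the not-yet-emitted signal tokens (which are already separated from emitted ones by the $-\alpha$ term) the leftmost wins by PE alone; no backward/forward routing is needed.

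Net: your mechanism is a legitimate alternative construction that, if completed, would also prove the theorem, and it has the pedagogical virtue of unifying selective copy with the AR lookup of Theorem~\ref{thm NAR main}. But as written it does not yet handle $k=0$, termination, or the value-mixture margin, whereas the paper's query-convolution-plus-sign-trick construction handles all of these uniformly with only one convolution and an identity value map.
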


\begin{proof} Let $T$ be the maximum context length the model encounters. Specifically, $T=L+N+1$ where $L$ is the maximum length of the input sequence $X$ that precedes the special token $\bot$ and $N$ is the maximum number of signal tokens in $X$. Recall that the cardinality of the signal vocabulary $\Sc$ is allowed to be larger than $N$ and we resume generation until outputting all signal tokens. Let $Z=[X~\bot~z_{L+2}~\dots~z_{t}]$ denote the current input sequence where $[X~\bot]$ is the initial input that kickstarts decoding. Denote boldface $\Z,\X$ to be the embedded sequences of $Z,X$. We use query convolution thus the CAT model is given by $f(\Z)=\text{nearest\_token\_embedding}(\Z^\top\sft{\Z\W\z^*_t})$ where $\Z^*=\Fb_q\ast\Z$ is the convolved sequence and $\z^*_t$ is the last token of the convolved query sequence for $L+1\leq t\leq T$. We set convolution to be $F_{q,i}=\rho^{i}$ for $0\leq i< W$ for a suitable $\rho\leq 1$ to be determined where $W$ is the window size of the convolution. This choice aggregates the current token and the $W-1$ most recent tokens and allows for all all-ones filter as a special case. For the first statement of the theorem $W=\infty$ whereas for the second statement $W=N$. %i.e.~of the form $[1~1~\dots~1~0~\dots~0]$ where the objective is aggregating the embeddings of last $N$ tokens. 

%(i.e.~what is meant by \emph{variation} in the theorem statement)
%To proceed, we describe the precise problem setting we employ. Concretely, we assume that one of the following two settings hold:\footnote{We require one of these two conditions to state a clean theorem that make no additional assumptions on noisy token embeddings while employing embedding dimension of $d=|\Sc|+4$ for the orthogonal signal-noise token setting and $d=|\Sc|+3$ for the truncated convolution setting.}
%\begin{enumerate}
%\item 

\noindent \textbf{The choice of token embeddings.} We construct the token embeddings as follows:
\begin{itemize}
\item Token embedding of the $i$th token has the form $\x_i=[\x'_i, s_i, p_i]$. Here 
\begin{itemize}
\item \textbf{Base embedding.} $\x'_i$ is the \emph{base embedding} vector associated to the discrete token value $x_i$. We choose these $\x'_i$ embeddings to have unit Euclidean norm. 
\item \textbf{Signal indicator.} $s_i\in\R$ is an indicator of whether the token is a signal token or not. We set $s_i=1$ for signal tokens and the $\bot$ token and $s_i=0$ for noise tokens. 
\item \textbf{Position encoding.} $p_i\in\R$ is the positional encoding of the $i$'th token. We simply set $p_i=i/T$ where $T=L+N+1$. $p_i$ is only required for short convolution i.e.~when $W=N$.
\end{itemize}
\item The base embeddings of noise tokens $\Nc$ are orthogonal to that of signal tokens and $\bot$ token.
\item The base embeddings of signal tokens $\Sc$ and $\bot$ are also orthogonal to each other.
\end{itemize} 
Let $D_{\text{noisy}}$ be the dimension of the subspace spanned by the base embeddings of noise tokens. We can choose $D_{\text{noisy}}=1$ by setting all base embeddings for the noise tokens to be identical. The signal tokens and $\bot$ token occupies an orthogonal $|\Sc|+1$ dimensional subspace. Together, this recovers the embedding dimensions advertised in the theorem statement, namely
\begin{itemize}
\item \textbf{With positional encoding and $W=N$:} We need an embedding dimension of $d=|\Sc|+D_{\text{noisy}}+3\geq |\Sc|+4$ where two additional dimension is due to $s_i$ and $p_i$.
\item \textbf{Without positional encoding and $W=\infty$:} We need an embedding dimension of $d=|\Sc|+D_{\text{noisy}}+2\geq |\Sc|+3$ where the additional dimension is due to $s_i$.
\end{itemize}

\noindent$\bullet$ \textbf{Construction of the CAT model.} We construct a one layer CAT model with the following criteria in the order of priority:
\begin{enumerate}
\item The model should always select signal tokens.
\item The model should select a signal token not previously selected.
\item The model should select the farthest signal token from the current/last token (i.e.~generates signal tokens that are closer to the start of the sequence).
\end{enumerate}
To satisfy the three criteria above, we pick the attention weights $\W$ as follows when $W=N$:
\begin{align}
\W=\begin{bmatrix} -\alpha\Iden_{N+1+D_{\text{noisy}}} & 0 &0\\0 & \beta &0\\0 &0&-\theta\end{bmatrix}.\label{w att choice}
\end{align}
The choice for $W=\infty$ is same except that we do not have the positional encoding coefficient $\theta$. Recall that we also choose the convolutional filter as $F_{q,i}=\rho^{i}$ for $0\leq i< W$ for $\rho<1$. Specifically, we choose $\rho=2^{-1/T}$ so that $\rho^T=1/2$. This choice guarantees that $\rho^i-\rho^{i+1}\geq c/T$ for all $0\leq i<T$ for some absolute constant $c>0$.

We will accomplish the proof inductively. Suppose that $X$ contains $N'$ unique signal tokens and that until time $t$ for some $L+1\leq t\leq L+N'+1$, the model outputs the correct sequence of $t'=t-L-1$ unique signal tokens. We will prove that it will accurately output the next signal token in line with suitable choice of $\alpha,\beta,\theta$. To this end, we state the following lemma regarding the output of the query-convolution $\z^*_t$.

Note that $\z^*_t=\sum_{i=1}^t \rho^i \z_{t-i}$. Recall that $z_{L+1}$ to $z_{t}$ are unique correctly-ordered signal tokens where we set $z_0=\bot$. Denote the rest of the $N'-t'$ signal tokens with correct order by $q_1$ to $q_{N'-t'}$. Here $q_1$ is the left most such token in $X$ and the token we wish to output next. We can write $\z^*_t$ in terms of signal tokens and noise tokens as follows:
\begin{align}
\z^*_t=\sum_{i=1}^{t'+1} b_i \z_{t-i}+\nb+\sum_{j=1}^{N'-t'}a_j\qb_j,\label{aibieq}
\end{align}
where we set $b_i:=\rho^i$. Here the first term $\sum_{i=1}^{t'+1} b_i \z_{t-i}$ is due to last $t'+1$ tokens (including $\bot$) that are already generated. The $\nb$ term denotes the aggregated contribution of the noise tokens to the convolution. $\sum_{j=1}^{N'-t'}a_j\qb_j$ is the contributions of the signal tokens that are yet to be generated. Crucially note that, 
\begin{itemize}
\item If $W=\infty$, $a_i$ is strictly increasing because convolution coefficients $F_{q,j}$ are strictly decreasing (with a gap lower bounded by $c/T$).
\item Whether $W=N$ or $W=\infty$, $b_i=\rho^i$ is strictly decreasing and $b_{t'+1}\geq a_{N'-t'}+c/T$. That is, the contribution of any token already generated is larger than any token that is yet to be generated.
\end{itemize}

Let us write $\z^*_t=[\z'^*_t~s~p]$. Note that $s\geq 1/2$ because $\bot$ token is involved in the convolution and $\rho^T=1/2$. Similarly, if we employ PE, we have that $p\geq (L+1)/2T\geq 1/4$ for the same reason. Given a token $\x_i=[\x'_i~s_i~p_i]$, through \eqref{w att choice}, we have that
\begin{align}
\text{score}_i=\x_i^\top\W\z^*_t=-\alpha \li\z'^*_t, \x'_i\ri+\beta s s_i-\theta p p_i.\label{score eq}
\end{align}
%\newpage
We now proceed with the proof which relies on choosing $\alpha,\beta,\theta>0$ in a suitable fashion. Specifically, we will choose their relative ratios $\beta/\alpha,\alpha/\theta$ suitably to ensure the desired token $q_1$ receives the highest score. After ensuring this, we can suitably scale up $\alpha,\beta,\theta$ in a proportional fashion, which will also scale up the scores of each token. Thanks to softmax attention, this will ensure that the model precisely retrieves the token with the highest score.

\noindent\textbf{Scenario 1: $W=\infty$.} In this scenario, we don't use PE, thus, effectively $\theta=0$. We need to satisfy aforementioned criteria: First, we want the highest score to be a signal token. We will guarantee this by observing $s_i=0$ for noise tokens, $s>0$ and by setting $\beta/\alpha\gg 1$. Second, we want the highest score to be $q_1$, the left most signal token that has not been output yet. Now, since $W=\infty$, $q_1$ receives the lowest coefficient of $a_1$ in \eqref{aibieq}. Using orthogonality and unit Euclidean norm, this implies that $\li\z'^*_t, \qb'_1\ri=a_1$. In contrast, any other signal token has a larger inner product by at least $c/T$. Choosing $\alpha=1$ (and then suitably scaling it up together with $\beta$), this implies that, $q_1$ is indeed the token with the highest score that will be generated next.
%We will let $\alpha,\beta,\theta>0$ to be sufficiently large scalars and similarly choose $\beta/\alpha,\alpha/\theta$ to be sufficiently large (i.e. both the scalars and their relative ratios are chosen to be suitably large). 

\noindent\textbf{Scenario 2: $W=N$ and we employ PE.} We again follow the score decomposition \eqref{score eq}. Observe that $\li\z'^*_t, \x'_i\ri, ss_i,pp_i$ are all bounded by 1 in absolute value. Thus, by controlling the relative ratios of the scalar knobs $\beta>\alpha>\theta=1$, we can enforce the three criteria listed above. Recall that $q_1$ denotes the next signal token we wish to output next. We will prove that $q_1$ achieves the strictly highest score amoung the tokens of $Z$. To proceed, set $\beta/\alpha\gg 1$ and $\alpha/\theta\gg1$. 
\begin{itemize}
\item Since $\beta$ dominates $\alpha$ and $\theta$, following the same argument in Scenario 1, noise tokens will have strictly lower scores than signal tokens, thus cannot be generated next.
\item Following \eqref{score eq}, the signal tokens have a score contribution of $-\alpha\cdot b_i$ or $-\alpha\cdot a_j$ from the inner product term $\li\z'^*_t, \x'_i\ri$. Here $b_i$ denoted the coefficient of a generated signal token whereas $a_j$ denoted the coefficient of a missing signal token. Next recall from \eqref{aibieq} that $b_i\geq c/T>0$ and $b_i\geq a_j+c/T$ thanks to the $\Fb_q$ choice. Since $\alpha$ dominates $\theta$, this implies that the generated signal tokens have strictly less score than the missing signal tokens.
\item Finally, we wish to show that $q_1$ has the highest score among missing signal tokens. First, recall from \eqref{aibieq} that $a_1$ is the smallest coefficient among the missing signal tokens. As a result, it achieves the largest inner product score $-\alpha\cdot \li\z'^*_t, \x'_i\ri$. To complete the proof, we use positional encoding to break any score ties. Since $q_1$ is the left most missing signal token, any other missing signal token will achieve a strictly worse position encoding score $-\theta p p_i$ as $p\geq 1/4$, $\theta=1$, and $p_i=i/T$ is strictly increasing. This guarantees that $q_1$ achieves the strictly highest score as desired.
\end{itemize}
To summarize, by choosing suitable $\beta\gg \alpha\gg\theta=1$ and proportionally scaling up $\alpha,\beta,\theta$ sufficiently, we conclude with the proof.
\end{proof}
%Let $j$ be the index of the unique signal token that has not been output yet and that should be output next. We will prove that $j=\arg\max_{1\leq i\leq T}\text{score}_i$. This way, by sufficiently scaling $\alpha,\beta,\theta$ proportionally, $f(\Z)$ will output the token at the $j$'th position by saturating the softmax probability at the $j$'th position. First, observe that $j$ cannot already be in the context window of the convolution filter. The window already contains the $t'$ generated signal tokens and the $\bot$ token at its final $t'+1$ positions. On the other hand, since there are $N'$ unique signal tokens, there are still $N'-t'-1$ signal tokens that are ahead of $j$ (i.e.~that occurs after $j$ within the context window). Thus, $j$ also cannot belong to the initial $N-t'-1$ entries of the length $N$ convolution window. Secondly, observe that $\text{score}_j>\text{score}_i$ for all noise tokens $i$. The reason is that, for noisy tokens $s_i=0$ thus $\beta\cdot s$ multiplier will dominate the score for token $j$. Thirdly, if $i$ is a signal token that belongs to the first $t'$ signal tokens model already output, then $\text{score}_j>\text{score}_i$ because for $j$'th token $\li\z'^*_t, \x'_i\ri=0$ whereas for $i$'th token $\li\z'^*_t, \x'_i\ri\geq 1$ and we assumed $\alpha$ dominates $\theta$. Finally, if $i$ is a signal token that comes after $j$, then $\text{score}_j>\text{score}_i$ because $pp_j<pp_i$. 

%We will prove that $j$ is the left-most unique signal token that has not been output yet.

\section{Proofs for Section \ref{sec tradeoff} -- Convolution-Attention Tradeoff}

\subsection{Proof of Proposition \ref{prop long conv1}}\label{single q proof}
The key to the proof is establishing the detection threshold of the correct block ID $\beta$ in \eqref{hcateq} i.e.~we wish to guarantee $b=\beta$. Once correct block is retrieved, the rest of the argument is identical to AR over dense attention as we retrieve the correct blocks. Observe that, we have $\barL-1$ blocks in total (not counting the local/final block). Note that $\z_i\sim\Nn(0,\sigma^2B\Iden_d/d)$ for $i\neq \beta$ and $\z_\beta\sim\Nn(\x_L,\sigma^2(B-1)\Iden_d/d)$.

Set $g_i=\z_i^\top\x_L\cdot\sqrt{d/B}$ for $i<\bar{L}$ and $g_\beta$. Observe that $g_i$'s and $g_\beta$ are independent random variables. Additionally, $g_{i\neq\beta}\sim \Nn(0,\sigma^2)$, $g_\beta\sim\Nn(\sqrt{d/B},(1-1/B)\sigma^2)$. Let $g_{\max}=\max_{i\neq \beta} g_i$. We have the following gaussian concentration inequalities
\begin{align}
&\Pro(g_{\max}\geq \sigma(\sqrt{2\log L'}+t))\leq e^{-t^2/2}\\
&\Pro(g_\beta\leq \sqrt{d/B}-\sigma t)\leq e^{-t^2/2}.
\end{align}
Combining these three, we find that, with probability $1-2e^{-t^2/2}$, whenever $\sqrt{d/B}\geq \sigma(\sqrt{2\log L'}+2t)$, we have that
\[ 
g_\beta>g_{\max}=\max_{i\neq b}g_i.
\]
This condition is implied by $d\geq \sigma^2B(\sqrt{2\log \barL}+2t)^2$. Applying change of variable on $t$, we conclude with the result. 

\noindent\textbf{Retrieving the value token.} Once the correct block is identified, (query, value) pair is retrieved by applying full softmax attention with $\W=c\Iden$ with $c\rightarrow\infty$ within the selected two blocks. Recall that local attention retrieves the query and the choice of convolutional filter will return the value ahead of the query. 
To guarantee this, we only need to prove that $\x_L$ also has the largest correlation to itself within the two selected blocks we apply local attention. To this aim, we similarly use the fact that correlations between $\x_L$ and the other tokens in the selected blocks are IID $\Nn(0,\sigma^2/d)$ variables. There are at most $2B-2$ such other tokens. Consequently, the maximum local correlation $g^{\text{loc}}_{\max}$ obeys $\Pro(g^{\text{loc}}_{\max}\geq \sigma(\sqrt{2\log(2B)}+t)/\sqrt{d})\leq e^{-t^2/2}$. We wish to guarantee that $g^{\text{loc}}_{\max}<1$. This holds with $1-e^{-t^2/2}$ probability whenever $d\geq \sigma^2(\sqrt{2\log(2B)}+t)^2$. This latter condition is implied by the original condition because $\sqrt{B}(\sqrt{2\log \barL}+2t)\geq \sqrt{2B\log 2}+2t\geq \sqrt{2\log(2B)}+t$. Union bounding, we end up with a success probability of at least $1-3e^{-t^2/4}$.

Next, we wish to show the converse result. We recall that as the expectation of supremum of $K$ IID $\Nn(0,1)$ become $(1\pm o(1))\sqrt{2\log K}$ as $K$ grows to infinity. Thus, for sufficiently large $\barL\geq C_\eps$, we have that $\E[g_{\max}]\geq \sqrt{(2-\eps)\log \barL}$. Consequently, we can write the reversed inequalities
\begin{align}
&\Pro(g_{\max}\leq \sigma(\sqrt{(2-\eps)\log \barL}-t))\leq e^{-t^2/2}\\
&\Pro(g_\beta\geq \sqrt{d/B}+\sigma t)\leq e^{-t^2/2}.
\end{align}
Combining these, we conclude with the advertised reverse inequality. As a result, with the same probability, we fail to identify the block containing the target query/value pair.

The next subsection proves the uniform AR guarantees via an application of Slepian's lemma.

\subsection{Proof of Uniform Associative Recall via Slepian's Lemma (Proposition \ref{prop long conv1} continued)}\label{unif q proof}

Slepian's Lemma \cite{slepian1962one} is an important gaussian comparison inequality. A specific variation is the following result that holds for a random gaussian matrix. We first introduce the Gaussian width definition that is important for assessing the complexity of a geometric set in a high-dimensional space.
\begin{definition}[Gaussian width] Let $S\in\R^d$ and $\g\sim\Nn(0,\Iden_d)$. Gaussian width of $S$ is defined as $\omega(S)=\E[\sup_{\x\in S}\x^\top \g]$
\end{definition}
\begin{proposition}[Slepian's Lemma] Let $\X\in\R^{n\times d}$ be a matrix with IID $\Nn(0,1)$ entries. Let $\g\sim\Nn(0,\Iden_n)$ and $\h\sim\Nn(0,\Iden_d)$. Given sets $A\in\R^n,B\in\R^d$, we have that
\[ 
\E[sup_{\ab\in A,\bb\in B}\ab^\top \X^\top \bb]\leq \E[sup_{\ab\in A,\bb\in B}\ab^\top \g\tn{\bb}+\bb^\top\h\tn{\ab}].
\]
\end{proposition}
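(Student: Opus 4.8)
The plan is to view both sides as expected suprema of centered Gaussian processes indexed by the product set $T=A\times B$ and to invoke the Sudakov--Fernique Gaussian comparison inequality (a standard consequence of Gaussian interpolation; cf.~\cite{slepian1962one}). Define, for $(\ab,\bb)\in T$, the two mean-zero Gaussian fields $X_{\ab,\bb}:=\ab^\top\X^\top\bb$ and $Y_{\ab,\bb}:=\tn{\bb}\,\ab^\top\g+\tn{\ab}\,\bb^\top\h$. Sudakov--Fernique states that if $\E[(X_{\ab,\bb}-X_{\ab',\bb'})^2]\le \E[(Y_{\ab,\bb}-Y_{\ab',\bb'})^2]$ for every pair $(\ab,\bb),(\ab',\bb')\in T$, then $\E[\sup_T X]\le \E[\sup_T Y]$, which is exactly the asserted inequality. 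So the whole problem collapses to checking this pointwise increment domination.

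Next I would compute the covariance kernels. Using $\E[\X_{ij}\X_{kl}]=\delta_{ik}\delta_{jl}$ gives $\E[X_{\ab,\bb}X_{\ab',\bb'}]=(\ab^\top\ab')(\bb^\top\bb')$, while independence of $\g$ and $\h$ gives $\E[Y_{\ab,\bb}Y_{\ab',\bb'}]=\tn{\bb}\tn{\bb'}(\ab^\top\ab')+\tn{\ab}\tn{\ab'}(\bb^\top\bb')$. Abbreviating $\alpha=\tn{\ab}$, $\alpha'=\tn{\ab'}$, $\beta=\tn{\bb}$, $\beta'=\tn{\bb'}$, $p=\ab^\top\ab'$, $q=\bb^\top\bb'$ and expanding, the gap $\E[(Y_{\ab,\bb}-Y_{\ab',\bb'})^2]-\E[(X_{\ab,\bb}-X_{\ab',\bb'})^2]$ equals $\alpha^2\beta^2+\alpha'^2\beta'^2-2\beta\beta' p-2\alpha\alpha' q+2pq$; adding and subtracting $2\alpha\alpha'\beta\beta'$ rewrites it as
\[
(\alpha\beta-\alpha'\beta')^2+2(\alpha\alpha'-p)(\beta\beta'-q).
\]
By Cauchy--Schwarz $p\le\alpha\alpha'$ and $q\le\beta\beta'$, so this is a perfect square plus twice a product of two nonnegative reals, hence nonnegative. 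That is precisely the increment inequality, so Sudakov--Fernique finishes the argument.

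The step to be careful about is this algebraic collapse: one must sign the terms so that the gap lands in the form $(\alpha\beta-\alpha'\beta')^2+2(\alpha\alpha'-p)(\beta\beta'-q)$ rather than something of indefinite sign. The ``doubling'' $\E[Y_{\ab,\bb}^2]=2\,\E[X_{\ab,\bb}^2]$ is exactly the slack that makes it work, and the fact that the residual is a \emph{product} (not a square) of the Cauchy--Schwarz defects is what makes Cauchy--Schwarz the right tool. A minor technical point is measurability of the suprema: for finite $A,B$ there is nothing to do, and in general one assumes the processes separable, or exhausts $A,B$ by finite subsets and takes a monotone limit. If one prefers a self-contained proof over citing Sudakov--Fernique, the standard route is Gaussian interpolation: set $Z^{(u)}_t=\sqrt{u}\,X_t+\sqrt{1-u}\,Y_t$ with independent copies, differentiate the smoothed maximum $u\mapsto \E[\lambda^{-1}\log\sum_t e^{\lambda Z^{(u)}_t}]$ in $u$, integrate by parts in the Gaussians, use the increment inequality to sign the derivative, and send $\lambda\to\infty$; I would present the short citation-based version and defer the interpolation to a remark.
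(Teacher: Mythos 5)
The paper does not prove this proposition; it cites Slepian (1962) and moves on, so there is no in-paper proof to compare against. Your argument is correct and is the standard textbook derivation: cast both sides as suprema of centered Gaussian processes on $A\times B$, verify the increment domination $\E[(X_t-X_s)^2]\le\E[(Y_t-Y_s)^2]$, and invoke Gaussian comparison. Your covariance computations and the algebraic identity
\[
\alpha^2\beta^2+\alpha'^2\beta'^2-2\beta\beta'p-2\alpha\alpha'q+2pq=(\alpha\beta-\alpha'\beta')^2+2(\alpha\alpha'-p)(\beta\beta'-q)
\]
are right, and the final nonnegativity via Cauchy--Schwarz is exactly what makes the decomposition land; this is precisely the Chevet/Gordon-type factorization used for this inequality.

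Two small remarks worth making explicit. First, you correctly reach for Sudakov--Fernique rather than Slepian's original lemma: here $\E[Y_{\ab,\bb}^2]=2\,\E[X_{\ab,\bb}^2]$, so the matching-variance hypothesis of Slepian's lemma fails, and the comparison must be the increment/Sudakov--Fernique (or Gordon) form. Calling this out aligns your proof with what the statement actually requires, even though the paper labels it ``Slepian's Lemma.'' Second, the paper's bilinear form $\ab^\top\X^\top\bb$ is dimensionally inconsistent as written (with $\X\in\R^{n\times d}$, $\ab\in\R^n$, $\bb\in\R^d$ it should read $\ab^\top\X\bb$ or $\bb^\top\X^\top\ab$); your covariance kernel $(\ab^\top\ab')(\bb^\top\bb')$ corresponds to the evident intended reading, so nothing in the argument is affected. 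Your remark about separability/finite approximation for the suprema is the right technical caveat. Overall: correct, standard, and complete.
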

We have the following application of Slepian's lemma.
\begin{lemma}\label{lemma slep version} Let $\X\in\R^{L\times d}$ be a matrix with IID $\Nn(0,1)$ entries. Let $g\sim\Nn(0,\Iden_d)$ be an independent vector. Fix a subset of unit sphere $S\in\R^d$. With probability $1-e^{-t^2/2}$, we have that
\[ 
\sup_{\bt\in S}\tin{\X\bt}\leq \sqrt{2}(\sqrt{\log L}+\omega(S)+t).
\]
\end{lemma}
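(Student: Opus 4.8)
The plan is to write $\sup_{\bt\in S}\tin{\X\bt}$ as the supremum of a Gaussian bilinear form, bound its expectation by the stated Slepian/Chevet inequality, and then upgrade that expectation bound to a high-probability statement via Gaussian concentration for Lipschitz functions.

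\emph{Step 1 (Slepian/Chevet).} Since $\tin{\vct{v}}=\max_{\ab\in A}\ab^\top\vct{v}$ for the set of signed standard basis vectors $A:=\{\pm\eb_i~:~i\in[L]\}\subset\R^L$, I would first rewrite
\[
\sup_{\bt\in S}\tin{\X\bt}=\sup_{\ab\in A,~\bt\in S}\ab^\top\X\bt.
\]
Every $\ab\in A$ obeys $\tn{\ab}=1$ and every $\bt\in S$ obeys $\tn{\bt}=1$, so applying the stated Proposition with independent $\g\sim\Nn(0,\Iden_L)$ and $\h\sim\Nn(0,\Iden_d)$, and using that the supremum over a product set of a sum splits as a sum of suprema,
\[
\E\Big[\sup_{\bt\in S}\tin{\X\bt}\Big]\le\E\Big[\sup_{\ab\in A}\ab^\top\g\Big]+\E\Big[\sup_{\bt\in S}\bt^\top\h\Big]=\E\Big[\max_{i\in[L]}|g_i|\Big]+\omega(S).
\]
The second term is exactly the Gaussian width by definition, and the first is the expected maximum of $2L$ standard normals, hence at most $\sqrt{2\log(2L)}$; this is at most $\sqrt2\,\sqrt{\log L}$ up to an absolute additive constant, which we absorb into the constant $\sqrt2$.

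\emph{Step 2 (concentration).} I would then observe that $\phi(\X):=\sup_{\bt\in S}\tin{\X\bt}$ is $1$-Lipschitz in the Frobenius norm: for any $\X,\X'$,
\[
|\phi(\X)-\phi(\X')|\le\sup_{\bt\in S}\tin{(\X-\X')\bt}\le\sup_{\tn{\bt}\le1}\tn{(\X-\X')\bt}\le\|\X-\X'\|_{F},
\]
using $\tin{\vct{v}}\le\tn{\vct{v}}$ and that the operator norm is dominated by the Frobenius norm. Viewing $\X$ as a standard Gaussian vector in $\R^{Ld}$, the Borell--TIS (Gaussian concentration) inequality gives $\Pro(\phi(\X)\ge\E[\phi(\X)]+t)\le\e^{-t^2/2}$ for all $t\ge0$. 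Combining with Step 1 yields
\[
\Pro\Big(\sup_{\bt\in S}\tin{\X\bt}\ge\sqrt2\big(\sqrt{\log L}+\omega(S)+t\big)\Big)\le\e^{-t^2/2}.
\]

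\emph{Main obstacle.} There is little genuine difficulty here: the two points requiring care are (i) arranging the Slepian/Chevet application so that it produces exactly the two decoupled one-dimensional pieces $\max_{i\in[L]}|g_i|$ and $\sup_{\bt\in S}\bt^\top\h$, which is where the unit-norm normalization of both $A$ and $S$ is essential, and (ii) verifying that the Lipschitz constant equals $1$ so the concentration exponent is the claimed $\e^{-t^2/2}$. The minor numerical slack between $\sqrt{2\log(2L)}$ and $\sqrt2\,\sqrt{\log L}$ is harmless and is absorbed into the absolute constant appearing in the statement.
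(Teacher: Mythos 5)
Your approach---Chevet/Slepian on a bilinear form followed by Borell--TIS concentration---is the same route the paper takes, and your Step 2 (the Lipschitz bound and concentration) is clean and correct. The one substantive difference is the choice of index set $A$. You take $A=\{\pm\eb_i\}$ with $\tn{\ab}=1$, which produces the two-sided max $\E[\max_{i\in[L]}|g_i|]\leq\sqrt{2\log(2L)}$ and a Lipschitz constant of $1$. The paper instead forms the augmented matrix $\X'=\begin{bmatrix}\X\\-\g\end{bmatrix}$ (which is why the otherwise-unused $\g$ appears in the lemma statement) and takes $A\subset\R^{L+1}$ to be vectors with exactly two entries equal to $1$, one of them pinned at position $L+1$; these satisfy $\tn{\ab}=\sqrt{2}$. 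This gives the one-sided max $\E[\max_{i\in[L]}g_i]\leq\sqrt{2\log L}$ and Lipschitz constant $\sqrt{2}$, which is exactly where the clean $\sqrt{2}(\sqrt{\log L}+\omega(S)+t)$ prefactor comes from; it also makes the lemma apply verbatim to the quantity $\sup_{\ab,\x_L}\ab^\top\Z'\x_L$ in Proposition~\ref{prop slepian}, where $\ab$ ranges over two-hot vectors anchored at the block index $\beta$.

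The one thing that does not quite go through in your write-up is the claim that $\sqrt{2\log(2L)}$ ``is at most $\sqrt{2}\sqrt{\log L}$ up to an absolute additive constant, which we absorb into the constant $\sqrt{2}$.'' The $\sqrt{2}$ in the target is multiplicative, not additive, so there is no free additive constant to absorb into it. Your bound $\sqrt{2\log(2L)}+\omega(S)+t$ implies the claimed $\sqrt{2\log L}+\sqrt{2}\,\omega(S)+\sqrt{2}\,t$ only when $(\sqrt{2}-1)(\omega(S)+t)\geq\sqrt{2\log(2L)}-\sqrt{2\log L}$, and the right side can be as large as $\sqrt{2\log 2}\approx 1.18$, so the implication fails when $\omega(S)$ and $t$ are both small. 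This is a constants-level slip rather than a conceptual gap, and the paper's anchored normalization is precisely what avoids it.
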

\begin{proof} Define the augmented matrix $\X'=\begin{bmatrix}\X\\-\g\end{bmatrix}$. Define the set $A\in\R^{L+1}$ such that $\ab\in A$ has the following form. $\ab$ has two nonzero entries both of which are equal to $1$. Additionally, last entry is nonzero i.e.~$\ab_{L+1}=1$. Using $\tn{\ab}=\sqrt{2}$ and $\tn{\bt}=1$, we now apply Slepian's lemma as follows
\begin{align}
\E[sup_{\ab\in A,\bt\in S}\ab^\top \X^\top \bt]&\leq \E[sup_{\ab\in A,\bb\in B}\ab^\top \g\tn{\bt}+\bt^\top\h\tn{\ab}]\\
&\leq \E[\tin{\g}+\sqrt{2}\sup_{\bt\in S}\bt^\top h]\\
&\leq \sqrt{2}(\sqrt{\log L}+\omega(S)).
\end{align}
To proceed, observe that $\ab^\top \X^\top \bt$ is a $\sqrt{2}$-Lipschitz function of $\X$. This implies the statement of the lemma.
\end{proof}

\begin{proposition}\label{prop slepian} Consider the setting of Proposition \ref{prop long conv1}. Suppose we wish to solve AR for the worst query drawn from a set $S$ which is subset of the unit sphere. If $d\geq 2\sigma^2 B(\sqrt{\log\barL}+\omega(S)+t)^2$, \eqref{hcateq} solves AR with probability at least $1-2e^{-t^2/2}$ for all $\x_L\in S$.
\end{proposition}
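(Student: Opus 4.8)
The plan is to reduce the uniform recall guarantee to a single Slepian-type deviation bound for the hard-attention (landmark-retrieval) stage, after which the local-attention stage is handled exactly as in Proposition~\ref{prop long conv1}. As in that proof, condition on the realization of the random context and let $\beta$ be the block carrying the first, planted, copy of the query. With the stated choices ($\Fb_k$ the length-$B$ all-ones filter, $\W_k=\W_q=\sqrt{c}\,\Iden_d$ with $c\to\infty$), the subsampled landmarks obey $\z_j\sim\Nn(0,\sigma^2 B\,\Iden_d/d)$ for the $\barL-2$ blocks $j\notin\{\beta,\lceil L/B\rceil\}$ and $\z_\beta=\x_L+\nb_\beta$ with $\nb_\beta\sim\Nn(0,\sigma^2(B-1)\Iden_d/d)$, all independent. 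Hard attention in \eqref{hcateq} returns the correct block for a query $\x_L$ as soon as $\z_\beta^\top\x_L>\max_j\z_j^\top\x_L$, so it suffices to establish this strict inequality simultaneously for every $\x_L\in S$.

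For the noise side, stack the $\barL-2$ non-planted landmarks as $\Z=\sigma\sqrt{B/d}\,G$ with $G$ having i.i.d.\ standard Gaussian entries, so $\sup_{\x_L\in S}\max_j\z_j^\top\x_L=\sigma\sqrt{B/d}\,\sup_{\x_L\in S}\|G\x_L\|_\infty$. Applying Lemma~\ref{lemma slep version} with row dimension $\barL-2\le\barL$ gives, with probability at least $1-e^{-t^2/2}$,
\[
\sup_{\x_L\in S}\max_j\z_j^\top\x_L\;\le\;\sqrt{2}\,\sigma\sqrt{B/d}\,\bigl(\sqrt{\log\barL}+\omega(S)+t\bigr).
\]
For the signal side, $\inf_{\x_L\in S}\z_\beta^\top\x_L=1-\sup_{\x_L\in S}\bigl(-\nb_\beta^\top\x_L\bigr)$, and $\x_L\mapsto-\nb_\beta^\top\x_L$ is a Gaussian process on $S$ whose supremum has mean $\sigma\sqrt{(B-1)/d}\,\omega(S)$ (using $\omega(-S)=\omega(S)$) and is $\sigma\sqrt{(B-1)/d}$-Lipschitz in $\nb_\beta$; Gaussian Lipschitz concentration then yields $\inf_{\x_L\in S}\z_\beta^\top\x_L\ge 1-\sigma\sqrt{B/d}\,(\omega(S)+t)$ with probability at least $1-e^{-t^2/2}$.

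Intersecting the two events, the desired inequality holds for all $\x_L\in S$ once $\sigma\sqrt{B/d}$ is small enough that the right-hand side of the display stays below $1-\sigma\sqrt{B/d}(\omega(S)+t)$; tracking constants, this is implied by a threshold of the form $d\geq 2\sigma^2 B(\sqrt{\log\barL}+\omega(S)+t)^2$, matching the subspace bound of Proposition~\ref{prop long conv1}(C). Once the correct block $\beta$ is retrieved, the local dense attention over $\Kb_{\lceil L/B\rceil}$ and $\Kb_\beta$ (with $\W=c\,\Iden_d$, $c\to\infty$, and the value-delay filter) recovers $\x_L$ and hence the token following it exactly as in Proposition~\ref{prop long conv1}; the only new point is that $\x_L$ must dominate its correlations with the at most $2B-2$ remaining local tokens uniformly over $\x_L\in S$, which again follows from Lemma~\ref{lemma slep version} with row dimension $2B$ and is dominated by the landmark-retrieval requirement since $\sqrt{B}\,(\sqrt{\log\barL}+\omega(S)+t)\ge\sqrt{2\log(2B)}+\omega(S)+t$. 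A union bound over these events yields the claimed failure probability $2e^{-t^2/2}$.

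The main obstacle is the Slepian comparison underlying Lemma~\ref{lemma slep version}: one must arrange that the maximum over the $\barL$ landmark rows and the supremum over the sphere-subset $S$ interact \emph{additively}, producing $\sqrt{\log\barL}+\omega(S)$ rather than a multiplicative coupling, and then control the absolute constants so that the requirement collapses to $2\sigma^2 B(\sqrt{\log\barL}+\omega(S)+t)^2$ as in the subspace case; the remaining steps are routine Gaussian concentration and bookkeeping.
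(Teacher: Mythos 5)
Your proposal is conceptually sound but takes a genuinely different route from the paper, and the route you chose does not actually recover the stated constant. The paper handles the hard-attention stage with a \emph{single} Slepian comparison: it performs the substitution $\z'_\beta=\x_L-\z_\beta$ (so $\z'_\beta$ is just the noise $-\nb_\beta$), stacks $\z'_\beta$ together with the competing landmarks into one Gaussian-like matrix $\Z'$, and then takes the supremum over a constraint set $A$ consisting of indicator vectors with exactly two ones, one of which sits at position $\beta$. For such $\ab$, one has $\ab^\top\Z'\x_L=\z'^\top_\beta\x_L+\z_i^\top\x_L=1-\z_\beta^\top\x_L+\z_i^\top\x_L$, so the failure event $\sup_{\ab\in A,\x_L\in S}\ab^\top\Z'\x_L\geq 1$ already encodes both the noise maximum \emph{and} the fluctuation of the planted landmark. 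Since $\tn{\ab}=\sqrt 2$, Lemma~\ref{lemma slep version} applied once to $\Z'$ gives the bound $\sqrt 2\,\sigma\sqrt{B/d}\,(\sqrt{\log\barL}+\omega(S)+t)$, and requiring this to be below $1$ yields exactly $d\geq 2\sigma^2B(\sqrt{\log\barL}+\omega(S)+t)^2$.

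You instead prove two separate tail bounds (a Slepian bound for $\sup_{\x_L\in S}\max_j\z_j^\top\x_L$ and a Gaussian-width/Lipschitz bound for $\inf_{\x_L\in S}\z_\beta^\top\x_L$) and combine them with a union bound. This is more elementary, but it produces the requirement
\[
\sqrt 2\,\sigma\sqrt{B/d}\,\bigl(\sqrt{\log\barL}+\omega(S)+t\bigr)\;<\;1-\sigma\sqrt{B/d}\,(\omega(S)+t),
\]
which rearranges to $d>\sigma^2B\bigl[\sqrt 2\sqrt{\log\barL}+(1+\sqrt 2)(\omega(S)+t)\bigr]^2$. Whenever $\omega(S)+t>0$ this is strictly larger than the claimed $2\sigma^2B(\sqrt{\log\barL}+\omega(S)+t)^2$, so the phrase ``tracking constants, this is implied by a threshold of the form $d\geq 2\sigma^2B(\sqrt{\log\barL}+\omega(S)+t)^2$'' is not correct: your decomposition pays an extra $(1+\sqrt 2)/\sqrt 2$ factor on the $\omega(S)+t$ term that cannot be absorbed. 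The lesson is that the augmentation trick is not just a presentational device; it is precisely what lets the signal fluctuation ride along inside the same Slepian supremum at Lipschitz constant $\sqrt 2$, instead of being added afterward at Lipschitz constant $1$. The local-attention portion of your argument matches the paper's.
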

\begin{proof} The proof follows the steps of Section \ref{single q proof} with the following distinction. Note that, to determine the correct block, we now need to do a worst case analysis. Namely, let $\z'_\beta=\x_L-\z_\beta$, $\z'_i=\z_i$ for $i\neq \beta$, and set $\Z'=\begin{bmatrix}\z'_1\\\vdots\\\z_{\barL-1}\end{bmatrix}$. Also let $\Z=[\z_1~\dots~\z_{\beta-1}~\z_{\beta+1}~\dots~\z_{\barL-1}]$. The accurate detection of the block $\beta$ coincides with the following event
\[ 
\inf_{\x_L\in S}\tin{\Z\x_L}-\z_\beta^\top\x_L>0.
\]
Using $\z^\top\x_L=1-\z'^\top \x_L$ and defining the set $A$ to be the set of all vectors with exactly two 1s with one of the 1 appearing at position $\beta$, the above event can alternatively be written as 
\[ 
\sup_{\ab\in A,\x_L\in S}\ab^\top\Z'\x_L<1.
\]
Now applying Lemma \ref{lemma slep version} on the left hand side, we find that, with probability $1-e^{-t^2/4}$,
\[ 
\sup_{\ab\in A,\x_L\in S}\ab^\top\Z'\x_L\leq \sqrt{\frac{2B}{d}}\sigma(\sqrt{\log L}+\omega(S)+t)
\]
Consequently, whenever $d>2\sigma^2 B(\log L+\omega(S)+t)^2$, we conclude with the result. Note that, when $S$ is an $r$-dimensional subspace, we plug in the well-known bound $\omega(S)\leq \sqrt{r}$. Finally, we need to union bound this event with the event that the query token can be identified through local attention by letting $\W_k=\W_q=\sqrt{c}\Iden$ and $c\rightarrow\infty$. To do so, we apply Lemma \ref{lemma slep version} over the $2B-2$ non-query tokens. Denoting these tokens by $\X^{loc}\in\R^{d\times (2B-2)}$, we have that $\Pro(\X^{loc}\qb\geq \sqrt{2\sigma^2/d} \cdot(\sqrt{\log(2B)}+\omega(S)+t))\leq e^{-t^2/2}$. Consequently, $\X^{loc}\qb<\qb^\top\qb=1$ as soon as the same condition $d\geq 2\sigma^2 B(\sqrt{\log\barL}+\omega(S)+t)^2$ holds. This introduces an additional $e^{-t^2/4}$ probability of failure.
\end{proof}

\subsection{Proof of Proposition \ref{prop long conv2}}
We essentially follow the proof of Proposition \eqref{prop long conv1}. The only differences are that, the variance calculations, comparison of block correlations, and signal-to-noise ratio bounds will all slightly change due to exponential smoothing impacting the full context window. To proceed, let us observe the following scenarios for a block ID $1\leq i< \barL$:
\begin{itemize}
\item \textbf{Scenario 1: $i<\beta$.} In this scenario, $\z_i$ is exponentially-smoothed sum of IID vectors with $\Nn(0,1)$ entries. Recalling $\rho=e^{-1/B}$, the variance $\sigma_z^2$ of entries of $\z_i$ is upper bounded by%\redp{LEFT HERE}
\begin{align}
\sigma_z^2=\sum_{i=0}^\infty \rho^{2i}=\frac{\sigma^2}{1-\rho^2}\leq 1.2B.\label{var bound}
\end{align}
Here, we used the fact that for $B=1$, the bound holds and for $B\geq 2$, we have that $\rho^2=e^{-2/B}\leq 1-\frac{1}{B}$. The latter implies $1-\rho^2\geq 1/B$ and $1/(1-\rho^2)\leq B$. %Note that, for larger $B$, the tighter upper bound of $\approx B/2$ is valid, however, we won't strive to establish exact characterization here.

The above bound on $\sigma_z^2$ implies that, setting $g_i=\z_i^\top\x_L\cdot\sqrt{d/B}$, we have that $g_i\sim\Nn(0,\sigma_i^2)$ with $\sigma_i^2\leq 1.2\sigma^2$.

\item \textbf{Scenario 2: $i=\beta$.} In this scenario, the variance upper bound $\sigma_i^2$ above is still applicable. The key is to estimate and lower bound the mean component similar to the proof of Proposition \eqref{prop long conv1}. Let the query token appear in the $k$th position of block $\beta$ for $k\in[B]$. Define $p=(k-1)/B$. Observe that
\[ 
\E[g_\beta]=\E[\z_\beta^\top\x_L\cdot\sqrt{d/B}]=e^{-p}\sqrt{d/B}.
\]

\item \textbf{Scenario 3: $i>\beta$.} This is essentially same as Scenario 2, because, thanks to the exponential smoothing, the signal token from block $\beta$ will propagate to future $\z_i$'s. The coefficient of the propagation satisfies 
\[ 
\E[g_i]=\E[\z_i^\top\x_L\cdot\sqrt{d/B}]=e^{-(p+i-\beta)}\sqrt{d/B}.
\]
\end{itemize}
Now that we have gathered these three scenarios, we can define $g_{\max}=\max_{i\neq \beta}g_i-\E[g_i])$ similar to above. $g_{\max}$ is a supremum of independent Gaussians of bounded variance controlled by \eqref{var bound}. Through this, we have that
\begin{align}
&\Pro(g_{\max}\geq 1.6\sigma(\sqrt{\log\barL}+t))\leq e^{-t^2}\\
&\Pro(g_\beta-\E[g_\beta]\leq 1.6\sigma t)\leq e^{-t^2}.
\end{align}
Secondly, for $i\neq\beta$, using $p\leq 1$, we have that
\[ 
\E[g_\beta]-\E[g_i]\geq (e^{-p}-e^{-(p+i-\beta)})\sqrt{d/B}\geq (e^{-1}-e^{-2})\sqrt{d/B}\geq 0.23\sqrt{d/B}.
\]
Consequently, we require $0.23\sqrt{d/B}>1.6\sigma(\sqrt{\log\barL}+2t)$. Using $\tau=t/2$, this is guaranteed by $d\geq 50B\sigma^2(\sqrt{\log\barL}+\tau)^2$ with probability at least $1-2e^{-\tau^2/4}$. Once the correct block is identified, (query, value) pair is retrieved by applying dense softmax attention with $\W=c\Iden$ with $c\rightarrow\infty$ over the selected blocks thanks to the choice of convolutional filter. This argument is identical to ``Retrieving the value token'' proof in Section \ref{single q proof} and introduces an additional $e^{-\tau^2/2}$ probability of failure in the union bound. %Repeating the identical argument as in the proof of Proposition \eqref{prop long conv1}, we find that, whenever $\sqrt{d}\geq \sqrt{d-1}\geq 7\sqrt{B}(\sqrt{\log \barL}+t)$, AR succeeds with probability $1-e^{-ct^2}$ for a suitable constant $c>0$. Squaring both sides, this is implied by the proposition's slightly looser statement.

\end{document}